\providecommand*{\boxast}{%
  \mathbin{
    \mathpalette\@boxit{*}%
  }%
}
\newcommand*{\@boxit}[2]{%
  \sbox0{$\m@th#1\Box$}%
  \ifx#1\displaystyle \ht0=\dimexpr\ht0+.05ex\relax \fi
  \ifx#1\textstyle \ht0=\dimexpr\ht0+.05ex\relax \fi
  \ifx#1\scriptstyle \ht0=\dimexpr\ht0+.04ex\relax \fi
  \ifx#1\scriptscriptstyle \ht0=\dimexpr\ht0+.065ex\relax \fi
  \sbox2{$#1\vcenter{}$}
  \rlap{%
    \hbox to \wd0{%
      \hfill
      \raisebox{%
        \dimexpr.5\dimexpr\ht0+\dp0\relax-\ht2\relax
      }{$\m@th#1#2$}%
      \hfill
    }%
  }%
  \Box
}
\def\BState{\State\hskip-\ALG@thistlm}
\newcommand{\tsn}[1]{{\left\vert\kern-0.25ex\left\vert\kern-0.25ex\left\vert #1 
    \right\vert\kern-0.25ex\right\vert\kern-0.25ex\right\vert}}
\definecolor{darkred}{RGB}{150,0,0}
\definecolor{darkgreen}{RGB}{0,150,0}
\definecolor{darkblue}{RGB}{0,0,200}
\newtheorem{theorem}{Theorem}[section]
\newtheorem{lemma}[theorem]{Lemma}
\newcommand{\frs}{f_{\text{res}}}
\newcommand{\crs}{c_{\text{res}}}
\newcommand{\cph}{c_{\phi}}
\newcommand{\bp}{\beta}
\newcommand{\bn}{\alpha}
\newcommand{\beq}{\begin{equation}}
\newcommand{\eeq}{\end{equation}}
\newcommand{\Dc}{{\cal{D}}}
\newcommand{\tn}[1]{\|{#1}\|_{2}}
\newcommand{\bteta}{\boldsymbol{\theta}}
\newcommand{\w}{\vct{w}}
\newcommand{\opnorm}[1]{\left\|#1\right\|}
\newcommand{\fronorm}[1]{\left\|#1\right\|_{F}}
\newcommand{\twonorm}[1]{\left\|#1\right\|_{2}}
\newcommand{\infnorm}[1]{\left\|#1\right\|_{\infty}}
\newcommand{\abs}[1]{\left|#1\right|}
\newcommand{\x}{\vct{x}}
\newcommand{\y}{\vct{y}}
\newcommand{\W}{\mtx{W}}
\definecolor{emmanuel}{RGB}{255,127,0}
\newcommand{\R}{\mathbb{R}}
\newcommand{\E}{\operatorname{\mathbb{E}}}
\newcommand{\vct}[1]{\bm{#1}}
\newcommand{\mtx}[1]{\bm{#1}}
\newcommand{\X}{{\mtx{X}}}
\numberwithin{equation}{section} 
\def \endprf{\hfill {\vrule height6pt width6pt depth0pt}\medskip}
\newenvironment{proof}{\noindent {\bf Proof} }{\endprf\par}
\newcommand\remove[1]{}
\title{Stronger Convergence Results for Deep Residual Networks: Network Width Scales Linearly with Training Data Size}
\author{Talha Cihad Gulcu \thanks{{ASELSAN Inc. e-mail:tcgulcu@gmail.com}}}
\begin{document}
\maketitle
\begin{abstract}
Deep neural networks are highly expressive machine learning models with the ability to interpolate arbitrary datasets. Deep nets are typically optimized via first-order methods and the optimization process crucially depends on the characteristics of the network as well as the dataset. This work sheds light on the relation between the network size and the properties of the dataset with an emphasis on deep residual networks (ResNets). Our contribution is that if the network Jacobian is full rank, gradient descent for the quadratic loss and smooth activation converges to the global minima even if the network width $m$ of the ResNet scales linearly with the sample size $n$, and independently from the network depth. 
To the best of our knowledge, this is the first work which provides
a theoretical guarantee for the convergence of neural networks in the $m=\Omega(n)$ regime. 

\remove{
Second, we numerically demonstrate that Jacobian is approximately low-rank for CIFAR10 and MNIST. We develop an early-stopping based convergence theory where the network interpolates the data over a subspace where learning is faster. This results in a tradeoff between the network size and the training bias: To interpolate better, one needs larger models, however, small networks can still fit well if the Jacobian representation of the dataset lies on this fast subspace. Finally, we show that depth does not hurt the performance with proper initialization unlike earlier works.
}

\end{abstract}

\section{Introduction}

Deep neural networks have gained remarkable success over a large variety of applications, including computer vision\cite{he2016deep}, natural language processing\cite{young2018recent}, speech recognition
\cite{hinton2012deep} and Go games \cite{silver2016mastering}. But the reason why deep networks perform well over various tasks is still not exactly understood. The optimization performance of deep networks is one of the subjects which requires an involved theoretical study, given that
gradient descent can achieve zero training loss even for random labels
\cite{zhang2016understanding}, and the loss of deep networks is highly non-convex. There are different lines of works investigating the optimization of deep networks from different perspectives. For example, a large number of works consider the optimization landscape corresponding to different activation functions
\cite{Soudry2016no,zhou2017landscape,hardt2016identity,
choromanska2015loss,nguyen2017loss,soltanolkotabi2018theoretical}, whereas some others 
\cite{soltanolkotabi2017learning,du2017gradient,brutzkus2017globally,tian2017analytical}
ensure global convergence by imposing some restrictions on the input distribution. 

In the recent years, there has been considerably many papers providing convergence guarantees for over-parameterized two-layer and deep networks. It is shown in \cite{li2018learning} that gradient descent can find the near-global minima of a single hidden layer network in polynomial time with respect to the accuracy and sample size. A similar observation has been made by \cite{du2018gradient2} in regards to the single hidden layer networks where the network width
has to taken as $m=\Omega(n^6).$ Later on, the convergence problem
for over-parameterized multi-layer networks is considered by
\cite{allen2018convergence,zou2018stochastic,du2018gradient}.
It is proved in \cite{allen2018convergence} that convergence to near-global minima is possible
when the network width $m$ grows polynomially with the sample size $n$
and the network depth $H$. The authors claim this result is valid for
convolution neural networks(CNNs) and deep residual networks(ResNets) as well. \cite{zou2018stochastic} shows that convergence time is 
polynomial in $H$ and $n$ for deep ReLU networks when $m=\text{poly}(n,H).$ The paper \cite{du2018gradient} proves that the condition $m=\Omega(\text{poly}(n) 2^{O(H)})$ suffices for the gradient descent to converge to zero training loss for fully-connected feedforward network, and $m=\text{poly}(n,H)$ is enough to have a similar convergence for ResNets and convolutional ResNets.

A dataset dependent convergence rate is specified in \cite{arora2019fine},
where the condition $m=\Omega(n^7)$ ensures a linear convergence for
the single hidden layer network, and the convergence rate is related to the Gram matrix ${\mathbf H}^{\infty}$ of the dataset. 
Another single hidden layer network analysis appears
in \cite{oymak2019towards}, which shows that a moderate overparameterization of the form $m=O(n^2)$
is sufficient to have a global convergence when the network has single hidden layer. In the paper \cite{zhang2019training}, the authors conclude $m$ can grow independently of $H$ and the linear convergence can still be maintained for the ResNet when $m=\Omega(n^{24}).$ Later on, the same authors prove some convergence results (where $m$ depends polynomially on $H$ this time) 
for the case when the scaling factor of the ResNet is taken as 
$\tau=O( 1/\sqrt{H})$ instead of $\tau=O(1/H).$
Convergence for the adversarial training is considered in 
\cite{gao2019convergence},
where it is shown that it is possible to find the optimal set of weight matrices inside a radius of $B$ and within an error tolerance $\epsilon$ for the fully connected feedforward network if 
$m=\Omega(B^4 \,n\, 2^{O(H)}/\epsilon^2).$
A list of papers providing convergence guarantees along with
the conditions and network types for which those convergences are valid is given by Table~\ref{table1}.

\begin{table}[h]
\centering
\begin{tabular}{ |c|c|c|c|c| } 
 \hline
Paper & Network Type & \shortstack{Activation\\ Function} & Loss Function & \shortstack{Network \\Width $m$}\\ 
\hline
\cite{li2018learning} & Single hidden layer & ReLU & Cross Entropy & $\Omega(\text{poly}(n))$\\
\hline
\cite{du2018gradient2} & Single hidden layer & ReLU & Quadratic & $\Omega(n^6)$ \\ 
\hline
\cite{allen2018convergence} & \shortstack{Multi-layer,\\ CNN, ResNet}
& ReLU & Arbitrary & $\Omega(\text{poly}(n,H))$\\
\hline
\cite{zou2018stochastic} & \shortstack{Multi-layer\\ feedforward} &
ReLU & Smooth & $\Omega(\text{poly}(n,H))$\\
\hline
\cite{du2018gradient} & \shortstack{Multi-layer\\ feedforward} 
& Smooth & Quadratic & $\Omega(\text{poly}(n)2^{O(H)})$\\
\hline
\cite{du2018gradient} & CNN, ResNet & Smooth&
Quadratic & $\Omega(\text{poly}(n,H))$\\
\hline
\cite{arora2019fine} & Single hidden layer & ReLU & Quadratic &
$\Omega(n^7)$\\
\hline
\cite{oymak2019towards} & Single hidden layer & Smooth &
Quadratic & $\Omega(n^2)$\\
\hline
\cite{oymak2019towards} & Single hidden layer & ReLU &
Quadratic & $\Omega(n^4)$\\
\hline
\cite{zhang2019training} & ResNet & ReLU & Quadratic &
$\max(H,\Omega(n^{24}))$\\
\hline
\cite{zhang2019training} & \shortstack{ResNet with\\ scaling $\frac{1}{\sqrt{H}}$}
& ReLU & Quadratic & $\Omega(n^{24}H^7)$\\
\hline
\cite{gao2019convergence} & \shortstack{Multi-layer\\ feedforward} & Smooth & Smooth
& $\Omega(B^4 \,n\, 2^{O(H)})$\\
\hline
This paper & ResNet & Smooth & Quadratic & $\Omega(n)$\\
\hline
\end{tabular}
\caption{A summary of the some recent convergence results for the 
overparameterized neural networks.}
\label{table1}
\end{table}

In this paper, we consider the ResNet model with the quadratic loss function. We assume the activation function has bounded first and
second derivatives. 
Our convergence proof strategy relies on the meta theorem given by
\cite{Oymak:2018aa}, which basically states that if the Lipschitz constant and the maximum eigenvalue of the Jacobian is bounded over a ball centered at the random initialization, and if the minimum eigenvalue of the Jacobian is bounded away from zero in the same ball, then linear convergence rate is achievable for the quadratic loss function.
After an involved analysis, it turns out that those conditions of the meta theorem can be met if $m=\Omega(n)$, i.e., the network width scales linearly with the sample size.
A convergence result associated with the linear growth of network width is not available in the literature even for single-hidden layer networks. Moreover, this is the first global minima finding result for the deep networks with the properties
\begin{enumerate}[i.]
\item The growth rate of network width does not depend on the
network depth,
\item The dependence of the network width to the sample size is linear,
\end{enumerate}
to the best of our knowledge.

The rest of the paper is organized as follows. We introduce the problem setup in Section~\ref{problem_setup}. The derivation of Jacobian related parameters playing a crucial role in the linear convergence rate is performed in Section~\ref{jacob_analysis}. An upper bound for the error term for the random initialization is found in Section~\ref{misfit_section}. The calculations showing how much the Jacobian parameters change over a given ball centered at random initialization is presented in Section~\ref{param_variation}.
Our main result and a list of some future work are stated in Section~\ref{concl}.

\section{Problem Setup}
\label{problem_setup}
In this work, we consider the empirical loss minimization 
problem for the quadratic loss function 
\begin{equation}
\min_{\boldsymbol{\theta}}L(\boldsymbol{\theta})=
\frac{1}{2}\sum_{i=1}^n 
\left(f({\mathbf x}_i,\boldsymbol{\theta})-y_i\right)^2 \label{quadratic_loss}   
\end{equation}
where $\{{\mathbf x}_i\}_{i=1}^n$ and $\{y_i\}_{i=1}^n$
refer to the training inputs and labels, respectively,
$\boldsymbol{\theta}$ is the parameter set to be optimized, and $f$
is the output of the neural network. We focus on
deep residual networks, which forms
a particular class of neural networks.

For the classical fully-connected neural network, the
network output $f$ is defined recursively as follows.
Denoting the input as ${\mathbf x}\in{\mathbb R}^d$, 
the activation function as $\phi(\cdot)$,
the first weight matrix as ${\mathbf W}^{(1)}\in{\mathbb R}^{m\times d}$, the weight matrix associated with the $h-$th layer for
$2\leq h\leq H$ as ${\mathbf W}^{(h)}\in{\mathbb R}^{m\times m},$
and the output layer weight vector as ${\mathbf a}\in{\mathbb R}^m,$
we express $f({\mathbf x},\theta)$ as
\begin{align}
{\mathbf x}^{(h)}&=\sqrt{\frac{\cph}{m}}
\phi\left({\mathbf W}^{(h)}{\mathbf x}^{(h-1)}\right),
\,\,1\leq h\leq H\nonumber\\
f({\mathbf x},\boldsymbol{\theta})&={\mathbf a}^T{\mathbf x}^{(H)}\label{fully_conn}
\end{align}
where we take ${\mathbf x}^{(0)}={\mathbf x}$, and $\cph=1/{\mathbb E}_{x\sim{\mathcal N}(0,1)}[\phi(x)]^2$
is the normalization factor.

The network output for the deep residual network (ResNet) involves
a similar recursion, with the difference of an additional term which
makes it possible to skip some of the layers as the input propagates.
The ResNet output model is defined formally in the next section.

\subsection{ResNet model}
Using the same notation as the fully-connected neural network output
\eqref{fully_conn}, the ResNet model is defined recursively as
\begin{align}
{\mathbf x}^{(1)}&=\sqrt{\frac{\cph}{m}}\phi\left({\mathbf W}^{(1)}{\mathbf x}\right),\nonumber\\ 
{\mathbf x}^{(h)}&={\mathbf x}^{(h-1)}+\frac{\crs}{H\sqrt{m}}
\phi\left({\mathbf W}^{(h)}{\mathbf x}^{(h-1)}\right),\,\, 
2\leq h\leq H,
\nonumber\\
\frs({\mathbf x},\boldsymbol{\theta})&={\mathbf a}^T{\mathbf x}^{(H)}\label{Res_Net_eq}
\end{align}
where $\crs$ is a positive constant less than $1.$ 
We assume the activation
function $\phi$ satisfies $|\phi'(z)|\leq B$ and 
$|\phi''(z)|\leq M$ for some constants $B$ and $M$.
We observe from \eqref{Res_Net_eq} that there is an extra term ${\mathbf x}^{(h-1)}$
being added to the $h$-th layer output ${\mathbf x}^{(h)}$. 
Because of this term, for each
subset $S=\{h_1,\dots,h_k\}$ of the set $\Omega\triangleq\{2,\dots,H\}$, there is a connected network $1\to h_1\to\dots\to h_k$ contributing to the last layer output ${\mathbf x}^{(H)}.$ More precisely, we have
\begin{align}
|{\mathbf x}^{(H)}|&\leq
\sqrt{\frac{\cph}{m}}\left|\phi\left({\mathbf W}^{(1)}{\mathbf x}\right)\right|\nonumber\\
&\quad+
\mathlarger{\mathlarger{\sum}}_
{\substack{\{h_1,\dots,h_k\}\\
\subseteq\{2,\dots,H\}}}\,\, 
\frac{\crs}{H\sqrt{m}}\Bigg|\phi\Bigg(\frac{\crs}{H\sqrt{m}}{\mathbf W}^{(h_k)}
\phi\Bigg(\frac{\crs}{H\sqrt{m}}{\mathbf W}^{(h_{k-1})}\dots
\phi\Bigg(\sqrt{\frac{\cph}{m}} {\mathbf W}^{(h_{1})}
\phi({\mathbf W}^{(1)}{\mathbf x})\Bigg)\dots\Bigg)\Bigg)\Bigg|
\label{ResNet_uppbd}
\end{align}
which can be proven easily via induction. The inequality \eqref{ResNet_uppbd} holds trivially for $H=1$, and the inductive
step follows from the inequality 
$|\phi({\mathbf a}+{\mathbf b})|\leq |\phi({\mathbf a})|+
|\phi({\mathbf b})|$ which holds for commonly used activation
functions such as ReLU and softplus. 

\subsection{Global Convergence Overview}

Given a neural network with the label output $f$, parameter set 
$\boldsymbol{\theta}$, and loss function $L$, the gradient descent
iterations of $\boldsymbol{\theta}$ can be written as
\begin{equation}
\boldsymbol{\theta}(k)=\boldsymbol{\theta}(k-1)-\eta 
\frac{\partial L}{\partial\boldsymbol{\theta}}\bigg|_{\boldsymbol{\theta}=\boldsymbol{\theta}(k-1)}\label{grad_des}
\end{equation}
for $k=1,2,\dots,$ where $\eta$ is the step size. When the loss function is quadratic as in \eqref{quadratic_loss}, the gradient descent update \eqref{grad_des} takes the form
\begin{align}
\boldsymbol{\theta}(k)&=\boldsymbol{\theta}(k-1)-\eta\,\,
\mathlarger{\sum}_{i=1}^n \hspace{.1in}\frac{\partial f({\mathbf x}_i,\boldsymbol{\theta})}{\partial \boldsymbol{\theta}}\bigg|_{\boldsymbol{\theta}=
\boldsymbol{\theta}(k-1)} \hspace{-.2in}\cdot
\left(f({\mathbf x}_i,\boldsymbol{\theta}(k-1))-y_i\right)
\nonumber\\
&=\boldsymbol{\theta}(k-1)-\eta\left[\frac{\partial f({\mathbf x}_1,\boldsymbol{\theta})}{\partial \boldsymbol{\theta}}
\Bigg|_{\boldsymbol{\theta}=\boldsymbol{\theta}(k-1)} \hspace{-.2in}\dots\,\,\,
\frac{\partial f({\mathbf x}_n,\boldsymbol{\theta})}{\partial \boldsymbol{\theta}}\Bigg|_{\boldsymbol{\theta}=\boldsymbol{\theta}(k-1)}\right] .
\begin{bmatrix}
f({\mathbf x}_1,\boldsymbol{\theta}(k-1))-y_1 \\
\vdots\\
f({\mathbf x}_n,\boldsymbol{\theta}(k-1))-y_n
\end{bmatrix}
\label{quad_update}
\end{align}
Using \eqref{quad_update} and first order Taylor approximation,
we get

\begin{align}
\begin{bmatrix}
f({\mathbf x}_1,\boldsymbol{\theta}(k)) \\
\vdots\\
f({\mathbf x}_n,\boldsymbol{\theta}(k))
\end{bmatrix}
\approx
\begin{bmatrix}
f({\mathbf x}_1,\boldsymbol{\theta}(k-1)) \\
\vdots\\
f({\mathbf x}_n,\boldsymbol{\theta}(k-1))
\end{bmatrix}
&-\eta \begin{bmatrix}
\frac{\partial f({\mathbf x}_1,\boldsymbol{\theta})}{\partial \boldsymbol{\theta}}
\big|_{\boldsymbol{\theta}=\boldsymbol{\theta}(k-1)} \\
\vdots\\
\frac{\partial f({\mathbf x}_n,\boldsymbol{\theta})}{\partial \boldsymbol{\theta}}
\big|_{\boldsymbol{\theta}=\boldsymbol{\theta}(k-1)}
\end{bmatrix} \cdot \left[\frac{\partial f({\mathbf x}_1,\boldsymbol{\theta})}{\partial \boldsymbol{\theta}}
\Bigg|_{\boldsymbol{\theta}=\boldsymbol{\theta}(k-1)} \hspace{-.2in}\dots\,\,\,
\frac{\partial f({\mathbf x}_n,\boldsymbol{\theta})}{\partial \boldsymbol{\theta}}\Bigg|_{\boldsymbol{\theta}=\boldsymbol{\theta}(k-1)}\right] \nonumber\\&\quad\cdot
\begin{bmatrix}
f({\mathbf x}_1,\boldsymbol{\theta}(k-1))-y_1 \\
\vdots\\
f({\mathbf x}_n,\boldsymbol{\theta}(k-1))-y_n
\end{bmatrix}\nonumber
\end{align}
which can be written in more compactly as
\begin{align}
\begin{bmatrix}
f({\mathbf x}_1,\boldsymbol{\theta}(k)) \\
\vdots\\
f({\mathbf x}_n,\boldsymbol{\theta}(k))
\end{bmatrix}
\approx
\begin{bmatrix}
f({\mathbf x}_1,\boldsymbol{\theta}(k-1)) \\
\vdots\\
f({\mathbf x}_n,\boldsymbol{\theta}(k-1))
\end{bmatrix}
-\eta [{\mathcal J}(\boldsymbol{\theta})
{\mathcal J}^T(\boldsymbol{\theta})]\bigg|_{\boldsymbol{\theta}=\boldsymbol{\theta}(k-1)}
\begin{bmatrix}
f({\mathbf x}_1,\boldsymbol{\theta}(k-1))-y_1 \\
\vdots\\
f({\mathbf x}_n,\boldsymbol{\theta}(k-1))-y_n
\end{bmatrix}\label{taylor}
\end{align}
where ${\mathcal J}(\boldsymbol{\theta})\triangleq \left[\frac{\partial f({\mathbf x}_1,\boldsymbol{\theta})}{\partial \boldsymbol{\theta}}
\dots
\frac{\partial f({\mathbf x}_n,\boldsymbol{\theta})}{\partial \boldsymbol{\theta}}\right]^T$ is the neural network Jacobian matrix, and ${\mathcal J}(\boldsymbol{\theta})
{\mathcal J}^T(\boldsymbol{\theta})$ is referred as
neural tangent kernel(NTK)\cite{jacot2018neural,arora2019exact}.
Eq. \eqref{taylor} can be trivially transformed to
\begin{align}
\begin{bmatrix}
y_1-f({\mathbf x}_1,\boldsymbol{\theta}(k)) \\
\vdots\\
y_n-f({\mathbf x}_n,\boldsymbol{\theta}(k))
\end{bmatrix}
\approx \left[{\mathbf I}-\eta {\mathcal J}(\boldsymbol{\theta}(k-1))
{\mathcal J}^T(\boldsymbol{\theta}(k-1))\right]
\begin{bmatrix}
y_1-f({\mathbf x}_1,\boldsymbol{\theta}(k-1)) \\
\vdots\\
y_n-f({\mathbf x}_n,\boldsymbol{\theta}(k-1))
\end{bmatrix}\label{taylor2}
\end{align}

We observe from \eqref{taylor2} that the convergence rate of the gradient descent iteration is related to the minimum singular value
of the Jacobian ${\mathcal J}(\boldsymbol{\theta}).$ If we make sure
that the parameter set $\boldsymbol{\theta}$ stays inside a certain 
region during the gradient descent updates and the minimum singular value of ${\mathcal J}(\boldsymbol{\theta})$ is greater than a certain threshold in this region, then the global convergence of the training
process should follow. Arguing the global convergence of the training in this way is encountered in numerous papers including \cite{du2018gradient2,du2018gradient,arora2019fine,oymak2019towards}.

\subsection{Meta convergence theorem}
To prove our convergence results, we will utilize a result from \cite{Oymak:2018aa} stated below.
\begin{theorem}\label{GDthm} Consider a nonlinear least-squares optimization problem given by $\underset{\vct{\theta}\in\R^p}{\min}\text{ }\mathcal{L}(\vct{\theta}):=\frac{1}{2}\twonorm{f(\vct{\theta})-\vct{y}}^2$ with $f:\R^p\mapsto \R^n$ and $\vct{y}\in\R^n$. Given $L,\alpha>0$, consider an Euclidean ball $\Dc$ of radius $R=\frac{4\tn{f(\bteta_0)-\y}}{\alpha}$ around the initial point $\bteta_0$. Suppose the Jacobian associated with $f$ obeys
\begin{align}
\label{bndspect}
\alpha\leq
\phi_{\min}\left(\mathcal{J}(\vct{\theta})\right)\leq
\|\mathcal{J}(\vct{\theta})\|\le \bp.
\end{align}
at all $\bteta\in\Dc$. Furthermore, suppose $\opnorm{\mathcal{J}(\vct{\theta}_2)-\mathcal{J}(\vct{\theta}_1)}\le L\twonorm{\vct{\theta}_2-\vct{\theta}_1}$ holds for any $\vct{\theta}_1,\vct{\theta}_2\in\Dc$ 
\remove{
and assume initialization obeys\vspace{-6pt}
\begin{align}
\label{corassumption}
\frac{\bn^2}{4L}\ge \twonorm{f(\vct{\theta}_0)-\vct{y}}.
\end{align}
}
Then, setting $\eta\leq 
\frac{1}{2 \bp^2}\cdot\min\left(1,
\frac{\alpha^2}{L\twonorm{f(\vct{\theta}_0)-\vct{y}}}\right)$ and running gradient descent updates given by $\vct{\theta}_{\tau+1}=\vct{\theta}_\tau-\eta\nabla\mathcal{L}(\vct{\theta}_\tau)$ starting from $\vct{\theta}_0$, all iterates obey
\begin{align}
\twonorm{f(\vct{\theta}_\tau)-\vct{y}}^2\le&\left(1-\frac{\eta\bn^2}{2}\right)^\tau\twonorm{f(\vct{\theta}_0)-\vct{y}}^2,\label{err}\\
\frac{\bn}{4}\twonorm{\vct{\theta}_\tau-\vct{\theta}_0}+\twonorm{f(\vct{\theta}_\tau)-\vct{y}}\le&\twonorm{f(\vct{\theta}_0)-\vct{y}}.\label{close}
\end{align}
\end{theorem}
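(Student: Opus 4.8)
\begin{proofsk}
This is a plan rather than a complete argument. Write $\rb_\tau:=f(\bteta_\tau)-\y$ for the residual; for the quadratic loss $\nabla\Lc(\bteta)=\mathcal{J}(\bteta)^{T}(f(\bteta)-\y)$, so a gradient step reads $\bteta_{\tau+1}=\bteta_\tau-\eta\,\vb_\tau$ with $\vb_\tau:=\mathcal{J}(\bteta_\tau)^{T}\rb_\tau$, and whenever $\bteta_\tau\in\Dc$ the bounds \eqref{bndspect} give $\alpha\twonorm{\rb_\tau}\le\twonorm{\vb_\tau}\le\beta\twonorm{\rb_\tau}$. The plan is to prove, by strong induction on $\tau$, the compound statement: $\bteta_0,\dots,\bteta_\tau\in\Dc$; the error bound \eqref{err} holds at steps $0,\dots,\tau$; and $\twonorm{\bteta_s-\bteta_0}\le\frac{4}{\alpha}\big(\twonorm{\rb_0}-\twonorm{\rb_s}\big)$ for every $s\le\tau$ (this last inequality implies \eqref{close} at once). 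The base case $\tau=0$ is immediate, and we may assume $\twonorm{\rb_0}>0$, the case $\rb_0=0$ being trivial.

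For the inductive step I would argue in three moves. \emph{(1) Staying in $\Dc$.} Any point $\bteta_\tau-s\eta\vb_\tau$, $s\in[0,1]$, lies within distance $\twonorm{\bteta_\tau-\bteta_0}+\eta\twonorm{\vb_\tau}\le\frac{4}{\alpha}(\twonorm{\rb_0}-\twonorm{\rb_\tau})+\eta\beta\twonorm{\rb_\tau}$ of $\bteta_0$, and since the step-size choice forces $\eta\beta\le\frac{1}{2\beta}\le\frac{1}{2\alpha}\le\frac{4}{\alpha}$ this is $\le\frac{4}{\alpha}\twonorm{\rb_0}=R$; hence the whole segment $[\bteta_\tau,\bteta_{\tau+1}]$ lies in $\Dc$ and the hypotheses of Theorem~\ref{GDthm} are available along it. \emph{(2) One gradient step.} By the fundamental theorem of calculus $\rb_{\tau+1}=\rb_\tau-\eta\bar{\mathcal{J}}\vb_\tau$ with $\bar{\mathcal{J}}:=\int_0^1\mathcal{J}(\bteta_\tau-s\eta\vb_\tau)\,\d s$, and the Lipschitz bound gives $\opnorm{\bar{\mathcal{J}}-\mathcal{J}(\bteta_\tau)}\le\frac{L\eta}{2}\twonorm{\vb_\tau}$; expanding $\twonorm{\rb_{\tau+1}}^2$, writing $\bar{\mathcal{J}}^{T}\rb_\tau=\vb_\tau+(\bar{\mathcal{J}}-\mathcal{J}(\bteta_\tau))^{T}\rb_\tau$, and using $\eta\beta^2\le\frac12$ together with $\eta L\twonorm{\rb_0}\le\frac{\alpha^2}{2\beta^2}\le\frac12$ (both consequences of the stated step size once $\alpha\le\beta$ is used) makes the $O(\eta^2)$ and Lipschitz-error terms negligible and yields the per-step inequality $\twonorm{\rb_{\tau+1}}^2\le\twonorm{\rb_\tau}^2-\frac{\eta}{2}\twonorm{\vb_\tau}^2$.

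\emph{(3) Reading the per-step inequality twice.} Combined with $\twonorm{\vb_\tau}\ge\alpha\twonorm{\rb_\tau}$ it gives $\twonorm{\rb_{\tau+1}}^2\le(1-\frac{\eta\alpha^2}{2})\twonorm{\rb_\tau}^2$, which closes the induction for \eqref{err}. For the parameter drift — the crux — I would instead use $\frac{\eta}{2}\twonorm{\vb_\tau}^2\ge\frac{\eta\alpha}{2}\twonorm{\vb_\tau}\twonorm{\rb_\tau}$ to get $\eta\twonorm{\vb_\tau}\le\frac{2(\twonorm{\rb_\tau}^2-\twonorm{\rb_{\tau+1}}^2)}{\alpha\twonorm{\rb_\tau}}\le\frac{4}{\alpha}(\twonorm{\rb_\tau}-\twonorm{\rb_{\tau+1}})$ (bounding $\twonorm{\rb_\tau}+\twonorm{\rb_{\tau+1}}\le2\twonorm{\rb_\tau}$); since $\twonorm{\bteta_{\tau+1}-\bteta_\tau}=\eta\twonorm{\vb_\tau}$, adding this to the induction hypothesis telescopes to $\twonorm{\bteta_{\tau+1}-\bteta_0}\le\frac{4}{\alpha}(\twonorm{\rb_0}-\twonorm{\rb_{\tau+1}})$, which re-certifies $\bteta_{\tau+1}\in\Dc$ and gives \eqref{close}. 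This completes the induction.

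I expect two points to need care. The mild circularity is the main structural obstacle: invoking the spectral and Lipschitz hypotheses at step $\tau$ requires the step, and the segment it sweeps, to lie in $\Dc$, yet that confinement is part of what is being proved — this is handled by the ordering above, where move (1) controls the candidate next iterate crudely from the induction hypothesis before the quantitative work of moves (2)--(3). The second point is bookkeeping: checking that the single prescription $\eta\le\frac{1}{2\beta^2}\min(1,\frac{\alpha^2}{L\twonorm{f(\bteta_0)-\y}})$ simultaneously (i) renders the $O(\eta^2)$ contribution to $\twonorm{\rb_{\tau+1}}^2$ harmless, (ii) keeps the average-Jacobian perturbation below a constant fraction of the spectrum, and (iii) guarantees $\eta\beta\le 4/\alpha$; all three hold with room to spare once $\alpha\le\beta$ is invoked.
\end{proofsk}
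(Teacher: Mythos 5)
The paper does not actually prove Theorem~\ref{GDthm}; it imports it verbatim from \cite{Oymak:2018aa}, so there is no in-paper proof to compare against. Your sketch is a correct reconstruction of the argument in that reference: the compound induction (iterates confined to $\Dc$, geometric decay of the residual, and the telescoping drift bound $\twonorm{\bteta_s-\bteta_0}\le\frac{4}{\alpha}\left(\twonorm{\rb_0}-\twonorm{\rb_s}\right)$), the averaged-Jacobian identity $\rb_{\tau+1}=\rb_\tau-\eta\bar{\mathcal{J}}\,\mathcal{J}(\bteta_\tau)^{T}\rb_\tau$ with the Lipschitz control $\opnorm{\bar{\mathcal{J}}-\mathcal{J}(\bteta_\tau)}\le\frac{L\eta}{2}\twonorm{\vb_\tau}$, and the two readings of the per-step descent inequality are exactly the ingredients used there. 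Your step-size bookkeeping also checks out: $\eta\beta^2\le\frac12$ and $\eta L\twonorm{\rb_0}\le\frac{\alpha^2}{2\beta^2}\le\frac12$ together give $\eta\left(L\twonorm{\rb_\tau}+\beta^2\right)\le 1\le\frac32$, which is all that is needed for $\twonorm{\rb_{\tau+1}}^2\le\twonorm{\rb_\tau}^2-\frac{\eta}{2}\twonorm{\vb_\tau}^2$, and the confinement estimate $\eta\beta\twonorm{\rb_\tau}\le\frac{4}{\alpha}\twonorm{\rb_\tau}$ closes the circularity correctly.
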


To apply this theorem it suffices to prove the conditions above for proper choices of $\bn, \bp,$ and $L$. Therefore, our aim is to compute the values those variable take for the ResNet. As a preliminary remark, we present the results proved in \cite{oymak2019towards} for the one-hidden layer neural network model, defined as $f({\mathbf x},{\mathbf W})={\mathbf v}^T\phi({\mathbf W}{\mathbf x}),$ for
${\mathbf v}\in{\mathbb R}^k, {\mathbf W}\in{\mathbb R}^{k\times d},
{\mathbf x}\in{\mathbb R}^d.$

\begin{lemma}[Spectral norm of the Jacobian]\label{spectJ} Consider a one-hidden layer neural network model of the form $\vct{x}\mapsto \vct{v}^T\phi\left(\W\x\right)$ where the activation $\phi$ has bounded derivatives obeying $\abs{\phi'(z)}\le B$. Also assume we have $n$ data points $\vct{x}_1, \vct{x}_2,\ldots,\vct{x}_n\in\R^d$ aggregated as the rows of a matrix $\X\in\R^{n\times d}$. Then the Jacobian matrix with respect to the input-to-hidden weights obeys 
\begin{align*}
\opnorm{\mathcal{J}(\mtx{W})}\le \sqrt{k}B\infnorm{\vct{v}}\opnorm{\X}.
\end{align*}
\end{lemma}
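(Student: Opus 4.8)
The plan is to write the Jacobian $\mathcal{J}(\W)$ down explicitly, exploit its natural block structure over the $k$ hidden neurons, and combine a per-block operator-norm bound with the fact that the Gram matrix $\mathcal{J}(\W)\mathcal{J}(\W)^T$ decomposes as a sum of positive semidefinite contributions from the blocks.

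First I would vectorize the weights row by row, $\vct{\theta}=[\w_1^T,\dots,\w_k^T]^T$ where $\w_j$ is the $j$-th row of $\W$, so that $f(\x_i,\W)=\sum_{j=1}^k v_j\,\phi(\w_j^T\x_i)$ and $\partial f(\x_i,\W)/\partial\w_j=v_j\,\phi'(\w_j^T\x_i)\,\x_i$. Stacking the rows $\partial f(\x_i,\W)/\partial\vct{\theta}$ for $i=1,\dots,n$, the Jacobian $\mathcal{J}(\W)\in\R^{n\times kd}$ splits into horizontal blocks $\mathcal{J}(\W)=[\mathcal{J}_1\ \cdots\ \mathcal{J}_k]$ with $\mathcal{J}_j\in\R^{n\times d}$ and $\mathcal{J}_j=v_j\,\mtx{D}_j\X$, where $\mtx{D}_j=\diag\!\big(\phi'(\w_j^T\x_1),\dots,\phi'(\w_j^T\x_n)\big)$.

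Next I would bound each block: since $\abs{\phi'(z)}\le B$ we have $\opnorm{\mtx{D}_j}\le B$, hence $\opnorm{\mathcal{J}_j}\le \abs{v_j}\,\opnorm{\mtx{D}_j}\opnorm{\X}\le \abs{v_j}\,B\,\opnorm{\X}$. Then, because $\mathcal{J}(\W)\mathcal{J}(\W)^T=\sum_{j=1}^k \mathcal{J}_j\mathcal{J}_j^T$ is a sum of positive semidefinite matrices, the triangle inequality for the operator norm gives $\opnorm{\mathcal{J}(\W)}^2=\opnorm{\sum_{j=1}^k \mathcal{J}_j\mathcal{J}_j^T}\le \sum_{j=1}^k\opnorm{\mathcal{J}_j}^2\le B^2\opnorm{\X}^2\sum_{j=1}^k v_j^2=B^2\opnorm{\X}^2\twonorm{\vct{v}}^2$. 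Finally, bounding $\twonorm{\vct{v}}\le\sqrt{k}\,\infnorm{\vct{v}}$ and taking square roots yields $\opnorm{\mathcal{J}(\W)}\le \sqrt{k}\,B\,\infnorm{\vct{v}}\opnorm{\X}$; in fact this argument gives the slightly sharper $\opnorm{\mathcal{J}(\W)}\le B\,\twonorm{\vct{v}}\opnorm{\X}$.

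There is no genuine obstacle here. The only points requiring care are correctly identifying the block structure of $\mathcal{J}(\W)$ under the chosen vectorization of $\W$, and invoking the right inequality for a horizontal concatenation, i.e. $\opnorm{[\mathcal{J}_1\ \cdots\ \mathcal{J}_k]}^2\le\sum_j\opnorm{\mathcal{J}_j}^2$ via the PSD decomposition of $\mathcal{J}\mathcal{J}^T$, rather than a naive triangle inequality over the $k$ column blocks, which would cost a superfluous extra factor of $\sqrt{k}$.
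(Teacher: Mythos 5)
Your argument is correct. Note that the paper does not prove this lemma itself---it imports it verbatim from \cite{oymak2019towards} as a preliminary---and your derivation is essentially the one in that source: identify the block structure $\mathcal{J}(\W)=[\mathcal{J}_1\ \cdots\ \mathcal{J}_k]$ with $\mathcal{J}_j=v_j\mtx{D}_j\X$, use $\mathcal{J}\mathcal{J}^T=\sum_j\mathcal{J}_j\mathcal{J}_j^T$ with each summand positive semidefinite to get $\opnorm{\mathcal{J}}^2\le\sum_j\opnorm{\mathcal{J}_j}^2\le B^2\opnorm{\X}^2\twonorm{\vct{v}}^2$, and then relax $\twonorm{\vct{v}}\le\sqrt{k}\infnorm{\vct{v}}$; your observation that the intermediate bound $B\twonorm{\vct{v}}\opnorm{\X}$ is sharper than the stated one is also accurate.
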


\begin{lemma}[Minimum eigenvalue of the Jacobian at initialization]\label{minspectJ} Consider a one-hidden layer neural network model of the form $\vct{x}\mapsto \vct{v}^T\phi\left(\W\x\right)$ where the activation $\phi$ has bounded derivatives obeying $\abs{\phi'(z)}\le B$. Also assume we have $n$ data points $\vct{x}_1, \vct{x}_2,\ldots,\vct{x}_n\in\R^d$ with unit euclidean norm ($\twonorm{\vct{x}_i}=1$) aggregated as the rows of a matrix $\X\in\R^{n\times d}$. Then, as long as
\begin{align*}
\frac{\twonorm{\vct{v}}}{\infnorm{\vct{v}}}\ge \sqrt{20\log n}\frac{\opnorm{\X}}{\sqrt{\lambda(\X)}}B,
\end{align*}
the Jacobian matrix at a random point $\mtx{W}_0\in\R^{k\times d}$ with i.i.d.~$\mathcal{N}(0,1)$ entries obeys 
\begin{align*}
\phi_{\min}\left(\mathcal{J}(\W_0)\right)\ge \frac{1}{\sqrt{2}}\twonorm{\vct{v}}\sqrt{\lambda(\X)},
\end{align*}
with probability at least $1-1/n$, where
$\lambda(\X):=\phi_{\min}\left(\mtx{\Sigma}(\X)\right)$ for the matrix
$$\mtx{\Sigma}(\X):=\E_{\w\sim\mathcal{N}(\vct{0},\mtx{I}_d)}\Big[\left(\phi'\left(\X\w\right)\phi'\left(\X\w\right)^T\right)\odot\left(\X\X^T\right)\Big].$$
\end{lemma}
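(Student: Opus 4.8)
The plan is to recognize $\mathcal{J}(\W_0)\mathcal{J}(\W_0)^T$ as a sum of $k$ independent positive semidefinite random matrices, one per hidden unit, whose expectation is exactly $\twonorm{\vct{v}}^2\,\mtx{\Sigma}(\X)$, and then to invoke a matrix Chernoff (lower tail) bound to control its smallest eigenvalue.

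First I would write the Jacobian explicitly. Since $f(\x,\W)=\vct{v}^T\phi(\W\x)$, the gradient of $f(\x_i,\W)$ with respect to the $j$-th row $\w_j$ of $\W$ equals $v_j\,\phi'(\w_j^T\x_i)\,\x_i$, so the $i$-th row of $\mathcal{J}(\W)\in\R^{n\times kd}$ is the concatenation $\big(v_1\phi'(\w_1^T\x_i)\x_i^T,\dots,v_k\phi'(\w_k^T\x_i)\x_i^T\big)$. Carrying out the product row by row gives the identity
\begin{align*}
\mathcal{J}(\W)\mathcal{J}(\W)^T=\sum_{j=1}^{k}v_j^2\,(\X\X^T)\odot\big(\phi'(\X\w_j)\phi'(\X\w_j)^T\big)=:\sum_{j=1}^{k}\mtx{A}_j ,
\end{align*}
where each $\mtx{A}_j\succeq 0$ depends only on $\w_j$. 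For $\W_0$ with i.i.d.\ $\mathcal{N}(0,1)$ entries the rows $\w_j$ are i.i.d.\ $\mathcal{N}(\vct{0},\mtx{I}_d)$, so the $\mtx{A}_j$ are independent, and $\E[\mtx{A}_j]=v_j^2\,(\X\X^T)\odot\E_{\w\sim\mathcal{N}(\vct{0},\mtx{I}_d)}[\phi'(\X\w)\phi'(\X\w)^T]$; summing over $j$ yields $\sum_{j}\E[\mtx{A}_j]=\twonorm{\vct{v}}^2\,\mtx{\Sigma}(\X)$, hence $\phi_{\min}\big(\sum_j\E[\mtx{A}_j]\big)=\twonorm{\vct{v}}^2\lambda(\X)$.

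Next I would bound the operator norm of each summand, using $(\X\X^T)\odot(\vct{u}\vct{u}^T)=\diag(\vct{u})\,(\X\X^T)\,\diag(\vct{u})$ with $\vct{u}=\phi'(\X\w_j)$ and $\infnorm{\phi'(\X\w_j)}\le B$, which gives $\opnorm{\mtx{A}_j}\le v_j^2 B^2\opnorm{\X}^2\le\infnorm{\vct{v}}^2 B^2\opnorm{\X}^2=:L$ almost surely. The matrix Chernoff lower-tail inequality for a sum of independent PSD matrices then gives, taking $\delta=\tfrac12$,
\begin{align*}
\P\Big(\phi_{\min}\big(\mathcal{J}(\W_0)\mathcal{J}(\W_0)^T\big)\le\tfrac12\twonorm{\vct{v}}^2\lambda(\X)\Big)\le n\,\exp\!\Big(-\frac{\twonorm{\vct{v}}^2\lambda(\X)}{8\,\infnorm{\vct{v}}^2 B^2\opnorm{\X}^2}\Big).
\end{align*}
Under the hypothesis $\twonorm{\vct{v}}/\infnorm{\vct{v}}\ge\sqrt{20\log n}\,\opnorm{\X}B/\sqrt{\lambda(\X)}$ the exponent is at least $\tfrac{20}{8}\log n\ge 2\log n$, so the right-hand side is at most $1/n$. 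On the complementary event, $\phi_{\min}(\mathcal{J}(\W_0))=\sqrt{\phi_{\min}(\mathcal{J}(\W_0)\mathcal{J}(\W_0)^T)}\ge\tfrac1{\sqrt2}\twonorm{\vct{v}}\sqrt{\lambda(\X)}$, which is the claim.

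The step I expect to be the crux is the concentration argument: one needs a bound tailored to the small-ball (lower-tail) behaviour of a sum of independent PSD matrices whose individual operator norms can be as large as $\infnorm{\vct{v}}^2 B^2\opnorm{\X}^2$, while the eigenvalue floor one is chasing, $\twonorm{\vct{v}}^2\lambda(\X)$, is set by the $\ell_2$-mass of $\vct{v}$. The ratio $\twonorm{\vct{v}}^2/\infnorm{\vct{v}}^2$ is exactly the ``effective number of hidden units'' contributing to the sum, and the stated condition on $\vct{v}$ is precisely the requirement that this effective count beat $\log n$ times the conditioning quantity $\opnorm{\X}^2B^2/\lambda(\X)$; the Jacobian identity, the Hadamard-product manipulation, and the expectation computation are all routine.
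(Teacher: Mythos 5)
Your proof is correct and follows essentially the same route as the source of this lemma: the paper only quotes the statement from \cite{oymak2019towards} without reproving it, but both that reference and this paper's own adaptation in Section~3.1 (the decomposition of $\mathcal{J}\mathcal{J}^T$ into per-unit PSD summands $(\phi'(\X\w_l)\phi'(\X\w_l)^T)\odot(\X\X^T)$ followed by a matrix Chernoff lower-tail bound) use exactly the argument you give. Your operator-norm bound via $\diag(\vct{u})\X\X^T\diag(\vct{u})$ and the accounting $\mu_{\min}/L=\twonorm{\vct{v}}^2\lambda(\X)/(\infnorm{\vct{v}}^2B^2\opnorm{\X}^2)\ge 20\log n$ are the intended mechanism, so nothing is missing.
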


\begin{lemma}[Jacobian Lipschitzness]\label{JLlem} Consider a one-hidden layer neural network model of the form $\vct{x}\mapsto \vct{v}^T\phi\left(\W\x\right)$ where the activation $\phi$ has bounded second order derivatives obeying $\abs{\phi''(z)}\le M$. Also assume we have $n$ data points $\vct{x}_1, \vct{x}_2,\ldots,\vct{x}_n\in\R^d$ with unit euclidean norm ($\twonorm{\vct{x}_i}=1$) aggregated as the rows of a matrix $\X\in\R^{n\times d}$. Then the Jacobian mapping with respect to the input-to-hidden weights obeys
\begin{align*}
\opnorm{\mathcal{J}(\widetilde{\mtx{W}})-\mathcal{J}(\mtx{W})}\le M\infnorm{\vct{v}}\opnorm{\mtx{X}}\fronorm{\widetilde{\mtx{W}}-\mtx{W}}\quad\text{for all}\quad \widetilde{\W},\W\in\R^{k\times d}.
\end{align*}
\end{lemma}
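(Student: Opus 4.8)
The plan is to write the Jacobian out entrywise, reduce the operator norm of the increment $\mathcal{J}(\widetilde\W)-\mathcal{J}(\W)$ to a sum over the $k$ hidden units, and control each summand by pulling out $\opnorm{\X}$ and then applying a mean value estimate to the differences of $\phi'$, using $\abs{\phi''}\le M$.

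First I would record the form of the Jacobian. Writing $\w_j^T$ for the $j$-th row of $\W$, we have $f(\x_i,\W)=\sum_{j=1}^k v_j\phi(\w_j^T\x_i)$, hence $\partial f(\x_i,\W)/\partial\w_j=v_j\phi'(\w_j^T\x_i)\x_i$, so row $i$ of $\mathcal{J}(\W)\in\R^{n\times kd}$ is the vectorization of the $k\times d$ matrix with $j$-th row $v_j\phi'(\w_j^T\x_i)\x_i^T$. Consequently, for any fixed $\vct{u}\in\R^n$ with $\twonorm{\vct{u}}=1$, the vector $\mathcal{J}(\W)^T\vct{u}$ reshapes into a $k\times d$ matrix $\M(\W)$ whose $j$-th row equals $v_j\,\vct{u}^T\diag\!\big(\phi'(\X\w_j)\big)\X$, where $\phi'(\X\w_j)\in\R^n$ has $i$-th entry $\phi'(\w_j^T\x_i)$. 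Since $\opnorm{\mathcal{J}(\widetilde\W)-\mathcal{J}(\W)}=\sup_{\twonorm{\vct{u}}=1}\twonorm{(\mathcal{J}(\widetilde\W)-\mathcal{J}(\W))^T\vct{u}}$ and the latter norm equals $\fronorm{\M(\widetilde\W)-\M(\W)}$, it is enough to bound this Frobenius norm uniformly over unit $\vct{u}$.

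The key step is the per-unit estimate. The $j$-th row of $\M(\widetilde\W)-\M(\W)$ is $v_j\,\vct{u}^T\diag\!\big(\phi'(\X\widetilde\w_j)-\phi'(\X\w_j)\big)\X$, so
\begin{align*}
\fronorm{\M(\widetilde\W)-\M(\W)}^2
&=\sum_{j=1}^k v_j^2\,\twonorm{\X^T\diag\!\big(\phi'(\X\widetilde\w_j)-\phi'(\X\w_j)\big)\vct{u}}^2\\
&\le\opnorm{\X}^2\sum_{j=1}^k v_j^2\,\infnorm{\phi'(\X\widetilde\w_j)-\phi'(\X\w_j)}^2,
\end{align*}
where I used $\twonorm{\diag(\vct{b})\vct{u}}\le\infnorm{\vct{b}}\twonorm{\vct{u}}=\infnorm{\vct{b}}$. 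By the mean value theorem, $\abs{\phi''}\le M$, and $\twonorm{\x_i}=1$, each entry obeys $\abs{\phi'(\widetilde\w_j^T\x_i)-\phi'(\w_j^T\x_i)}\le M\abs{(\widetilde\w_j-\w_j)^T\x_i}\le M\twonorm{\widetilde\w_j-\w_j}$, hence $\infnorm{\phi'(\X\widetilde\w_j)-\phi'(\X\w_j)}\le M\twonorm{\widetilde\w_j-\w_j}$. Substituting, bounding $v_j^2\le\infnorm{\vct{v}}^2$, and summing gives $\fronorm{\M(\widetilde\W)-\M(\W)}^2\le M^2\infnorm{\vct{v}}^2\opnorm{\X}^2\sum_{j=1}^k\twonorm{\widetilde\w_j-\w_j}^2=M^2\infnorm{\vct{v}}^2\opnorm{\X}^2\fronorm{\widetilde\W-\W}^2$. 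Taking square roots and then the supremum over unit $\vct{u}$ proves the lemma.

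I do not anticipate a genuine obstacle. The only thing that must be done carefully is the order of the two reductions: the operator norm of the increment must first be decoupled across hidden units and $\opnorm{\X}$ pulled out \emph{before} any per-unit estimate, since bounding the $\diag$ factors first would introduce a spurious $\sqrt{k}$. Everything else is the one-line mean value estimate for $\phi'$.
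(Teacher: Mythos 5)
Your proof is correct. The paper does not actually prove this lemma itself — it imports it verbatim from \cite{oymak2019towards} as a preliminary — and your argument (reshape $\mathcal{J}(\W)^T\vct{u}$ into a $k\times d$ matrix, decouple across hidden units, pull out $\opnorm{\X}$, and apply the mean value theorem to $\phi'$ with $\twonorm{\x_i}=1$) is exactly the standard derivation used in that reference, so there is nothing to reconcile.
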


We will makes use of those lemmas and their proofs to evaluate the
parameters $\alpha,\beta, L$ for the ResNet. But we first need to analyze the Jacobian of the ResNet and express it in a convenient way.
This would be the subject of next section.

\section{The Jacobian Analysis of the ResNet Model}
\label{jacob_analysis}

We see from \eqref{Res_Net_eq} that 
the ResNet output $\frs$ depends on the weight matrices
${\mathbf W}^{(1)},\dots,{\mathbf W}^{(H)}$ and ${\mathbf a}$.
Here we take ${\mathbf a}$ to be a constant vector to which gradient
descent iterations do not apply, so the parameter set 
$\boldsymbol{\theta}$ of our ResNet model is taken to be $\boldsymbol{\theta}=[{\mathbf W}^{(1)} \dots {\mathbf W}^{(H)}].$ 
Hence the matrix product ${\mathcal J}(\boldsymbol{\theta}){\mathcal J}^T(\boldsymbol{\theta})\in {\mathbb R}^{n\times n}$ 
can be expressed as
\begin{align}
[{\mathcal J}(\boldsymbol{\theta}){\mathcal J}^T(\boldsymbol{\theta})]_{ij}&=\left\langle\frac{\partial \frs({\mathbf x}_i,\boldsymbol{\theta})}{\partial\boldsymbol{\theta}},\frac{\partial \frs({\mathbf x}_j,\boldsymbol{\theta})}{\partial\boldsymbol{\theta}}
\right\rangle\nonumber\\
&=\sum_{h=1}^H \left\langle\frac{\partial \frs({\mathbf x}_i,\boldsymbol{\theta})}{\partial{\mathbf W}^{(h)}},
\frac{\partial \frs({\mathbf x}_j,\boldsymbol{\theta})}{\partial{\mathbf W}^{(h)}}
\right\rangle
\nonumber\\&\triangleq
\sum_{h=1}^{H} {\mathbf G}_{ij}^{(h)},\label{posdef}
\end{align}
similarly to the matrix decomposition appearing in \cite{du2018gradient}. 
To compute $\frac{\partial \frs({\mathbf x},\boldsymbol{\theta})}{\partial{\mathbf W}^{(h)}}$ or ${\mathbf G}_{ij}^{(h)}$ for a given $h$, we consider
the sequence of partials $\frac{\partial{\mathbf x}^{(l)}}{\partial{\mathbf W}^{(1)}}\in {\mathbb R}^{m\times md}, \,l=1,\dots,H$ along with $\frac{\partial{\mathbf x}^{(l)}}{\partial{\mathbf W}^{(h)}}\in {\mathbb R}^{m\times m^2}, \,\,l,h=1,\dots,H$, and observe

\begin{align}
\frac{\partial{\mathbf x}^{(l)}}{\partial{\mathbf W}^{(h)}}
=\begin{cases}
0, &\text{if}\quad l<h\\
\sqrt{\frac{\cph}{m}}\,\,
\text{diag} [\phi'({\mathbf W}^{(1)} {\mathbf x})]
\begin{bmatrix}
{\mathbf x}^T & \boldsymbol{0} & \boldsymbol{0}\\
\vdots &\ddots & \vdots\\
\boldsymbol{0} & \boldsymbol{0} &{\mathbf x}^T \\
\end{bmatrix},
&\text{if}\quad l=h=1\\
\frac{\crs}{H\sqrt{m}}\,\,\text{diag} [\phi'({\mathbf W}^{(h)} {\mathbf x}^{(h-1)})]
\begin{bmatrix}
{{\mathbf x}^{(h-1)}}^T & \boldsymbol{0} & \boldsymbol{0}\\
\vdots &\ddots & \vdots\\
\boldsymbol{0} & \boldsymbol{0} &{{\mathbf x}^{(h-1)}}^T \\
\end{bmatrix},
&\text{if}\quad l=h>1\\
\left[{\mathbf I}_m+\frac{\crs}{H\sqrt{m}}
\text{diag} [\phi'({\mathbf W}^{(l)} {\mathbf x}^{(l-1)})]
{\mathbf W}^{(l)}
\right]\frac{\partial{\mathbf x}^{(l-1)}}{\partial{\mathbf W}^{(h)}}
, &\text{if}\quad l>h.
\end{cases}
\label{meta_partial}
\end{align}
It follows from \eqref{meta_partial} that the final layer partial
$\frac{\partial{\mathbf x}^{(H)}}{\partial{\mathbf W}^{(h)}}, h=1,\dots, H$ can be derived as
\begin{align}
\frac{\partial{\mathbf x}^{(H)}}{\partial{\mathbf W}^{(h)}}=
\begin{cases}
\mathlarger{\mathlarger{\mathlarger{\prod}}}_{l=2}^{H} \left[{\mathbf I}_m+\frac{\crs}{H\sqrt{m}}
\text{diag} [\phi'({\mathbf W}^{(l)} {\mathbf x}^{(l-1)})]
{\mathbf W}^{(l)}
\right]\\ \quad\cdot \sqrt{\frac{\cph}{m}}\,\,
\text{diag} [\phi'({\mathbf W}^{(1)} {\mathbf x})]
\begin{bmatrix}
{\mathbf x}^T & \boldsymbol{0} & \boldsymbol{0}\\
\vdots &\ddots & \vdots\\
\boldsymbol{0} & \boldsymbol{0} &{\mathbf x}^T \\
\end{bmatrix}, \quad\text{if}\quad h=1,\\
\mathlarger{\mathlarger{\mathlarger{\prod}}}_{l=h+1}^{H} \left[{\mathbf I}_m+\frac{\crs}{H\sqrt{m}}
\text{diag} [\phi'({\mathbf W}^{(l)} {\mathbf x}^{(l-1)})]
{\mathbf W}^{(l)}
\right]\\ \quad\cdot\frac{\crs}{H\sqrt{m}}\,\,\text{diag} [\phi'({\mathbf  W}^{(h)} {\mathbf x}^{(h-1)})]
\begin{bmatrix}
{{\mathbf x}^{(h-1)}}^T & \boldsymbol{0} & \boldsymbol{0}\\
\vdots &\ddots & \vdots\\
\boldsymbol{0} & \boldsymbol{0} &{{\mathbf x}^{(h-1)}}^T \\
\end{bmatrix}, \quad\text{if}\quad h>1. 
\end{cases}
\label{last_partial}
\end{align}
Then the partial derivative $\frac{\partial \frs({\mathbf x},\boldsymbol{\theta})}{\partial{\mathbf W}^{(h)}}$
can be computed as
$\frac{\partial \frs({\mathbf x},\boldsymbol{\theta})}{\partial{\mathbf W}^{(h)}}={\mathbf a}^T \frac{\partial{\mathbf x}^{(H)}}{\partial{\mathbf W}^{(h)}}$ for all layers $h=1,\dots,H$, where the partial $\frac{\partial{\mathbf x}^{(H)}}{\partial{\mathbf W}^{(h)}}$ is as given by \eqref{last_partial}. 
Consequently, we obtain

\begin{align}
{\mathbf G}_{ij}^{(h)}= 
\begin{cases}
{\mathbf a}^T
\mathlarger{\mathlarger{\mathlarger{\prod}}}_{l=2}^{H} \left[{\mathbf I}_m+\frac{\crs}{H\sqrt{m}}
\text{diag} [\phi'({\mathbf W}^{(l)} {\mathbf x}^{(l-1)}_i)]
{\mathbf W}^{(l)}
\right]\cdot \frac{\cph}{m}\,\,\text{diag} [\phi'({\mathbf  W}^{(1)} {\mathbf x}_i)]  \\ 
\quad\cdot\text{diag} [\phi'({\mathbf  W}^{(1)} {\mathbf x}_j)]\cdot
\mathlarger{\mathlarger{\mathlarger{\prod}}}_{l=2}^{H} \left[{\mathbf I}_m+\frac{\crs}{H\sqrt{m}}
{{\mathbf W}^{(l)}}^T 
\text{diag} [\phi'({\mathbf W}^{(l)} {\mathbf x}^{(l-1)}_j)]
\right]\,
{\mathbf a}\\ \quad
\cdot \left\langle
{\mathbf x}_i, {\mathbf x}_j\right\rangle,
\quad\text{if}\,\,h=1,\\
{\mathbf a}^T
\mathlarger{\mathlarger{\mathlarger{\prod}}}_{l=h+1}^{H} \left[{\mathbf I}_m+\frac{\crs}{H\sqrt{m}}
\text{diag} [\phi'({\mathbf W}^{(l)} {\mathbf x}_i^{(l-1)})]
{\mathbf W}^{(l)}
\right]\cdot\frac{\crs^2}{H^2 m}\,\,\text{diag} [\phi'({\mathbf  W}^{(h)} {\mathbf x}_i^{(h-1)})]  \\ \quad
\cdot \,\,\text{diag} [\phi'({\mathbf  W}^{(h)} {\mathbf x}_j^{(h-1)})]\,
\mathlarger{\mathlarger{\mathlarger{\prod}}}_{l=h+1}^{H} \left[{\mathbf I}_m+\frac{\crs}{H\sqrt{m}}
{{\mathbf W}^{(l)}}^T 
\text{diag} [\phi'({\mathbf W}^{(l)} {\mathbf x}_j^{(l-1)})]
\right]
{\mathbf a}\\
\quad\cdot \left\langle
{\mathbf x}^{(h-1)}_i, {\mathbf x}^{(h-1)}_j\right\rangle,\quad
\text{if}\,\,h>1.
\end{cases}
\label{entry_each_layer}
\end{align}

Under the bounded derivative assumption $|\phi'(z)|\leq B$ appearing in the statement of Lemma \ref{spectJ}, we conclude from \eqref{entry_each_layer} that
\begin{align}
\left|{\mathbf G}_{ij}^{(h)}\right|\leq
\begin{cases}
\|{\mathbf a}\|^2_{2}B^2\mathlarger{\frac{\cph}{m}}\,\,
\mathlarger{\mathlarger{\prod}}_{l=2}^H \left(1+\frac{\crs}{H\sqrt{m}}B
\|{\mathbf W}^{(l)}\|\right)^2 \left|\left\langle
{\mathbf x}_i, {\mathbf x}_j\right\rangle\right|, 
\quad &\text{if}\,\,h=1,\\
\|{\mathbf a}\|^2_{2}B^2\mathlarger{\frac{c^2_{res}}{H^2 m}}\,\,
\mathlarger{\mathlarger{\prod}}_{l=h+1}^H \left(1+\frac{\crs}{H\sqrt{m}}B
\|{\mathbf W}^{(l)}\|\right)^2 \left|\left\langle
{\mathbf x}^{(h-1)}_i, {\mathbf x}^{(h-1)}_j \right\rangle\right|, 
\quad &\text{if}\,\,h>1.
\end{cases}
\label{entry_ineq}
\end{align}

Now we are in a position to analyze $\alpha, \beta$ and
$L$ appearing in Theorem~\ref{GDthm} for the ResNet case.
Our first step would be to evaluate $\alpha$ for
random initialization of weight matrices,  
$\beta$ for a given $\boldsymbol\theta,$ and $L$
for a given pair of parameter sets $(\widetilde{\boldsymbol\theta}, {\boldsymbol\theta}).$
Then we will extend those results derived for single parameters to the ball ${\mathcal B}\left({\boldsymbol\theta}_0, \frac{4\twonorm{f({\boldsymbol\theta}_0)-{\mathbf y}}}{\alpha}\right)$ described in Theorem~\ref{GDthm}.

\subsection{Calculate lower bound}
\label{alpha_derivation}
We start our analysis with $\alpha$ and examine \eqref{posdef} for that purpose. We assume the weight matrices ${\mathbf W}^{(1)},\dots,{\mathbf W}^{(H)}$ are initialized with i.i.d. ${\mathcal N}(0,1)$ entries. Each matrix
${\mathbf G}^{(h)}$ in \eqref{posdef} is an inner product matrix, so
they have to be positive semidefinite. This would imply $\phi_{\min}({\mathcal J}(\boldsymbol{\theta}){\mathcal J}^T(\boldsymbol{\theta}))\geq 
\phi_{\min}({\mathbf G}^{(h)})$ for all $h=1,\dots, H.$ In particular, we
have $\phi_{\min}({\mathcal J}(\boldsymbol{\theta}){\mathcal J}^T(\boldsymbol{\theta}))\geq \phi_{\min}(
{\mathbf G}^{(1)}).$ Thus any lower bound for
$\phi_{\min}({\mathbf G}^{(1)})$ would be a lower bound for $\phi_{\min}({\mathcal J}(\boldsymbol{\theta}){\mathcal J}^T(\boldsymbol{\theta}))$ as well. The formula for the entries 
${\mathbf G}^{(1)}_{ij}$ of ${\mathbf G}^{(1)}$ is given by
\eqref{entry_each_layer}, and can be rewritten as

\begin{align}
{\mathbf G}_{ij}^{(1)}&=\frac{c_{\phi}}{m}\left\{
\left[
{\mathbf a}^T
\mathlarger{\mathlarger{\mathlarger{\prod}}}_{l=2}^{H} \left({\mathbf I}_m+\frac{\crs}{H\sqrt{m}}
\text{diag} [\phi'({\mathbf W}^{(l)} {\mathbf x}^{(l-1)}_i)]
{\mathbf W}^{(l)}
\right)\right] \odot [\phi'({\mathbf  W}^{(1)} {\mathbf x}_i)]^T
\right\} \nonumber\\  
&\quad\cdot
\left\{
[\phi'({\mathbf  W}^{(1)} {\mathbf x}_j)]\odot\left[
\mathlarger{\mathlarger{\mathlarger{\prod}}}_{l=2}^{H} \left({\mathbf I}_m+\frac{\crs}{H\sqrt{m}}
{{\mathbf W}^{(l)}}^T 
\text{diag} [\phi'({\mathbf W}^{(l)} {\mathbf x}^{(l-1)}_j)]\right)
\,{\mathbf a}\right] 
\right\} \left\langle
{\mathbf x}_i, {\mathbf x}_j\right\rangle
\label{entry_formula}
\end{align}

Denoting the data matrix having rows ${\mathbf x}_1,\dots,{\mathbf x}_n$ as ${\mathbf X}\in{\mathbb R}^{n\times d}$, and denoting
the matrix consisting of the rows ${\mathbf a}^T
\mathlarger{\mathlarger{\mathlarger{\prod}}}_{l=2}^{H} \left({\mathbf I}_m+\frac{\crs}{H\sqrt{m}}
\text{diag} [\phi'({\mathbf W}^{(l)} {\mathbf x}^{(l-1)}_i)]
{\mathbf W}^{(l)}
\right), i=1,\dots,n$ as ${\mathbf A}(\boldsymbol{\theta},{\mathbf X})\in{\mathbb R}^{n\times m}$,
it follows from \eqref{entry_formula} that 
\begin{align}
{\mathbf G}^{(1)}=\frac{c_{\phi}}{m}\left[
\left({\mathbf A}(\boldsymbol{\theta},{\mathbf X})\odot 
\phi'({\mathbf X} {{\mathbf W}^{(1)}}^T)\right)\cdot
\left({\mathbf A}(\boldsymbol{\theta},{\mathbf X})\odot 
\phi'({\mathbf X} {{\mathbf W}^{(1)}}^T)\right)^T
\right]\odot({\mathbf X}{\mathbf X}^T). 
\label{g1_compact}
\end{align}
Then let the rows of ${\mathbf W}^{(1)}$ be ${\mathbf w}_1,\dots,{\mathbf w}_{m}$ and the columns of ${\mathbf A}(\boldsymbol\theta,{\mathbf X})$ be ${\mathbf a}_1,\dots,{\mathbf a}_m$ to rewrite \eqref{g1_compact} as 
\begin{align}
{\mathbf G}^{(1)}=\frac{c_{\phi}}{m}\left(\sum_{l=1}^m [{\mathbf a}_l \odot \phi'({\mathbf X}{\mathbf w}_l)]
\cdot [{\mathbf a}_l \odot \phi'({\mathbf X}{\mathbf w}_l)]^T 
\right)\odot({\mathbf X}{\mathbf X}^T).
\label{g2_compact}
\end{align}
To analyze the Hadamard product given by \eqref{g2_compact}, we first focus on the term $[{\mathbf a}_l \odot \phi'({\mathbf X}{\mathbf w}_l)]\cdot [{\mathbf a}_l \odot \phi'({\mathbf X}{\mathbf w}_l)]^T$, and observe
\begin{gather}
[{\mathbf a}_l \odot \phi'({\mathbf X}{\mathbf w}_l)]\cdot [{\mathbf a}_l \odot \phi'({\mathbf X}{\mathbf w}_l)]^T
=({\mathbf a}_l {\mathbf a}^T_l)\odot
(\phi'({\mathbf X}{\mathbf w}_l) \phi'({\mathbf X}{\mathbf w}_l)^T)\nonumber\\
{\mathbf G}^{(1)}=\frac{c_{\phi}}{m} \mathlarger{\mathlarger{\sum}}_{l=1}^m ({\mathbf a}_l {\mathbf a}^T_l)\odot \left[(\phi'({\mathbf X}{\mathbf w}_l) \phi'({\mathbf X}{\mathbf w}_l)^T) \odot({\mathbf X}{\mathbf X}^T)\right]
\label{g1_expansion}
\end{gather}


We note that the randomness of the product 
$\left[(\phi'({\mathbf X}{\mathbf w}_l) \phi'({\mathbf X}{\mathbf w}_l)^T) \odot({\mathbf X}{\mathbf X}^T)\right]$ is due to the weight matrix ${\mathbf W}^{(1)}$ only, whereas the matrix ${\mathbf a}_l{\mathbf a}_l^T$ depends on the matrices ${\mathbf W}^{(2)},\dots,{\mathbf W}^{(H)}.$ Thus the expectation taken over the random initialization
of ${\mathbf W}^{(1)},\dots,{\mathbf W}^{(H)}$ 
can be expressed as
\begin{align}
{\mathbb E}\left\{
({\mathbf a}_l {\mathbf a}^T_l)\odot \left[(\phi'({\mathbf X}{\mathbf w}_l) \phi'({\mathbf X}{\mathbf w}_l)^T) \odot({\mathbf X}{\mathbf X}^T)\right] \right\} 
&= {\mathbb E}({\mathbf a}_l {\mathbf a}^T_l) \odot
{\mathbb E}\left[(\phi'({\mathbf X}{\mathbf w}_l) \phi'({\mathbf X}{\mathbf w}_l)^T) \odot({\mathbf X}{\mathbf X}^T)\right]
\nonumber\\
&= {\mathbb E}({\mathbf a}_l {\mathbf a}^T_l) \odot \Sigma({\mathbf X}) \label{sigmaX}
\end{align}
where we take $\Sigma({\mathbf X})\triangleq {\mathbb E}\left[(\phi'({\mathbf X}{\mathbf w}_l) \phi'({\mathbf X}{\mathbf w}_l)^T) \odot({\mathbf X}{\mathbf X}^T)\right]$ as defined in 
Lemma~\ref{minspectJ}. As a consequence of \eqref{sigmaX}, we have
\begin{align}
{\mathbb E}\left\{
\frac{1}{m}
\mathlarger{\mathlarger{\sum}}_{l=1}^m ({\mathbf a}_l {\mathbf a}^T_l)\odot \left[(\phi'({\mathbf X}{\mathbf w}_l) \phi'({\mathbf X}{\mathbf w}_l)^T) \odot({\mathbf X}{\mathbf X}^T)\right]
\right\}
=\left[\frac{1}{m} \mathlarger{\mathlarger{\sum}}_{l=1}^m {\mathbb E}({\mathbf a}_l {\mathbf a}^T_l)\right] 
\odot \Sigma({\mathbf X})
\label{sigmaX2}
\end{align}

Then we lower bound \cite{Schur1911} the minimum eigenvalue of \eqref{sigmaX2} as 
\begin{align}
\phi_{\min}\left[
{\mathbb E}\left\{
\frac{1}{m}
\mathlarger{\mathlarger{\sum}}_{l=1}^m ({\mathbf a}_l {\mathbf a}^T_l)\odot \left[(\phi'({\mathbf X}{\mathbf w}_l) \phi'({\mathbf X}{\mathbf w}_l)^T) \odot({\mathbf X}{\mathbf X}^T)\right]
\right\} \right]\geq \min_i
\frac{1}{m}\left[
\mathlarger{\mathlarger{\sum}}_{l=1}^m
{\mathbb E}|{\mathbf a}_l(i)|^2 \right] \,\, \lambda({\mathbf X}) 
\label{schur_app}
\end{align}
where $\lambda({\mathbf X})= \phi_{\min}(\Sigma({\mathbf X}))$
is as defined in Lemma~\ref{minspectJ}.

To estimate the term $\sum_{l=1}^m {\mathbb E}|{\mathbf a}_l(i)|^2$, we consider the rows of
${\mathbf A}(\boldsymbol\theta,{\mathbf X})$, the norms of which
can be bounded as
\begin{align}
&\left\|{\mathbf a}^T \mathlarger{\mathlarger{\mathlarger{\prod}}}_{j=2}^H 
\left({\mathbf I}_m+\frac{\crs}{H\sqrt{m}}
\text{diag} [\phi'({\mathbf W}^{(j)} {\mathbf x}^{(j-1)}_i)]
{\mathbf W}^{(j)} \right)\right\|_2^2\nonumber\\
\quad&
\geq
\|{\mathbf a}\|^2_2 \mathlarger{\mathlarger{\mathlarger{\prod}}}_{j=2}^H \phi^2_{\min}\left({\mathbf I}_m+\frac{\crs}{H\sqrt{m}}
\text{diag} [\phi'({\mathbf W}^{(j)} {\mathbf x}^{(j-1)}_i)]
{\mathbf W}^{(j)} \right)
\geq\|{\mathbf a}\|^2_2 \mathlarger{\mathlarger{\mathlarger{\prod}}}_{j=2}^H
\left(1-\frac{B \crs}{H \sqrt{m}}\|{\mathbf W}^{(j)}\|\right)^2
\remove{
&\leq \|{\mathbf a}\|^2_2 \mathlarger{\mathlarger{\mathlarger{\prod}}}_{j=2}^H \phi^2_{\max}\left({\mathbf I}_m+\frac{\crs}{H\sqrt{m}}
\text{diag} [\phi'({\mathbf W}^{(j)} {\mathbf x}^{(j-1)}_i)]
{\mathbf W}^{(j)} \right)
\leq
\|{\mathbf a}\|^2_2 \mathlarger{\mathlarger{\mathlarger{\prod}}}_{j=2}^H
\left(1+\frac{B \crs}{H \sqrt{m}}\|{\mathbf W}^{(j)}\|\right)^2}
\label{norm_ineq1}
\end{align}
Taking the expectation of both sides in \eqref{norm_ineq1}, we get
\begin{align}
\mathlarger{\mathlarger{\sum}}_{l=1}^m
{\mathbb E}|{\mathbf a}_l(i)|^2 
\geq \|{\mathbf a}\|^2_2 \mathlarger{\mathlarger{\mathlarger{\prod}}}_{j=2}^H
{\mathbb E}\left(1-\frac{B \crs}{H \sqrt{m}}\|{\mathbf W}^{(j)}\|\right)^2
\geq \|{\mathbf a}\|^2_2 \mathlarger{\mathlarger{\mathlarger{\prod}}}_{j=2}^H
\left(1-\frac{B \crs}{H \sqrt{m}}{\mathbb E}\|{\mathbf W}^{(j)}\|\right)^2
\label{norm_ineq2}
\end{align}

Since each entry of the weight matrix ${\mathbf W}^{(j)}$ is
sampled with the standard normal distribution, Gordon's theorem for Gaussian matrices gives us
$\frac{{\mathbb E}\|{\mathbf W}^{(j)}\|}{\sqrt{m}}\leq 2,$ for all $j=1,\dots,H.$ 
Therefore the inequality
\begin{align}
\|{\mathbf a}\|^2_2 \mathlarger{\mathlarger{\mathlarger{\prod}}}_{j=2}^H
\left(1-\frac{B \crs}{H \sqrt{m}}{\mathbb E}\|{\mathbf W}^{(j)}\|\right)^2
\geq \|{\mathbf a}\|^2_2
\mathlarger{\mathlarger{\mathlarger{\prod}}}_{j=2}^H
\left(1-\frac{2B \crs}{H} \right)^2
\geq \|{\mathbf a}\|^2_2 (1-\delta') {\text e}^{-4B\crs}
\remove{
{\text e}^{-4B(1+\delta)\crs}\leq
\mathlarger{\mathlarger{\mathlarger{\prod}}}_{j=2}^H
\left(1-\frac{2B(1+\delta) \crs}{H}\right)^2
\leq{\mathbb E}\mathlarger{\mathlarger{\mathlarger{\prod}}}_{j=2}^H
\left(1-\frac{B \crs}{H \sqrt{m}}\|{\mathbf W}^{(j)}\|\right)^2\nonumber\\
{\mathbb E}\mathlarger{\mathlarger{\mathlarger{\prod}}}_{j=2}^H
\left(1+\frac{B \crs}{H \sqrt{m}}\|{\mathbf W}^{(j)}\|\right)^2
\leq \mathlarger{\mathlarger{\mathlarger{\prod}}}_{j=2}^H
\left(1+\frac{2B(1+\delta)\crs}{H}\right)^2 \leq
{\text e}^{4B(1+\delta)\crs}
}
\label{norm_ineq3}
\end{align}
hold true for any given $\delta'$ between $0$ and $1$, and
$H$ being a function of $\delta'$ and large enough. 
Combining \eqref{norm_ineq2} with \eqref{norm_ineq3},
we observe that expected squared norm of each row of ${\mathbf A}(\boldsymbol\theta,{\mathbf X})$ is greater than 
$\|{\mathbf a}\|^2_2 (1-\delta') e^{-4B\crs}.$ \remove{and
$\|{\mathbf a}\|^2_2 e^{4B(1+\delta) \crs}.$} Hence we get the inequality
\begin{align}
\frac{1}{m}\sum_{l=1}^m {\mathbb E}|{\mathbf a}_l(i)|^2\geq 
(1-\delta')\frac{\|{\mathbf a}\|^2_2  
e^{-4B \crs}}{m}
\label{concentration}
\end{align}
for any given $i$, meaning that 
\begin{align}
\min_i \frac{1}{m}\left[\sum_{l=1}^m {\mathbb E}|{\mathbf a}_l(i)|^2\right] \geq (1-\delta')
\frac{\|{\mathbf a}\|^2_2  
e^{-4B \crs}}{m} 
\end{align}
is satisfied.
Then \eqref{schur_app} gives us
\begin{align}
\phi_{\min}\left[
{\mathbb E}\left\{
\frac{1}{m}
\mathlarger{\mathlarger{\sum}}_{l=1}^m ({\mathbf a}_l {\mathbf a}^T_l)\odot \left[(\phi'({\mathbf X}{\mathbf w}_l) \phi'({\mathbf X}{\mathbf w}_l)^T) \odot({\mathbf X}{\mathbf X}^T)\right]
\right\} \right]\geq (1-\delta')
\frac{\|{\mathbf a}\|^2_2  e^{-4B \crs}}{m}
\,\, \lambda({\mathbf X}).  \label{schur_app2} 
\end{align}
Applying matrix Chernoff bound for \eqref{g2_compact} yields the result 
\begin{align}
\phi_{\min}({\mathbf G}^{(1)}) \geq
(1-\delta')^2
c_{\phi}\frac{\|{\mathbf a}\|^2_2 e^{-4 B \crs}}{m} \lambda({\mathbf X})
\end{align}
with an exponentially high probability of the form 
$1-\kappa_1 m\,e^{-\kappa_2 m\delta'}$, for some constants $\kappa_1,\kappa_2.$
Then we lower bound the $\phi_{\min}[{\mathcal J}(\boldsymbol\theta_0)]$ corresponding to the random initialization $\boldsymbol\theta_0$ of the weight matrices as

\begin{align}
\phi_{\min}[{\mathcal J}(\boldsymbol\theta_0)]&=
\left(\phi_{\min}[{\mathcal J}(\boldsymbol\theta_0){\mathcal J}^T(\boldsymbol\theta_0)]\right)^{\frac{1}{2}}
\nonumber\\
&\geq 
\left(\phi_{\min}({\mathbf G}^{(1)})\right)^{\frac{1}{2}}
\nonumber\\
&\geq (1-\delta')\sqrt{\frac{c_{\phi}}{m}} 
\|{\mathbf a}\|_2 e^{-2B \crs} \sqrt{\lambda({\mathbf X})}.
\label{alpha_final_ineq}
\end{align}
with a high probability described above,
where $\delta'$ is any positive number less than $1,$ 
and $H$ depends on $\delta', B$ and $\crs$ only and 
sufficiently large.
So we can take the lower bound $\alpha_0$ for
the random initialization $\boldsymbol\theta_0$ to be

\begin{align}
\alpha_0=(1-\delta')\sqrt{\frac{c_{\phi}}{m}} 
\|{\mathbf a}\|_2 e^{-2B \crs} \sqrt{\lambda({\mathbf X})}.\label{alpha_final}    
\end{align}

The constant $\sqrt{\frac{c_{\phi}}{m}}$ appearing in \eqref{alpha_final} is due to the normalization applied
after the first layer, see \eqref{Res_Net_eq}. Moreover
comparing \eqref{alpha_final} with the $\alpha$ term stated in Lemma~\ref{minspectJ}, we observe there is an additional factor of $e^{-2B \crs}$ because of the ResNet structure.

\subsection{Calculate upper bound}

Our next step would be to focus on the $\beta$ parameter. We see from  \eqref{posdef} that $\|{\mathcal J}(\boldsymbol{\theta}){\mathcal J}^T(\boldsymbol{\theta}) \| \leq \sum_{h=1}^{H}\|{\mathbf G}^{(h)}\|$
holds, and derive
\begin{align}
\|{\mathcal J}(\boldsymbol{\theta}){\mathcal J}^T(\boldsymbol{\theta}) \| &\leq \sum_{h=1}^{H}\|{\mathbf G}^{(h)}\| 
\leq
\sum_{h=1}^{H}\|{\mathbf G}^{(h)}\|_F
=\sum_{h=1}^H\left(\sum_{i=1}^n\sum_{j=1}^n 
\left|{\mathbf G}_{ij}^{(h)}\right|^2\right)^{1/2}.
\label{sigma_max_bd}
\end{align}
Then inserting the inequality \eqref{entry_ineq} in \eqref{sigma_max_bd}, 
and letting the data matrix for the output of $k$-th layer be denoted as ${\mathbf X}^{(k)},$ we write
\begin{align}
\|{\mathcal J}(\boldsymbol{\theta}){\mathcal J}^T(\boldsymbol{\theta}) \|&\leq 
\|{\mathbf a}\|^2_{2}B^2\mathlarger{\frac{\cph}{m}}\,\,
\mathlarger{\mathlarger{\prod}}_{l=2}^H \left(1+\frac{\crs}{H\sqrt{m}}B
\|{\mathbf W}^{(l)}\|\right)^2\|{\mathbf X}{\mathbf X}^{T}\|_F
\nonumber\\
&\quad+\|{\mathbf a}\|^2_{2}B^2\mathlarger{\frac{\crs^2}{H^2 m}}\mathlarger{\mathlarger{\sum}}_{h=2}^H \,\,
\mathlarger{\mathlarger{\prod}}_{l=h+1}^H \left(1+\frac{\crs}{H\sqrt{m}}B
\|{\mathbf W}^{(l)}\|\right)^2 \|{\mathbf X}^{(h-1)}
{{\mathbf X}^{(h-1)}}^{T}\|_F\label{beta_for_resnet}
\end{align}

Consequently, $\beta$ for the ResNet model can be taken as the square root of the right hand side of \eqref{beta_for_resnet}. In order to
calculate $\beta$ more precisely, we need to express or upper bound the norm $\|{\mathbf X}^{(h-1)} {{\mathbf X}^{(h-1)}}^{T}\|_F$ appearing in \eqref{beta_for_resnet}. Therefore we write
\begin{align}
\left\|{\mathbf X}^{(h-1)} {{\mathbf X}^{(h-1)}}^{T}\right\|_F\leq 
\left\|{\mathbf X}^{(h-1)}\right\|_F^2=\sum_{i=1}^n \|{\mathbf x}_i^{(h-1)}\|_2^2
\label{frob_upper}
\end{align}
Then each squared norm of the $(h-1)$-th layer output contributing to
the summation in \eqref{frob_upper} can be upper bounded as
\begin{align}
\|{\mathbf x}_i^{(h-1)}\|_2 &\leq \sqrt{\frac{c_{\phi}}{m}}B
\|{\mathbf W}^{(1)}\|\|{\mathbf x_i}\|_2\left(1+ 
\mathlarger{\mathlarger{\sum}}_{\substack{\{h_1,\dots,h_k\}\\
\subseteq\{2,\dots,h-1\}}} \left(\frac{\crs}{H\sqrt{m}}\right)^k
B^k \|{\mathbf W}^{(h_k)}\|\dots \|{\mathbf W}^{(h_1)}\|
\right)
\nonumber\\
&=\sqrt{\frac{c_{\phi}}{m}}B
\|{\mathbf W}^{(1)}\|\|{\mathbf x_i}\|_2
\mathlarger{\mathlarger{\prod}}_{j=2}^{h-1}\left(1+
\frac{\crs}{H\sqrt{m}}B \|{\mathbf W}^{(j)}\|\right)
\label{frob_upper2}
\end{align}
using \eqref{ResNet_uppbd}. Combining \eqref{frob_upper} with
\eqref{frob_upper2}, we get
\begin{align}
\left\|{\mathbf X}^{(h-1)} {{\mathbf X}^{(h-1)}}^{T}\right\|_F
\leq \frac{c_{\phi}}{m} B^2   
\|{\mathbf W}^{(1)}\|^2 \mathlarger{\mathlarger{\prod}}_{j=2}^{h-1}\left(1+
\frac{\crs}{H\sqrt{m}}B \|{\mathbf W}^{(j)}\|\right)^2
\|{\mathbf X}\|_F^2. \label{frob_upper3}
\end{align}
Then we insert \eqref{frob_upper3} in \eqref{beta_for_resnet} to conclude
\begin{align}
\|{\mathcal J}(\boldsymbol\theta){\mathcal J}^T(\boldsymbol\theta)\|
\leq \left(\|{\mathbf a}\|^2_{2}B^2\mathlarger{\frac{\cph}{m}}\,\,
+\|{\mathbf a}\|^2_{2}B^4
\|{\mathbf W}^{(1)}\|^2\mathlarger{\frac{c_{\phi}\crs^2}{H m^2}}
\right)\mathlarger{\mathlarger{\prod}}_{l=2}^H \left(1+\frac{\crs}{H\sqrt{m}}B
\|{\mathbf W}^{(l)}\|\right)^2
\|{\mathbf X}\|_F^2.\label{beta_inter}
\end{align}
Letting the set of inequalities $\|{\mathbf W}^{(j)}\|\leq A, j=1,\dots,H$
be satisfied for some $A$, it follows from
\eqref{beta_inter} that
\begin{align}
\|{\mathcal J}(\boldsymbol\theta){\mathcal J}^T(\boldsymbol\theta)\|
\leq \left(\|{\mathbf a}\|^2_{2}B^2\mathlarger{\frac{\cph}{m}}\,\,
+\|{\mathbf a}\|^2_{2}B^4
A^2\mathlarger{\frac{c_{\phi}\crs^2}{H m^2}}\right) 
e^{\frac{2AB\crs}{\sqrt{m}}} \|{\mathbf X}\|_F^2  
\end{align}
Hence using the inequality $\sqrt{a^2+b^2}\leq a+b$ for positive $a$ and $b$, $\beta$ for the ResNet can be taken as
\begin{align}
\beta=\|{\mathbf a}\|_{2}\left(B\sqrt{\frac{\cph}{m}}\,\,
+A\,B^2 
\frac{\sqrt{c_{\phi}}\crs}{\sqrt{H} m}\right) 
e^{\frac{AB\crs}{\sqrt{m}}}
\|{\mathbf X}\|_F\label{beta_final}
\end{align}

We see from \eqref{beta_final} that $\beta$ satisfies $\beta=O(\|{\mathbf a}\|_{2}\exp(B)\|{\mathbf X}\|),$ as opposed to the result for the single hidden layer model $\beta=O(\|{\mathbf a}\|_{2}B\|{\mathbf X}\|)$ given by Lemma~\ref{spectJ}. Note also that $\beta$ does not increase with the number of layers $H$, meaning that larger depths for the ResNet does not have a negative effect on the norm of $\|{\mathcal J}(\boldsymbol{\theta})\|.$

\subsection{Calculate Lipschitzness}

We continue our analysis with finding a Lipschitz constant $L$ for the Jacobian ${\mathcal J}(\boldsymbol{\theta})$ of the ResNet. In order to derive such a constant $L$, we need to analyze the norm of the difference 
${\mathcal J}(\widetilde{\boldsymbol\theta})-{\mathcal J} (\boldsymbol\theta)$ given by the formula
\begin{align}
{\mathcal J}(\widetilde{\boldsymbol\theta})-{\mathcal J} (\boldsymbol\theta)=
\begin{bmatrix}
{\mathbf a}^T\left(\frac{\partial{\mathbf x}_1^{(H)}}{\partial{\mathbf W}^{(1)}}[\widetilde{\boldsymbol\theta}]-\frac{\partial{\mathbf x}_1^{(H)}}{\partial{\mathbf W}^{(1)}}[{\boldsymbol\theta}]\right) & \dots & {\mathbf a}^T\left(\frac{\partial{\mathbf x}_1^{(H)}}{\partial{\mathbf W}^{(H)}}[\widetilde{\boldsymbol\theta}]-\frac{\partial{\mathbf x}_1^{(H)}}{\partial {\mathbf W}^{(H)}}[\boldsymbol\theta]\right)\\
\vdots & \dots & \vdots \\
{\mathbf a}^T\left(\frac{\partial{\mathbf x}_n^{(H)}}{\partial{\mathbf W}^{(1)}}[\widetilde{\boldsymbol\theta}]-\frac{\partial{\mathbf x}_n^{(H)}}{\partial {\mathbf W}^{(1)}}[\boldsymbol\theta]\right) & \dots & {\mathbf a}^T\left(\frac{\partial{\mathbf x}_n^{(H)}}{\partial{\mathbf W}^{(H)}}[\widetilde{\boldsymbol\theta}]
-\frac{\partial{\mathbf x}_n^{(H)}}{\partial {\mathbf W}^{(H)}}[{\boldsymbol\theta}]\right)
\end{bmatrix}
\label{jacob_diff}
\end{align}
where $\frac{\partial{\mathbf x}_i^{(H)}}{\partial{\mathbf W}^{(1)}}[\widetilde{\boldsymbol\theta}]-\frac{\partial{\mathbf x}_i^{(H)}}{\partial{\mathbf W}^{(1)}}[{\boldsymbol\theta}]\in {\mathbb R}^{m\times md}$ and $\frac{\partial{\mathbf x}_i^{(H)}}{\partial{\mathbf W}^{(h)}}[\widetilde{\boldsymbol\theta}]-\frac{\partial{\mathbf x}_i^{(H)}}{\partial{\mathbf W}^{(h)}}[{\boldsymbol\theta}]\in {\mathbb R}^{m\times m^2}$, for all $i=1,\dots,n$ and $h=2,\dots,H$, as stated in the beginning of the section. The norm of \eqref{jacob_diff} can be upper bounded as
\begin{align}
\|{\mathcal J}(\widetilde{\boldsymbol\theta})-{\mathcal J} (\boldsymbol\theta)\|^2 &\leq 
\|{\mathcal J}(\widetilde{\boldsymbol\theta})-{\mathcal J} (\boldsymbol\theta)\|^2_F \nonumber\\&= \mathlarger{\mathlarger{\sum}}_{i=1}^n \mathlarger{\mathlarger{\sum}}_{h=1}^H 
\left\|{\mathbf a}^T\left(\frac{\partial{\mathbf x}_i^{(H)}}{\partial{\mathbf W}^{(h)}}[\widetilde{\boldsymbol\theta}]-\frac{\partial{\mathbf x}_i^{(H)}}{\partial{\mathbf W}^{(h)}}[\boldsymbol\theta]\right)\right\|_2^2 \nonumber\\
\|{\mathcal J}(\widetilde{\boldsymbol\theta})-{\mathcal J} (\boldsymbol\theta)\| &\leq 
\mathlarger{\mathlarger{\sum}}_{h=1}^H
\sqrt{
\mathlarger{\mathlarger{\sum}}_{i=1}^n 
\left\|{\mathbf a}^T\left(\frac{\partial{\mathbf x}_i^{(H)}}{\partial{\mathbf W}^{(h)}}[\widetilde{\boldsymbol\theta}]-\frac{\partial{\mathbf x}_i^{(H)}}{\partial{\mathbf W}^{(h)}}[\boldsymbol\theta]\right)\right\|^2_2
}
\label{L_ineq_first}
\end{align}
where we use the basic inequality $\sqrt{b_1^2+\dots+b_H^2}\leq
|b_1|+\dots+|b_H|$ in \eqref{L_ineq_first}.

The next step is to consider the difference of partials $\frac{\partial{\mathbf x}_i^{(H)}}{\partial{\mathbf W}^{(h)}}[\widetilde{\boldsymbol\theta}]-\frac{\partial{\mathbf x}_i^{(H)}}{\partial{\mathbf W}^{(h)}}[\boldsymbol\theta].$  
We start with the case $h=1.$
Recall from \eqref{last_partial} that
\begin{align*}
\frac{\partial{\mathbf x}_i^{(H)}}{\partial{\mathbf W}^{(1)}}
[\boldsymbol\theta]
=
\mathlarger{\mathlarger{\mathlarger{\prod}}}_{l=2}^{H} \left[{\mathbf I}_m+\frac{\crs}{H\sqrt{m}}
\text{diag} [\phi'({\mathbf W}^{(l)} {\mathbf x}_i^{(l-1)}(\boldsymbol\theta))]
{\mathbf W}^{(l)}
\right]\cdot \sqrt{\frac{\cph}{m}}\,\,
\text{diag} [\phi'({\mathbf W}^{(1)} {\mathbf x}_i)]
\begin{bmatrix}
{\mathbf x}_i^T & \boldsymbol{0} & \boldsymbol{0}\\
\vdots &\ddots & \vdots\\
\boldsymbol{0} & \boldsymbol{0} &{\mathbf x}_i^T \\
\end{bmatrix}.    
\end{align*}
Then we can expand the expression  $\frac{\partial{\mathbf x}_i^{(H)}}{\partial{\mathbf W}^{(1)}}[\widetilde{\boldsymbol\theta}]-\frac{\partial{\mathbf x}_i^{(H)}}{\partial{\mathbf W}^{(1)}}[\boldsymbol\theta]$ as
\begin{align}
&\frac{\partial{\mathbf x}_i^{(H)}}{\partial{\mathbf W}^{(1)}}[\widetilde{\boldsymbol\theta}]-\frac{\partial{\mathbf x}_i^{(H)}}{\partial{\mathbf W}^{(1)}}[\boldsymbol\theta]
=
\mathlarger{\mathlarger{\mathlarger{\prod}}}_{l=2}^{H} \left[{\mathbf I}_m+\frac{\crs}{H\sqrt{m}}
\text{diag} [\phi'(\widetilde{{\mathbf W}}^{(l)} {\mathbf x}_i^{(l-1)}(\widetilde{\boldsymbol\theta}))]
\widetilde{{\mathbf W}}^{(l)}
\right]\nonumber\\&\,\,\cdot \sqrt{\frac{\cph}{m}}\,\,
\left(\text{diag} [\phi'(\widetilde{{\mathbf W}}^{(1)} {\mathbf x}_i)
]
-\text{diag} 
[\phi'({\mathbf W}^{(1)} {\mathbf x}_i)]\right)
\begin{bmatrix}
{\mathbf x}_i^T & \boldsymbol{0} & \boldsymbol{0}\\
\vdots &\ddots & \vdots\\
\boldsymbol{0} & \boldsymbol{0} &{\mathbf x}_i^T \\
\end{bmatrix}
\nonumber\\&\,\,
+\Bigg\{\mathlarger{\mathlarger{\mathlarger{\prod}}}_{l=2}^{H} \left[{\mathbf I}_m+\frac{\crs}{H\sqrt{m}}
\text{diag} [\phi'(\widetilde{{\mathbf W}}^{(l)} {\mathbf x}_i^{(l-1)}(\widetilde{\boldsymbol\theta}))]
\widetilde{{\mathbf W}}^{(l)}
\right]\nonumber\\&\quad- \mathlarger{\mathlarger{\mathlarger{\prod}}}_{l=2}^{H} \left[{\mathbf I}_m+\frac{\crs}{H\sqrt{m}}
\text{diag} [\phi'({\mathbf W}^{(l)} {\mathbf x}_i^{(l-1)}(\boldsymbol\theta))]
{\mathbf W}^{(l)}\right]\Bigg\}
\cdot \sqrt{\frac{\cph}{m}}\text{diag} 
[\phi'({\mathbf W}^{(1)} {\mathbf x}_i)]
\begin{bmatrix}
{\mathbf x}_i^T & \boldsymbol{0} & \boldsymbol{0}\\
\vdots &\ddots & \vdots\\
\boldsymbol{0} & \boldsymbol{0} &{\mathbf x}_i^T \\
\end{bmatrix}.
\label{diff_partial}
\end{align}

Equation \eqref{diff_partial} implies
the matrix product ${\mathbf a}^T \left(\frac{\partial{\mathbf x}_i^{(H)}}{\partial{\mathbf W}^{(1)}}[\widetilde{\boldsymbol\theta}]-\frac{\partial{\mathbf x}_i^{(H)}}{\partial{\mathbf W}^{(1)}}[\boldsymbol\theta]\right)$ 
has the form
\begin{align}
&{\mathbf a}^T \left(\frac{\partial{\mathbf x}_i^{(H)}}{\partial{\mathbf W}^{(1)}}[\widetilde{\boldsymbol\theta}]-\frac{\partial{\mathbf x}_i^{(H)}}{\partial{\mathbf W}^{(1)}}[\boldsymbol\theta]\right)
\nonumber\\
&\quad \triangleq {\mathbf a}_{(1,H)}^T({\mathbf x}_i,\widetilde{\boldsymbol\theta})
\sqrt{\frac{\cph}{m}}\,\,
\left(\text{diag} [\phi'(\widetilde{{\mathbf W}}^{(1)} {\mathbf x}_i)
]
-\text{diag} 
[\phi'({\mathbf W}^{(1)} {\mathbf x}_i)]\right)
\begin{bmatrix}
{\mathbf x}_i^T & \boldsymbol{0} & \boldsymbol{0}\\
\vdots &\ddots & \vdots\\
\boldsymbol{0} & \boldsymbol{0} &{\mathbf x}_i^T \\
\end{bmatrix}
\nonumber\\&
\quad\quad+\left[
{\mathbf a}_{(1,H)}^T({\mathbf x}_i,\widetilde{\boldsymbol\theta})-{\mathbf a}_{(1,H)}^T({\mathbf x}_i,\boldsymbol\theta)\right] \sqrt{\frac{\cph}{m}}\text{diag} 
[\phi'({\mathbf W}^{(1)} {\mathbf x}_i)]
\begin{bmatrix}
{\mathbf x}_i^T & \boldsymbol{0} & \boldsymbol{0}\\
\vdots &\ddots & \vdots\\
\boldsymbol{0} & \boldsymbol{0} &{\mathbf x}_i^T \\
\end{bmatrix}
\nonumber\\
&\quad\triangleq{\mathbf v}_{(1,H)}^{(1)}({\mathbf x}_i)+{\mathbf v}_{(1,H)}^{(2)}({\mathbf x}_i)
\label{H1_decomp}
\end{align}
where we take the row vectors 
${\mathbf a}_{(h_i,h_f)}^T({\mathbf x}_i,\boldsymbol{\theta}),{\mathbf v}_{(1,H)}^{(1)}({\mathbf x}_i),{\mathbf v}_{(1,H)}^{(2)}({\mathbf x}_i)$ to be

\begin{gather}
{\mathbf a}_{(h_i,h_f)}^T({\mathbf x}_i,\boldsymbol{\theta})\triangleq
{\mathbf a}^T
\mathlarger{\mathlarger{\mathlarger{\prod}}}_{l=h_i+1}^{h_f} \left[{\mathbf I}_m+\frac{\crs}{H\sqrt{m}}
\text{diag} [\phi'({\mathbf W}^{(l)} {\mathbf x}_i^{(l-1)}(\boldsymbol\theta))]
{\mathbf W}^{(l)}\right],
\end{gather}
\begin{gather}
{\mathbf v}_{(1,H)}^{(1)}({\mathbf x}_i)\triangleq
{\mathbf a}_{(1,H)}^T({\mathbf x}_i,\widetilde{\boldsymbol\theta})
\sqrt{\frac{\cph}{m}}\,\,
\left(\text{diag} [\phi'(\widetilde{{\mathbf W}}^{(1)} {\mathbf x}_i)
]
-\text{diag} 
[\phi'({\mathbf W}^{(1)} {\mathbf x}_i)]\right)
\begin{bmatrix}
{\mathbf x}_i^T & \boldsymbol{0} & \boldsymbol{0}\\
\vdots &\ddots & \vdots\\
\boldsymbol{0} & \boldsymbol{0} &{\mathbf x}_i^T \\
\end{bmatrix}, 
\end{gather}
\begin{gather}
{\mathbf v}_{(1,H)}^{(2)}({\mathbf x}_i)\triangleq
\left[
{\mathbf a}_{(1,H)}^T({\mathbf x}_i,\widetilde{\boldsymbol\theta})-{\mathbf a}_{(1,H)}^T({\mathbf x}_i,\boldsymbol\theta)\right] \sqrt{\frac{\cph}{m}}\text{diag}
[\phi'({\mathbf W}^{(1)} {\mathbf x}_i)]
\begin{bmatrix}
{\mathbf x}_i^T & \boldsymbol{0} & \boldsymbol{0}\\
\vdots &\ddots & \vdots\\
\boldsymbol{0} & \boldsymbol{0} &{\mathbf x}_i^T \\
\end{bmatrix}.
\label{a_1H_defn}
\end{gather}

It follows from Lemma~6.8 of \cite{oymak2019towards} that
\begin{align}
\left\|{\mathbf v}_{(1,H)}^{(1)}({\mathbf x}_i)\right\|_2\leq 
\sqrt{\frac{\cph}{m}}
\left\|{\mathbf a}_{(1,H)}({\mathbf x}_i,\widetilde{\boldsymbol\theta})\right\|_{\infty} M\|{\mathbf x}_i\|_2\left\| \widetilde{{\mathbf W}}^{(1)}-{\mathbf W}^{(1)}\right\|_F, 
\label{lemma6_8}
\end{align}
and the norm of the vector ${\mathbf v}_{(1,H)}^{(2)}({\mathbf x}_i)$ can straightforwardly be bounded as
\begin{align}
\left\|{\mathbf v}_{(1,H)}^{(2)}({\mathbf x}_i)\right\|_2\leq \sqrt{\frac{\cph}{m}} 
\left\|{\mathbf a}_{(1,H)}({\mathbf x}_i,\widetilde{\boldsymbol\theta})-{\mathbf a}_{(1,H)}({\mathbf x}_i,\boldsymbol\theta)\right\|_{\infty}B\|{\mathbf x}_i\|_2.
\label{diff_infty}
\end{align}
To provide a bound for the norm term $\left\|{\mathbf a}_{(1,H)}({\mathbf x}_i,\widetilde{\boldsymbol\theta})\right\|_{\infty}$ 
in \eqref{lemma6_8}, we write

\begin{align}
\left\|{\mathbf a}_{(1,H)}({\mathbf x}_i,\widetilde{\boldsymbol\theta})\right\|_{\infty}&\leq \|{\mathbf a}\|_{\infty}\left\|\mathlarger{\mathlarger{\mathlarger{\prod}}}_{l=2}^{H} 
\left[{\mathbf I}_m+\frac{\crs}{H\sqrt{m}}
\text{diag} [\phi'(\widetilde{{\mathbf W}}^{(l)} {\mathbf x}_i^{(l-1)}(\widetilde{\boldsymbol\theta}))]
\widetilde{{\mathbf W}}^{(l)}\right] \right\|_1
\nonumber\\
&\leq \|{\mathbf a}\|_{\infty} \sqrt{m} \left\|\mathlarger{\mathlarger{\mathlarger{\prod}}}_{l=2}^{H} 
\left[{\mathbf I}_m+\frac{\crs}{H\sqrt{m}}
\text{diag} [\phi'(\widetilde{{\mathbf W}}^{(l)} {\mathbf x}_i^{(l-1)}(\widetilde{\boldsymbol\theta}))
\widetilde{{\mathbf W}}^{(l)}\right] \right\|_2\nonumber
\\
\left\|{\mathbf a}_{(1,H)}({\mathbf x}_i,\widetilde{\boldsymbol\theta})\right\|_{\infty}& \leq \sqrt{m}\|{\mathbf a}\|_{\infty}
\mathlarger{\mathlarger{\mathlarger{\prod}}}_{l=2}^{H}
\left\|{\mathbf I}_m+\frac{\crs}{H\sqrt{m}}
\text{diag} [\phi'(\widetilde{{\mathbf W}}^{(l)} {\mathbf x}_i^{(l-1)}(\widetilde{\boldsymbol\theta}))
\widetilde{{\mathbf W}}^{(l)}\right\|_2
\nonumber\\
&\leq \sqrt{m}\|{\mathbf a}\|_{\infty}
\mathlarger{\mathlarger{\mathlarger{\prod}}}_{l=2}^{H}
\left(1+\frac{B\crs}{H\sqrt{m}}
\|\widetilde{{\mathbf W}}^{(l)}\| \right).
\label{single_infty_bd}
\end{align}
Inserting \eqref{single_infty_bd} in \eqref{lemma6_8}, we get
\begin{align}
\left\|{\mathbf v}_{(1,H)}^{(1)}({\mathbf x}_i)\right\|_2\leq 
\sqrt{\cph}\|{\mathbf a}\|_{\infty}
\mathlarger{\mathlarger{\mathlarger{\prod}}}_{l=2}^{H}
\left(1+\frac{B\crs}{H\sqrt{m}}
\|\widetilde{{\mathbf W}}^{(l)}\| \right)
M\|{\mathbf x}_i\|_2 \left\|\widetilde{{\mathbf W}}^{(1)}-{\mathbf W}^{(1)}\right\|_F.\label{v1H1_bd}
\end{align}

We proceed our analysis with the term $\left\|{\mathbf v}_{(1,H)}^{(2)}({\mathbf x}_i)\right\|.$
To analyze the norm $$\left\|{\mathbf a}_{(1,H)}({\mathbf x}_i,\widetilde{\boldsymbol\theta})-{\mathbf a}_{(1,H)}({\mathbf x}_i,\boldsymbol\theta)\right\|_{\infty}$$ appearing in \eqref{diff_infty}, we define the matrix
\begin{align}
{\mathbf D}_{(h_i,h_f)}({\mathbf x}_i)&\triangleq \mathlarger{\mathlarger{\mathlarger{\prod}}}_{l=h_i+1}^{h_f} 
\left[{\mathbf I}_m+\frac{\crs}{H\sqrt{m}}
\text{diag} [\phi'(\widetilde{{\mathbf W}}^{(l)} {\mathbf x}_i^{(l-1)}(\widetilde{\boldsymbol\theta}))]
\widetilde{{\mathbf W}}^{(l)}\right]\nonumber\\&\,\,- \mathlarger{\mathlarger{\mathlarger{\prod}}}_{l=h_i+1}^{h_f} \left[{\mathbf I}_m+\frac{\crs}{H\sqrt{m}}
\text{diag} [\phi'({\mathbf W}^{(l)} {\mathbf x}_i^{(l-1)}(\boldsymbol\theta))]
{\mathbf W}^{(l)}\right]  
\label{D_matrix}
\end{align}
and observe
\begin{align}
{\mathbf a}_{(1,H)}({\mathbf x}_i,\widetilde{\boldsymbol\theta})
-{\mathbf a}_{(1,H)}({\mathbf x}_i,\boldsymbol\theta)&=
{\mathbf a}^T {\mathbf D}_{(1,H)}({\mathbf x}_i)\nonumber\\
\left\|{\mathbf a}_{(1,H)}({\mathbf x}_i,\widetilde{\boldsymbol\theta})-
{\mathbf a}_{(1,H)}
({\mathbf x}_i,\boldsymbol\theta)\right\|_{\infty}&\leq\|{\mathbf a}\|_{\infty}\|{\mathbf D}_{(1,H)}({\mathbf x}_i)\|_1
\leq \sqrt{m}\|{\mathbf a}\|_{\infty}\|{\mathbf D}_{(1,H)}({\mathbf x}_i)\|_2 \nonumber\\
\left\|{\mathbf v}_{(1,H)}^{(2)}({\mathbf x}_i)\right\|
&\leq \sqrt{\cph}\|{\mathbf a}\|_{\infty}\|{\mathbf D}_{(1,H)}({\mathbf x}_i)\|_2 B \|{\mathbf x}_i\|_2.\label{v1H2_bd}
\end{align}
Our next goal would be to derive an upper bound for $\|{\mathbf D}_{(1,H)}({\mathbf x}_i)\|_2.$ For that purpose, we write
\begin{align}
{\mathbf D}_{(1,H)}({\mathbf x}_i)&= \left[{\mathbf I}_m+\frac{\crs}{H\sqrt{m}}
\text{diag} [\phi'(\widetilde{{\mathbf W}}^{(H)} {\mathbf x}_i^{(H-1)}(\widetilde{\boldsymbol\theta}))]
\widetilde{{\mathbf W}}^{(H)}\right]{\mathbf D}_{(1,H-1)}({\mathbf x}_i)
\nonumber\\&\quad
+\frac{\crs}{H\sqrt{m}}\left(\text{diag} [\phi'(\widetilde{{\mathbf W}}^{(H)} {\mathbf x}_i^{(H-1)}(\widetilde{\boldsymbol\theta}))]
\widetilde{{\mathbf W}}^{(H)}-\text{diag} [\phi'({\mathbf W}^{(H)} {\mathbf x}_i^{(H-1)}(\boldsymbol\theta))]{\mathbf W}^{(H)}\right)
\nonumber\\&\quad\cdot
\mathlarger{\mathlarger{\mathlarger{\prod}}}_{l=2}^{H-1} 
\left[{\mathbf I}_m+\frac{\crs}{H\sqrt{m}}
\text{diag} [\phi'({\mathbf W}^{(l)} {\mathbf x}_i^{(l-1)}(\boldsymbol\theta))]
{\mathbf W}^{(l)}\right]\nonumber 
\end{align}
which implies

\begin{align}
\|{\mathbf D}_{(1,H)}({\mathbf x}_i)\|_2&\leq   
\left(1+\frac{B\crs}{H\sqrt{m}}
\|\widetilde{{\mathbf W}}^{(H)}\| \right)
\|{\mathbf D}_{(1,H-1)}({\mathbf x}_i)\|_2
\nonumber\\&\quad
+\frac{\crs}{H\sqrt{m}}\left\|\text{diag} [\phi'(\widetilde{{\mathbf W}}^{(H)} {\mathbf x}_i^{(H-1)}(\widetilde{\boldsymbol\theta}))]
\widetilde{{\mathbf W}}^{(H)}-\text{diag} [\phi'({\mathbf W}^{(H)} {\mathbf x}_i^{(H-1)}(\boldsymbol\theta))]{\mathbf W}^{(H)}\right\|
\nonumber\\&\quad\quad
\cdot
\mathlarger{\mathlarger{\mathlarger{\prod}}}_{l=2}^{H-1} 
\left(1+\frac{B\crs}{H\sqrt{m}}
\|\widetilde{{\mathbf W}}^{(l)}\|\right)\label{norm_D_1H}
\end{align}
We see from \eqref{norm_D_1H} that it is possible to derive a recursive upper bound for $\|{\mathbf D}_{(1,H)}({\mathbf x}_i)\|_2$ provided that we simplify or bound the norm given by 
$$
\left\|
\text{diag} [\phi'(\widetilde{{\mathbf W}}^{(l)} {\mathbf x}_i^{(l-1)}(\widetilde{\boldsymbol\theta}))]
\widetilde{{\mathbf W}}^{(l)}-\text{diag} [\phi'({\mathbf W}^{(l)} {\mathbf x}_i^{(l-1)}(\boldsymbol\theta))]{\mathbf W}^{(l)}\right\|,   \,l=H 
$$
in terms of the network input ${\mathbf x}_i$, and the set of ResNet parameters $\boldsymbol\theta$ and
$\widetilde{\boldsymbol\theta}.$
So using triangle inequality, we write

\begin{align}
&\left\|
\text{diag} [\phi'(\widetilde{{\mathbf W}}^{(l)} {\mathbf x}_i^{(l-1)}(\widetilde{\boldsymbol\theta}))]
\widetilde{{\mathbf W}}^{(l)}-\text{diag} [\phi'({\mathbf W}^{(l)} {\mathbf x}_i^{(l-1)}(\boldsymbol\theta))]{\mathbf W}^{(l)}\right\|\nonumber\\ & \quad
\leq
\left\|\text{diag} [\phi'(\widetilde{{\mathbf W}}^{(l)} {\mathbf x}_i^{(l-1)}(\widetilde{\boldsymbol\theta}))
]\right\|\cdot
\|\widetilde{{\mathbf W}}^{(l)}-
{\mathbf W}^{(l)}\|\nonumber\\& \quad\quad+
\left\|\text{diag} [\phi'(\widetilde{{\mathbf W}}^{(l)} {\mathbf x}_i^{(l-1)}(\widetilde{\boldsymbol\theta}))]
-\text{diag} [\phi'(\widetilde{{\mathbf W}}^{(l)} {\mathbf x}_i^{(l-1)}(\boldsymbol\theta))]
\right\|\cdot \|{\mathbf W}^{(l)}\|
\nonumber\\& \quad\quad+
\left\|\text{diag} [\phi'(\widetilde{{\mathbf W}}^{(l)} {\mathbf x}_i^{(l-1)}(\boldsymbol\theta))]
-\text{diag} [\phi'({\mathbf W}^{(l)} {\mathbf x}_i^{(l-1)}(\boldsymbol\theta))]
\right\|\cdot \|{\mathbf W}^{(l)}\|
\label{diag_diff}
\end{align}
To analyze the difference of diagonal matrices $\text{diag} [\phi'(\widetilde{{\mathbf W}}^{(l)} {\mathbf x}_i^{(l-1)}(\widetilde{\boldsymbol\theta}))]
-\text{diag} [\phi'(\widetilde{{\mathbf W}}^{(l)} {\mathbf x}_i^{(l-1)}(\boldsymbol\theta))]$ appearing in \eqref{diag_diff}, 
we use the mean value theorem for each diagonal entry and observe
the equation
\begin{align}
&\left|\phi'\left(\widetilde{{\mathbf W}}^{(l)} {\mathbf x}_i^{(l-1)}(\widetilde{\boldsymbol\theta})\right)_j-
\phi'\left(\widetilde{{\mathbf W}}^{(l)} {\mathbf x}_i^{(l-1)}(\boldsymbol\theta)\right)_j\right|\nonumber\\
&\quad=\left|\phi''\left(\widetilde{\mathbf w}^{(l)}_j\left[\alpha_{i,j} {\mathbf x}_i^{(l-1)}(\widetilde{\boldsymbol\theta})+
(1-\alpha_{i,j}){\mathbf x}_i^{(l-1)}(\boldsymbol\theta)\right]\right)\right|\,\,
\cdot
\left|\widetilde{\mathbf w}^{(l)}_j\left({\mathbf x}_i^{(l-1)}(\widetilde{\boldsymbol\theta})-{\mathbf x}_i^{(l-1)}(\boldsymbol\theta)\right)\right|\label{diag_diff2}
\end{align}
is valid for some $\alpha_{i,j}\in[0,1],$ where
$\widetilde{\mathbf w}^{(l)}_j$ refers to the $j$-th
row of $\widetilde{{\mathbf W}}^{(l)}.$
Under the assumption of
$|\phi''(z)|\leq M$ stated in Lemma~\ref{JLlem}, it follows from
\eqref{diag_diff2} that
\begin{gather}
\left|\phi'\left(\widetilde{{\mathbf W}}^{(l)} {\mathbf x}_i^{(l-1)}(\widetilde{\boldsymbol\theta})\right)_j-
\phi'\left(\widetilde{{\mathbf W}}^{(l)} {\mathbf x}_i^{(l-1)}(\boldsymbol\theta)\right)_j\right|
\leq M \left\|\widetilde{\mathbf w}^{(l)}_j\right\|_2 \left\|{\mathbf x}_i^{(l-1)}(\widetilde{\boldsymbol\theta})-{\mathbf x}_i^{(l-1)}(\boldsymbol\theta)\right\|_2\nonumber\\
\left\|\text{diag} [\phi'(\widetilde{{\mathbf W}}^{(l)} {\mathbf x}_i^{(l-1)}(\widetilde{\boldsymbol\theta}))]
-\text{diag} [\phi'(\widetilde{{\mathbf W}}^{(l)} {\mathbf x}_i^{(l-1)}(\boldsymbol\theta))]
\right\|\leq M \max_j\left\|\widetilde{\mathbf w}^{(l)}_j\right\|_2 \left\|{\mathbf x}_i^{(l-1)}(\widetilde{\boldsymbol\theta})-{\mathbf x}_i^{(l-1)}(\boldsymbol\theta)\right\|_2\label{diag_diff3}
\end{gather}
In a similar way, the inequality
\begin{align}
\left\|\text{diag} [\phi'(\widetilde{{\mathbf W}}^{(l)} {\mathbf x}_i^{(l-1)}(\boldsymbol\theta))]
-\text{diag} [\phi'({\mathbf W}^{(l)} {\mathbf x}_i^{(l-1)}(\boldsymbol\theta))]
\right\|\leq M \max_j\left\|\widetilde{\mathbf w}^{(l)}_j-
{\mathbf w}^{(l)}_j\right\|_2 
\left\|{\mathbf x}_i^{(l-1)}(\boldsymbol\theta)\right\|_2   
\label{diag_diff4}
\end{align}
can be derived. Inserting \eqref{diag_diff3} and \eqref{diag_diff4}
in \eqref{diag_diff}, we obtain 
\begin{align}
&\left\|
\text{diag} [\phi'(\widetilde{{\mathbf W}}^{(l)} {\mathbf x}_i^{(l-1)}(\widetilde{\boldsymbol\theta}))]
\widetilde{{\mathbf W}}^{(l)}-\text{diag} [\phi'({\mathbf W}^{(l)} {\mathbf x}_i^{(l-1)}(\boldsymbol\theta))]{\mathbf W}^{(l)}\right\|
\nonumber\\
&\,\,\leq B\,\|\widetilde{{\mathbf W}}^{(l)}-
{\mathbf W}^{(l)}\|\nonumber\\&\quad+ M \left(\max_j\left\|\widetilde{\mathbf w}^{(l)}_j\right\|_2 \left\|{\mathbf x}_i^{(l-1)}(\widetilde{\boldsymbol\theta})-{\mathbf x}_i^{(l-1)}(\boldsymbol\theta)\right\|_2
+\max_j\left\|\widetilde{\mathbf w}^{(l)}_j-
{\mathbf w}^{(l)}_j\right\|_2 
\left\|{\mathbf x}_i^{(l-1)}(\boldsymbol\theta)\right\|_2\right)
\|{\mathbf W}^{(l)}\|.
\label{diag_diff5}
\end{align}

There are two terms in \eqref{diag_diff5} depending on the network input ${\mathbf x}_i$, one is $\|{\mathbf x}_i^{(l-1)}(\widetilde{\boldsymbol\theta})-{\mathbf x}_i^{(l-1)}(\boldsymbol\theta)\|_2$ and the other one is $\|{\mathbf x}_i^{(l-1)}(\boldsymbol\theta)\|_2.$
We have already shown 
$$\|{\mathbf x}_i^{(l-1)}\|_2 \leq \sqrt{\frac{c_{\phi}}{m}}B
\|{\mathbf W}^{(1)}\|\,\|{\mathbf x_i}\|_2
\mathlarger{\mathlarger{\prod}}_{j=2}^{l-1}\left(1+
\frac{\crs}{H\sqrt{m}}B \|{\mathbf W}^{(j)}\|\right)$$
in \eqref{frob_upper2}. To upper bound the norm 
$\|{\mathbf x}_i^{(l-1)}(\widetilde{\boldsymbol\theta})-{\mathbf x}_i^{(l-1)}(\boldsymbol\theta)\|_2$, we use mean value theorem for vector valued multi-variable functions and write
\begin{align}
\left\|{\mathbf x}_i^{(l-1)}(\widetilde{\boldsymbol\theta})-{\mathbf x}_i^{(l-1)}(\boldsymbol\theta)\right\|_2 \leq
\left\| \frac{\partial {\mathbf x}_i^{(l-1)}}{\partial \boldsymbol\theta}[\alpha \widetilde{\boldsymbol\theta}+(1-\alpha)
\boldsymbol\theta]\right\|\cdot \left\|\widetilde{\boldsymbol\theta}-\boldsymbol\theta
\right\|_F \label{diag_diff6}
\end{align}
for some number $\alpha$ such that $0\leq \alpha \leq 1.$ 
It follows from \eqref{diag_diff6} that the only remaining quantity we need to consider the norm of is the partial 
$\left\|\frac{\partial{\mathbf x}_i^{(l-1)}}{\partial\boldsymbol\theta}\right\|.$
The norm $\left\|\frac{\partial{\mathbf x}_i^{(l-1)}}{\partial{\mathbf W}^{(h)}}[\boldsymbol\theta]\right\|$ can be 
expressed as
\begin{align}
\left\|\frac{\partial{\mathbf x}_i^{(l-1)}}{\partial{\mathbf W}^{(h)}}[\boldsymbol\theta]\right\|&\leq
\mathlarger{\mathlarger{\prod}}_{j=h+1}^{l-1}\left(1+\frac{\crs}{H\sqrt{m}}B
\|{\mathbf W}^{(j)}\|\right) \left\|\frac{\partial{\mathbf x}_i^{(h)}}{\partial{\mathbf W}^{(h)}}[\boldsymbol\theta]\right\|\nonumber\\
&\leq \begin{cases*}
\mathlarger{\mathlarger{\prod}}_{j=h+1}^{l-1}\left(1+\frac{\crs}{H\sqrt{m}}B
\|{\mathbf W}^{(j)}\|\right)\frac{B\crs}{H\sqrt{m}}
\|{\mathbf x}_i^{(h-1)}(\boldsymbol\theta)\|_2,
&\text{if} $1<h<l$,\\
\mathlarger{\mathlarger{\prod}}_{j=h+1}^{l-1}\left(1+\frac{\crs}{H\sqrt{m}}B
\|{\mathbf W}^{(j)}\|\right)B \sqrt{\frac{c_{\phi}}{m}}
\|{\mathbf x}_i^{(h-1)}(\boldsymbol\theta)\|_2,
&\text{if} $h=1$
\end{cases*}\nonumber
\end{align}

\begin{align}
\left\|\frac{\partial{\mathbf x}_i^{(l-1)}}{\partial{\mathbf W}^{(h)}}[\boldsymbol\theta]\right\|
&\leq \begin{cases*}
\mathlarger{\mathlarger{\prod}}_{j=2}^{l-1}\left(1+\frac{\crs}{H\sqrt{m}}B
\|{\mathbf W}^{(j)}\|\right)
\frac{B^2\crs\sqrt{c_{\phi}}}{H\,m}\|{\mathbf W}^{(1)}\| \|{\mathbf x}_i\|_2 &\text{if} $1<h<l$,\\
\mathlarger{\mathlarger{\prod}}_{j=2}^{l-1}\left(1+\frac{\crs}{H\sqrt{m}}B
\|{\mathbf W}^{(j)}\|\right)B \sqrt{\frac{c_{\phi}}{m}}
\|{\mathbf x}_i\|_2,
&{\text if} $h=1$
\end{cases*}
\label{partial_cases}
\end{align}
using \eqref{meta_partial} and \eqref{frob_upper2}. 
We invoke \eqref{partial_cases} to upper bound 
$\left\| \frac{\partial {\mathbf x}_i^{(l-1)}}{\partial \boldsymbol\theta}[\alpha \widetilde{\boldsymbol\theta}+(1-\alpha)
\boldsymbol\theta]\right\|$ and derive

\begin{align}
&\left\| \frac{\partial {\mathbf x}_i^{(l-1)}}{\partial \boldsymbol\theta}[\alpha \widetilde{\boldsymbol\theta}+(1-\alpha)
\boldsymbol\theta]\right\| = \max_k 
\left\| \frac{\partial {\mathbf x}_i^{(l-1)}}{\partial {\mathbf W}^{(k)}}[\alpha\widetilde{\boldsymbol\theta}+(1-\alpha)\boldsymbol\theta] \right\|
\nonumber\\
&\leq 
\mathlarger{\mathlarger{\prod}}_{j=2}^{l-1}\left(1+\frac{\crs}{H\sqrt{m}}B
\|\alpha 
\widetilde{{\mathbf W}}^{(j)}+(1-\alpha){\mathbf W}^{(j)}\|\right)
B \sqrt{\frac{c_{\phi}}{m}}
\|{\mathbf x}_i\|_2 
\cdot
\max\left\{
\frac{B\crs}{H\sqrt{m}}\|\alpha 
\widetilde{{\mathbf W}}^{(1)}
+(1-\alpha){\mathbf W}^{(1)}\|,\,1
\right\}
\nonumber\\
&\leq \mathlarger{\mathlarger{\prod}}_{j=2}^{l-1}\left(1+\frac{\crs}{H\sqrt{m}}B
\|\alpha 
\widetilde{{\mathbf W}}^{(j)}+(1-\alpha){\mathbf W}^{(j)}\|\right)
B \sqrt{\frac{c_{\phi}}{m}}
\|{\mathbf x}_i\|_2 
\left(1+\frac{\crs}{H\sqrt{m}}B\|\alpha 
\widetilde{{\mathbf W}}^{(1)}
+(1-\alpha){\mathbf W}^{(1)}\| \right)
\nonumber\\
&\leq B \sqrt{\frac{c_{\phi}}{m}}
\|{\mathbf x}_i\|_2 \mathlarger{\mathlarger{\prod}}_{j=1}^{l-1}\left(1+\frac{\crs}{H\sqrt{m}}B
\|\alpha 
\widetilde{{\mathbf W}}^{(j)}+(1-\alpha){\mathbf W}^{(j)}\|\right). \label{partial_cases2}
\end{align}
Then inserting \eqref{partial_cases2} and \eqref{frob_upper2} in \eqref{diag_diff5},
we get
\begin{align}
&\left\|
\text{diag} [\phi'(\widetilde{{\mathbf W}}^{(l)} {\mathbf x}_i^{(l-1)}(\widetilde{\boldsymbol\theta}))]
\widetilde{{\mathbf W}}^{(l)}-\text{diag} [\phi'({\mathbf W}^{(l)} {\mathbf x}_i^{(l-1)}(\boldsymbol\theta))]{\mathbf W}^{(l)}\right\|
\nonumber\\
&\,\,\leq B\,\|\widetilde{{\mathbf W}}^{(l)}-
{\mathbf W}^{(l)}\|\nonumber\\&\quad+ M
\|{\mathbf W}^{(l)}\|
\max_j\left\|\widetilde{\mathbf w}^{(l)}_j\right\|_2 
\|\widetilde{\boldsymbol\theta}-\boldsymbol\theta\|_F
B \sqrt{\frac{c_{\phi}}{m}}
\|{\mathbf x}_i\|_2 \mathlarger{\mathlarger{\prod}}_{k=1}^{l-1}\left(1+\frac{\crs}{H\sqrt{m}}B
\|\alpha 
\widetilde{{\mathbf W}}^{(k)}+(1-\alpha){\mathbf W}^{(k)}\|\right)\nonumber\\
&\quad+M \|{\mathbf W}^{(l)}\| \max_j\left\|\widetilde{\mathbf w}^{(l)}_j-
{\mathbf w}^{(l)}_j\right\|_2 
B\sqrt{\frac{c_{\phi}}{m}}
\|{\mathbf W}^{(1)}\|\,\|{\mathbf x_i}\|_2
\mathlarger{\mathlarger{\prod}}_{k=2}^{l-1}\left(1+
\frac{\crs}{H\sqrt{m}}B \|{\mathbf W}^{(k)}\|\right)
\triangleq f_l(\boldsymbol\theta,\widetilde{\boldsymbol\theta},
{\mathbf x}_i).
\label{diag_diff_final}
\end{align}

The next step would be to combine \eqref{diag_diff_final} 
with \eqref{norm_D_1H}, giving us
\begin{gather}
\|{\mathbf D}_{(1,H)}({\mathbf x}_i)\|_2 \leq   
\left(1+\frac{B\crs}{H\sqrt{m}}
\|\widetilde{{\mathbf W}}^{(H)}\| \right)
\|{\mathbf D}_{(1,H-1)}({\mathbf x}_i)\|_2+ 
\frac{\crs}{H\sqrt{m}}
\mathlarger{\mathlarger{\mathlarger{\prod}}}_{k=2}^{H-1} 
\left(1+\frac{B\crs}{H\sqrt{m}}
\|\widetilde{{\mathbf W}}^{(k)}\|\right)
f_H(\boldsymbol\theta,\widetilde{\boldsymbol\theta},{\mathbf x}_i)
\nonumber\\
\|{\mathbf D}_{(1,H)}({\mathbf x}_i)\|_2 \leq 
\mathlarger{\mathlarger{\mathlarger{\prod}}}_{k=3}^{H}
\left(1+\frac{B\crs}{H\sqrt{m}}
\|\widetilde{{\mathbf W}}^{(k)}\|\right)
\|{\mathbf D}_{(1,2)}({\mathbf x}_i)\|_2\nonumber\\
\quad + 
\frac{\crs}{H\sqrt{m}}
\mathlarger{\mathlarger{\mathlarger{\prod}}}_{k=2}^{H}  
\left(1+\frac{B\crs}{H\sqrt{m}}
\|\widetilde{{\mathbf W}}^{(k)}\|\right)
\mathlarger{\mathlarger{\mathlarger{\sum}}}_{l=3}^H 
f_l(\boldsymbol\theta,\widetilde{\boldsymbol\theta},
{\mathbf x}_i)
\label{recur_simplified}
\end{gather}
Then we evaluate \eqref{norm_D_1H} at $H=2$ to obtain
\begin{align}
\|{\mathbf D}_{(1,2)}({\mathbf x}_i)\|_2 &\leq
\frac{\crs}{H\sqrt{m}}
\left\|
\text{diag} [\phi'(\widetilde{{\mathbf W}}^{(2)} {\mathbf x}_i^{(1)}(\widetilde{\boldsymbol\theta}))]
\widetilde{{\mathbf W}}^{(2)}-\text{diag} [\phi'({\mathbf W}^{(2)} {\mathbf x}_i^{(1)}(\boldsymbol\theta))]{\mathbf W}^{(2)}\right\|
\nonumber\\
&\leq \frac{\crs}{H\sqrt{m}} f_2(\boldsymbol\theta,\widetilde{\boldsymbol\theta},
{\mathbf x}_i)\label{f_2}.
\end{align}
From \eqref{f_2} and \eqref{recur_simplified}, we get the ultimate
upper bound 
\begin{align}
\|{\mathbf D}_{(1,H)}({\mathbf x}_i)\|_2 \leq 
\frac{\crs}{H\sqrt{m}}
\mathlarger{\mathlarger{\mathlarger{\prod}}}_{k=2}^{H}  
\left(1+\frac{B\crs}{H\sqrt{m}}
\|\widetilde{{\mathbf W}}^{(k)}\|\right)
\mathlarger{\mathlarger{\mathlarger{\sum}}}_{l=2}^H 
f_l(\boldsymbol\theta,\widetilde{\boldsymbol\theta},
{\mathbf x}_i)
\label{ultim_bd_D1H}
\end{align}
for $\|{\mathbf D}_{(1,H)}({\mathbf x}_i)\|_2.$

Now let $A$ be a constant such that $\|{\mathbf W}^{(j)}\|\leq A$ and $\|\widetilde{{\mathbf W}}^{(j)}\|\leq A$ for all $j=1,\dots,H.$ Then the variable $f_l(\boldsymbol\theta,\widetilde{\boldsymbol\theta},{\mathbf x}_i)$ given by \eqref{diag_diff_final} can be upper bounded as
\begin{align}
f_l(\boldsymbol\theta,\widetilde{\boldsymbol\theta},{\mathbf x}_i)
\leq B\,\|\widetilde{{\mathbf W}}^{(l)}-
{\mathbf W}^{(l)}\|+ A^2 B M \sqrt{\frac{\cph}{m}}
e^{\frac{AB\crs}{\sqrt{m}}} \|{\mathbf x}_i\|_2 \left(\|\widetilde{\boldsymbol\theta}-\boldsymbol\theta\|_F
+\|\widetilde{{\mathbf W}}^{(l)}- {\mathbf W}^{(l)}\|\right)
\label{f_l_bd}
\end{align}
Then we take the sum of both sides of \eqref{f_l_bd} to get
\begin{align}
\mathlarger{\mathlarger{\mathlarger{\sum}}}_{l=2}^H 
f_l(\boldsymbol\theta,\widetilde{\boldsymbol\theta},
{\mathbf x}_i)&\leq H A^2 B M \sqrt{\frac{\cph}{m}}
e^{\frac{AB\crs}{\sqrt{m}}} \|{\mathbf x}_i\|_2 \|\widetilde{\boldsymbol\theta}-\boldsymbol\theta\|_F
\nonumber\\
&\quad+\left(B+A^2 B M \sqrt{\frac{\cph}{m}}
e^{\frac{AB\crs}{\sqrt{m}}} \|{\mathbf x}_i\|_2\right)
\mathlarger{\mathlarger{\mathlarger{\sum}}}_{l=2}^H 
\|\widetilde{{\mathbf W}}^{(l)}- {\mathbf W}^{(l)}\|
\label{f_l_sum}
\end{align}
and invoke \eqref{ultim_bd_D1H} to obtain
\begin{align}
\|{\mathbf D}_{(1,H)}({\mathbf x}_i)\|_2 &\leq 
A^2 B M \frac{\crs\sqrt{\cph}}{m} e^{\frac{2AB\crs}{\sqrt{m}}} \|{\mathbf x}_i\|_2 \|\widetilde{\boldsymbol\theta}-\boldsymbol\theta\|_F
\nonumber\\&\quad
+\frac{1}{H}
\left(B\frac{\crs}{\sqrt{m}}e^{\frac{AB\crs}{\sqrt{m}}}
+A^2 B M \frac{\crs\sqrt{\cph}}{m}
e^{\frac{2AB\crs}{\sqrt{m}}} \|{\mathbf x}_i\|_2\right)
\mathlarger{\mathlarger{\mathlarger{\sum}}}_{l=2}^H 
\|\widetilde{{\mathbf W}}^{(l)}- {\mathbf W}^{(l)}\|
\label{D1H_bd_A}
\end{align}
Noting that Cauchy-Schwarz inequality gives us $\mathlarger{\mathlarger{\sum}}_{l=1}^H
\|\widetilde{{\mathbf W}}^{(l)}-
{\mathbf W}^{(l)}\| \leq 
\sqrt{H} \left(\mathlarger{\mathlarger{\sum}}_{l=1}^H
\|\widetilde{{\mathbf W}}^{(l)}-
{\mathbf W}^{(l)}\|^2\right)^{\frac{1}{2}}\!\!\!\!\leq
\sqrt{H} \|\widetilde{\boldsymbol\theta}-\boldsymbol\theta\|_F,$
we conclude from \eqref{D1H_bd_A} that
\begin{align}
\|{\mathbf D}_{(1,H)}({\mathbf x}_i)\|_2 &\leq 
\left[\left(1+\frac{1}{\sqrt{H}}\right)
A^2 B M \frac{\crs\sqrt{\cph}}{m}
e^{\frac{2AB\crs}{\sqrt{m}}} \|{\mathbf x}_i\|_2
+\frac{1}{\sqrt{H}}
B\frac{\crs}{\sqrt{m}}e^{\frac{AB\crs}{\sqrt{m}}}
\right] \|\widetilde{\boldsymbol\theta}-\boldsymbol\theta\|_F
\label{D1H_bd_A2}
\end{align}
Then we combine \eqref{v1H2_bd} with \eqref{D1H_bd_A2} to derive
the bound
\begin{align}
\left\|{\mathbf v}_{(1,H)}^{(2)}({\mathbf x}_i)\right\|
&\leq \sqrt{\cph}\|{\mathbf a}\|_{\infty} 
\left[\left(1+\frac{1}{\sqrt{H}}\right)
A^2 B M \frac{\crs\sqrt{\cph}}{m}
e^{\frac{2AB\crs}{\sqrt{m}}} \|{\mathbf x}_i\|_2
+\frac{1}{\sqrt{H}}
B\frac{\crs}{\sqrt{m}}e^{\frac{AB\crs}{\sqrt{m}}}
\right] 
B \|{\mathbf x}_i\|_2 
\nonumber\\&\quad\cdot
\|\widetilde{\boldsymbol\theta}-\boldsymbol\theta\|_F
\label{v1H2_bdA}
\end{align}
for $\left\|{\mathbf v}_{(1,H)}^{(2)}({\mathbf x}_i)\right\|.$
This completes the bound derivation for the norm of the vector
${\mathbf v}_{(1,H)}^{(2)}({\mathbf x}_i).$

Now we return to $\left\|{\mathbf v}_{(1,H)}^{(1)}({\mathbf x}_i)\right\|,$ and use \eqref{v1H1_bd} to get the inequality
\begin{align}
\left\|{\mathbf v}_{(1,H)}^{(1)}({\mathbf x}_i)\right\|_2\leq 
\sqrt{\cph}\|{\mathbf a}\|_{\infty}
e^{\frac{AB\crs}{\sqrt{m}}}
M\|{\mathbf x}_i\|_2 \left\|\widetilde{{\mathbf W}}^{(1)}-{\mathbf W}^{(1)}\right\|_F
\label{v1H1_bdA}
\end{align}
From \eqref{v1H2_bdA} and \eqref{v1H1_bdA}, we derive the
bound
\begin{align}
&\left\|
{\mathbf a}^T \left(\frac{\partial{\mathbf x}_i^{(H)}}{\partial{\mathbf W}^{(1)}}[\widetilde{\boldsymbol\theta}]-\frac{\partial{\mathbf x}_i^{(H)}}{\partial{\mathbf W}^{(1)}}[\boldsymbol\theta]\right)
\right\|_2 \leq \sqrt{\cph}\|{\mathbf a}\|_{\infty} 
\|{\mathbf x}_i\|_2 \|\widetilde{\boldsymbol\theta}-\boldsymbol\theta\|_F
\nonumber\\
&
\cdot
\left[\left(1+\frac{1}{\sqrt{H}}\right)
A^2 B^2 M \frac{\crs\sqrt{\cph}}{m}
e^{\frac{2AB\crs}{\sqrt{m}}} \|{\mathbf x}_i\|_2
+M e^{\frac{AB\crs}{\sqrt{m}}}+
\frac{1}{\sqrt{H}}
B^2\frac{\crs}{\sqrt{m}}e^{\frac{AB\crs}{\sqrt{m}}}
\right] \label{H1_one_before_final}
\end{align}
If $\|{\mathbf x}_i\|=1$ for all $i=1,\dots,n$, then \eqref{H1_one_before_final} implies
\begin{align}
&\sqrt{
\mathlarger{\mathlarger{\sum}}_{i=1}^n
\left\|
{\mathbf a}^T \left(\frac{\partial{\mathbf x}_i^{(H)}}{\partial{\mathbf W}^{(1)}}[\widetilde{\boldsymbol\theta}]-\frac{\partial{\mathbf x}_i^{(H)}}{\partial{\mathbf W}^{(1)}}[\boldsymbol\theta]\right)
\right\|^2_2} \leq
\sqrt{n} \sqrt{\cph}\|{\mathbf a}\|_{\infty} 
 \|\widetilde{\boldsymbol\theta}-\boldsymbol\theta\|_F
\nonumber\\
&\quad 
\cdot
\left[\left(1+\frac{1}{\sqrt{H}}\right)
A^2 B^2 M \frac{\crs\sqrt{\cph}}{m}
e^{\frac{2AB\crs}{\sqrt{m}}} 
+M e^{\frac{AB\crs}{\sqrt{m}}}+
\frac{1}{\sqrt{H}}
B^2\frac{\crs}{\sqrt{m}}e^{\frac{AB\crs}{\sqrt{m}}}
\right] \label{H1_final}
\end{align}

We need to bound the set of norms $\left\|
{\mathbf a}^T \left(\frac{\partial{\mathbf x}_i^{(H)}}{\partial{\mathbf W}^{(h)}}[\widetilde{\boldsymbol\theta}]-\frac{\partial{\mathbf x}_i^{(H)}}{\partial{\mathbf W}^{(h)}}[\boldsymbol\theta]\right)
\right\|_2, h=2,\dots,H$
in a similar way. Recall from 
\eqref{last_partial} that
\begin{align*}
\frac{\partial{\mathbf x}_i^{(H)}}{\partial{\mathbf W}^{(h)}}
[\boldsymbol\theta]
&=
\mathlarger{\mathlarger{\mathlarger{\prod}}}_{l=h+1}^{H} \left[{\mathbf I}_m+\frac{\crs}{H\sqrt{m}}
\text{diag} [\phi'({\mathbf W}^{(l)} {\mathbf x}_i^{(l-1)}(\boldsymbol\theta))]
{\mathbf W}^{(l)}
\right]
\nonumber\\&\quad
\cdot \frac{\crs}{H\sqrt{m}}\,\,
\text{diag} [\phi'({\mathbf W}^{(h)} {\mathbf x}^{(h-1)}_i)]
\begin{bmatrix}
{{\mathbf x}_i^{(h-1)}}^T & \boldsymbol{0} & \boldsymbol{0}\\
\vdots &\ddots & \vdots\\
\boldsymbol{0} & \boldsymbol{0} & {{\mathbf x}_i^{(h-1)}}^T\\
\end{bmatrix}.    
\end{align*}
if $h>1.$ Similarly to \eqref{H1_decomp}, we expand the expression ${\mathbf a}^T\left(\frac{\partial{\mathbf x}_i^{(H)}}{\partial{\mathbf W}^{(h)}}[\widetilde{\boldsymbol\theta}]-\frac{\partial{\mathbf x}_i^{(H)}}{\partial{\mathbf W}^{(h)}}[\boldsymbol\theta]\right)$ as
\begin{align}
{\mathbf a}^T\left(\frac{\partial{\mathbf x}_i^{(H)}}{\partial{\mathbf W}^{(h)}}[\widetilde{\boldsymbol\theta}]-\frac{\partial{\mathbf x}_i^{(H)}}{\partial{\mathbf W}^{(h)}}[\boldsymbol\theta]\right) 
&={\mathbf v}_{(h,H)}^{(1)}({\mathbf x}_i^{(h-1)}(\boldsymbol\theta))
+{\mathbf v}_{(h,H)}^{(2)}({\mathbf x}_i^{(h-1)}(\boldsymbol\theta))
\nonumber\\&\quad
+{\mathbf v}_{(h,H)}^{(3)}({\mathbf x}_i^{(h-1)}(\boldsymbol\theta))
+{\mathbf v}_{(h,H)}^{(4)}
({\mathbf x}_i^{(h-1)}(\widetilde{\boldsymbol\theta})-{\mathbf x}_i^{(h-1)}(\boldsymbol\theta))
\label{hH_expansion}
\end{align}
where
\begin{align}
{\mathbf v}_{(h,H)}^{(1)}({\mathbf x}_i^{(h-1)}(\boldsymbol\theta))   &\triangleq {\mathbf a}^T_{(h,H)}({\mathbf x}_i,\widetilde{\boldsymbol\theta})
\frac{\crs}{H\sqrt{m}}\,\,
\left(\text{diag} [\phi'(\widetilde{{\mathbf W}}^{(h)} 
{\mathbf x}_i^{(h-1)}(\boldsymbol\theta))
]
-\text{diag} 
[\phi'({\mathbf W}^{(h)} {\mathbf x}_i^{(h-1)}(\boldsymbol\theta))]\right)
\nonumber\\
&\quad\cdot
\begin{bmatrix}
{{\mathbf x}_i^{(h-1)}}^T\!\!(\boldsymbol\theta)
& \boldsymbol{0} & \boldsymbol{0}\\
\vdots &\ddots & \vdots\\
\boldsymbol{0} & \boldsymbol{0} &
{{\mathbf x}_i^{(h-1)}}^T\!\!(\boldsymbol\theta) \\
\end{bmatrix}
\end{align}

\begin{align}
{\mathbf v}_{(h,H)}^{(2)}({\mathbf x}_i^{(h-1)}(\boldsymbol\theta))   &\triangleq \left[{\mathbf a}^T_{(h,H)}({\mathbf x}_i,\widetilde{\boldsymbol\theta})-{\mathbf a}^T_{(h,H)}({\mathbf x}_i,\boldsymbol\theta)\right]
\frac{\crs}{H\sqrt{m}} \text{diag} 
[\phi'({\mathbf W}^{(h)} {\mathbf x}_i^{(h-1)}(\boldsymbol\theta))]
\nonumber\\
&\quad\cdot
\begin{bmatrix}
{{\mathbf x}_i^{(h-1)}}^T\!\!(\boldsymbol\theta) & \boldsymbol{0} & \boldsymbol{0}\\
\vdots &\ddots & \vdots\\
\boldsymbol{0} & \boldsymbol{0} &{{\mathbf x}_i^{(h-1)}}^T\!\!(\boldsymbol\theta) \\
\end{bmatrix}
\end{align}

\begin{align}
{\mathbf v}_{(h,H)}^{(3)}({\mathbf x}_i^{(h-1)}(\boldsymbol\theta))   &\triangleq {\mathbf a}^T_{(h,H)}({\mathbf x}_i,\widetilde{\boldsymbol\theta})
\frac{\crs}{H\sqrt{m}}\,\,
\left(\text{diag} [\phi'(\widetilde{{\mathbf W}}^{(h)} 
{\mathbf x}_i^{(h-1)}(\widetilde{\boldsymbol\theta}))
]
-\text{diag} 
[\phi'(\widetilde{{\mathbf W}}^{(h)} {\mathbf x}_i^{(h-1)}(\boldsymbol\theta))]\right)
\nonumber\\
&\quad\cdot
\begin{bmatrix}
{{\mathbf x}_i^{(h-1)}}^T\!\!(\boldsymbol\theta)
& \boldsymbol{0} & \boldsymbol{0}\\
\vdots &\ddots & \vdots\\
\boldsymbol{0} & \boldsymbol{0} &
{{\mathbf x}_i^{(h-1)}}^T\!\!(\boldsymbol\theta) \\
\end{bmatrix}
\end{align}

\begin{align}
{\mathbf v}_{(h,H)}^{(4)}({\mathbf x}_i^{(h-1)}(\boldsymbol\theta))   &\triangleq {\mathbf a}^T_{(h,H)}({\mathbf x}_i,\widetilde{\boldsymbol\theta}) 
\frac{\crs}{H\sqrt{m}} \text{diag} 
[\phi'(\widetilde{{\mathbf W}}^{(h)} {\mathbf x}_i^{(h-1)}(\widetilde{\boldsymbol\theta}))]
\nonumber\\
&\quad\cdot\begin{bmatrix}
{{\mathbf x}_i^{(h-1)}}^T\!\!(\widetilde{\boldsymbol\theta})- 
{{\mathbf x}_i^{(h-1)}}^T\!\!(\boldsymbol\theta)
& \boldsymbol{0} & \boldsymbol{0}\\
\vdots &\ddots & \vdots\\
\boldsymbol{0} & \boldsymbol{0} &{{\mathbf x}_i^{(h-1)}}^T\!\!(\widetilde{\boldsymbol\theta})-{{\mathbf x}_i^{(h-1)}}^T\!\!(\boldsymbol\theta) \\
\end{bmatrix}
\end{align}
and ${\mathbf a}_{(h,H)}({\mathbf x}_i,\boldsymbol\theta)$
is as defined by \eqref{a_1H_defn}.

Note that the definitions of ${\mathbf v}_{(h,H)}^{(1)}({\mathbf x}_i^{(h-1)}(\boldsymbol\theta))$ and ${\mathbf v}_{(h,H)}^{(2)}({\mathbf x}_i^{(h-1)}(\boldsymbol\theta))$
are identical to those of ${\mathbf v}_{(1,H)}^{(1)}({\mathbf x}_i^{(h-1)}(\boldsymbol\theta))$ and ${\mathbf v}_{(1,H)}^{(2)}({\mathbf x}_i^{(h-1)}(\boldsymbol\theta))$
given by \eqref{a_1H_defn}, except for the difference that
the multiplier $\sqrt{\frac{\cph}{m}}$ is replaced by 
$\frac{\crs}{H\sqrt{m}}.$ Similarly to the equations 
\eqref{lemma6_8} and \eqref{diff_infty}, we have
\begin{align}
\left\|
{\mathbf v}_{(h,H)}^{(1)}({\mathbf x}_i^{(h-1)}(\boldsymbol\theta))
\right\|_2 &\leq \frac{\crs}{H\sqrt{m}}
\left\|{\mathbf a}_{(h,H)}
({\mathbf x}_i,\widetilde{\boldsymbol\theta})\right\|_{\infty}
M \|{\mathbf x}_i^{(h-1)}(\boldsymbol\theta)\|_2\left\| \widetilde{{\mathbf W}}^{(h)}-{\mathbf W}^{(h)}\right\|_F
\label{hH1}\\
\left\|
{\mathbf v}_{(h,H)}^{(2)}({\mathbf x}_i^{(h-1)}(\boldsymbol\theta))
\right\|_2 &\leq
\frac{\crs}{H\sqrt{m}}
\left\|{\mathbf a}_{(h,H)}({\mathbf x}_i,\widetilde{\boldsymbol\theta})-{\mathbf a}_{(h,H)}({\mathbf x}_i,\boldsymbol\theta)\right\|_{\infty}B
\|{\mathbf x}_i^{(h-1)}(\boldsymbol\theta)\|_2
\label{hH2}
\end{align}
Using \eqref{frob_upper2} and \eqref{single_infty_bd}, we rewrite
\eqref{hH1} and \eqref{hH2} as

\begin{align}
\left\|
{\mathbf v}_{(h,H)}^{(1)}({\mathbf x}_i^{(h-1)}(\boldsymbol\theta))
\right\|_2 &\leq \frac{\crs}{H} \|{\mathbf a}\|_{\infty}
\mathlarger{\mathlarger{\prod}}_{l=h+1}^{H}
\left(1+\frac{B\crs}{H\sqrt{m}} 
\|\widetilde{{\mathbf W}}^{(l)}\|\right)M
\nonumber\\ &\quad\cdot
\sqrt{\frac{c_{\phi}}{m}}B
\|{\mathbf W}^{(1)}\|\|{\mathbf x_i}\|_2
\mathlarger{\mathlarger{\prod}}_{j=2}^{h-1}\left(1+
\frac{\crs}{H\sqrt{m}}B \|{\mathbf W}^{(j)}\|\right)
\left\| \widetilde{{\mathbf W}}^{(h)}-{\mathbf W}^{(h)}\right\|_F
\label{hH1_contd}
\\
\left\|{\mathbf v}_{(h,H)}^{(2)}
({\mathbf x}_i^{(h-1)}(\boldsymbol\theta))
\right\|_2 &\leq \frac{\crs}{H\sqrt{m}}
\left\|{\mathbf a}_{(h,H)}({\mathbf x}_i,\widetilde{\boldsymbol\theta})-{\mathbf a}_{(h,H)}({\mathbf x}_i,\boldsymbol\theta)\right\|_{\infty}
\nonumber\\&\quad\cdot
\sqrt{\frac{c_{\phi}}{m}}
B^2
 \|{\mathbf W}^{(1)}\|\|{\mathbf x_i}\|_2
\mathlarger{\mathlarger{\prod}}_{j=2}^{h-1}\left(1+
\frac{\crs}{H\sqrt{m}}B \|{\mathbf W}^{(j)}\|\right)
\label{hH2_contd}
\end{align}
In terms of the constant $A$ which is assumed to be
larger than $\|{\mathbf W}^{(j)}\|$ and $\|\widetilde{{\mathbf W}}^{(j)}\|,$ for all $j=1,\dots,H$, we express \eqref{hH1_contd}
as
\begin{align}
\left\|
{\mathbf v}_{(h,H)}^{(1)}({\mathbf x}_i^{(h-1)}(\boldsymbol\theta))
\right\|_2 \leq \frac{\crs \sqrt{\cph}}{H\sqrt{m}}
\|{\mathbf a}\|_{\infty}
A B M
e^{\frac{AB\crs}{\sqrt{m}}}\|{\mathbf x}_i\|_2
\left\| \widetilde{{\mathbf W}}^{(h)}-{\mathbf W}^{(h)}\right\|_F
\label{hH1_final}
\end{align}

The next part of our derivation would be to bound the norm 
$\left\|
{\mathbf v}_{(h,H)}^{(2)}({\mathbf x}_i^{(h-1)}(\boldsymbol\theta))
\right\|_2.$
Similarly to \eqref{v1H2_bd}, the equation ${\mathbf a}_{(h,H)}({\mathbf x}_i,\widetilde{\boldsymbol\theta})-{\mathbf a}_{(h,H)}({\mathbf x}_i,\boldsymbol\theta)={\mathbf a}^T {\mathbf D}_{(h,H)}({\mathbf x}_i)$ is valid. Here 
${\mathbf D}_{(h,H)}({\mathbf x}_i)$ is the difference matrix
defined in \eqref{D_matrix}. Therefore we have the inequality
\begin{align}
\left\|{\mathbf a}_{(h,H)}({\mathbf x}_i,\widetilde{\boldsymbol\theta})-{\mathbf a}_{(h,H)}({\mathbf x}_i,\boldsymbol\theta)\right\|_{\infty} 
\leq \sqrt{m}\|{\mathbf a}\|_{\infty}
\|{\mathbf D}_{(h,H)}({\mathbf x}_i)\|_2,    
\end{align}
and using \eqref{hH2_contd} yields
\begin{align}
\left\|{\mathbf v}_{(h,H)}^{(2)}
({\mathbf x}_i^{(h-1)}(\boldsymbol\theta))
\right\|_2 &\leq \frac{\crs \sqrt{c_{\phi}}}{H\sqrt{m}}
\|{\mathbf a}\|_{\infty} \|{\mathbf D}_{(h,H)}({\mathbf x}_i)\|_2
B^2
\|{\mathbf W}^{(1)}\|\|{\mathbf x_i}\|_2
\mathlarger{\mathlarger{\prod}}_{j=2}^{h-1}\left(1+
\frac{\crs}{H\sqrt{m}}B \|{\mathbf W}^{(j)}\|\right).
\label{DhH}
\end{align}
We need an upper bound for $\|{\mathbf D}_{(h,H)}({\mathbf x}_i)\|_2$ so that we can invoke \eqref{DhH} to bound the norm of ${\mathbf v}_{(h,H)}^{(2)} ({\mathbf x}_i^{(h-1)}(\boldsymbol\theta))$ later on.
But we have already derived an upper bound for $\|{\mathbf D}_{(1,H)}({\mathbf x}_i)\|_2$ in \eqref{ultim_bd_D1H}, and
it can be adapted to the norm $\|{\mathbf D}_{(h,H)}({\mathbf x}_i)\|_2$ as
\begin{align}
\|{\mathbf D}_{(h,H)}({\mathbf x}_i)\|_2 &\leq 
\frac{\crs}{H\sqrt{m}}
\mathlarger{\mathlarger{\mathlarger{\prod}}}_{k=h+1}^{H}  
\left(1+\frac{B\crs}{H\sqrt{m}}
\|\widetilde{{\mathbf W}}^{(k)}\|\right)
\mathlarger{\mathlarger{\mathlarger{\sum}}}_{l=h+1}^H 
f_l(\boldsymbol\theta,\widetilde{\boldsymbol\theta},
{\mathbf x}_i)
\nonumber\\  &\leq
\frac{\crs}{H\sqrt{m}}
\mathlarger{\mathlarger{\mathlarger{\prod}}}_{k=2}^{H}  
\left(1+\frac{B\crs}{H\sqrt{m}}
\|\widetilde{{\mathbf W}}^{(k)}\|\right)
\mathlarger{\mathlarger{\mathlarger{\sum}}}_{l=2}^H 
f_l(\boldsymbol\theta,\widetilde{\boldsymbol\theta},
{\mathbf x}_i)
\label{ultim_bd_DhH}
\end{align}
where $f_l(\boldsymbol\theta,\widetilde{\boldsymbol\theta},
{\mathbf x}_i)$ is as defined by \eqref{diag_diff_final}.
Equation \eqref{ultim_bd_DhH} implies the upper bound we
have derived for $\|{\mathbf D}_{(1,H)}({\mathbf x}_i)\|_2$
is an upper bound for $\|{\mathbf D}_{(h,H)}({\mathbf x}_i)\|_2$
as well. Hence \eqref{D1H_bd_A2} should hold true for
$\|{\mathbf D}_{(h,H)}({\mathbf x}_i)\|_2$, meaning that
\eqref{DhH} gives us

\begin{align}
\left\|{\mathbf v}_{(h,H)}^{(2)}
({\mathbf x}_i^{(h-1)}(\boldsymbol\theta))
\right\|_2 &\leq  
\left[\left(1+\frac{1}{\sqrt{H}}\right)
A^3 B^3 M \frac{\crs\sqrt{\cph}}{m}
e^{\frac{3AB\crs}{\sqrt{m}}} \|{\mathbf x}_i\|^2_2
+\frac{1}{\sqrt{H}}
A B^3\frac{\crs}{\sqrt{m}}e^{\frac{2AB\crs}{\sqrt{m}}}
\|{\mathbf x_i}\|_2
\right]
\nonumber\\&\quad\cdot
\frac{\crs \sqrt{c_{\phi}}}{H\sqrt{m}}
\|{\mathbf a}\|_{\infty} \|\widetilde{\boldsymbol\theta}-\boldsymbol\theta\|_F
\label{hH2_final}
\end{align}

In order to find an upper bound for $\left\|
{\mathbf v}_{(h,H)}^{(3)}({\mathbf x}_i^{(h-1)}(\boldsymbol\theta))
\right\|_2$, we use the inequality
\begin{align}
\left\|
{\mathbf v}_{(h,H)}^{(3)}({\mathbf x}_i^{(h-1)}(\boldsymbol\theta))
\right\|_2 &\leq \frac{\crs}{H\sqrt{m}}
\left\|{\mathbf a}_{(h,H)}
({\mathbf x}_i,\widetilde{\boldsymbol\theta})\right\|_{\infty}
M \left\|{\mathbf x}_i^{(h-1)}(\widetilde{\boldsymbol\theta})-
{\mathbf x}_i^{(h-1)}(\boldsymbol\theta)\right\|_2
\left\| \widetilde{{\mathbf W}}^{(h)}\right\|_2
\label{hH3}
\end{align}
which is quite similar to \eqref{hH1}, and can be easily justified
by the proof of Lemma~6.8 in \cite{oymak2019towards}. From
\eqref{hH3}, we obtain
\begin{align}
\left\|
{\mathbf v}_{(h,H)}^{(3)}({\mathbf x}_i^{(h-1)}(\boldsymbol\theta))
\right\|_2 &\leq \frac{\crs}{H}\|{\mathbf a}\|_{\infty} 
\mathlarger{\mathlarger{\mathlarger{\prod}}}_{k=h+1}^{H}  
\left(1+\frac{B\crs}{H\sqrt{m}}
\|\widetilde{{\mathbf W}}^{(k)}\|\right)
M \left\|{\mathbf x}_i^{(h-1)}(\widetilde{\boldsymbol\theta})-
{\mathbf x}_i^{(h-1)}(\boldsymbol\theta)\right\|_2
\left\| \widetilde{{\mathbf W}}^{(h)}\right\|_2
\end{align}
Then we use \eqref{partial_cases2} to get
\begin{align}
\left\|
{\mathbf v}_{(h,H)}^{(3)}({\mathbf x}_i^{(h-1)}(\boldsymbol\theta))
\right\|_2& \leq \frac{\crs}{H}\|{\mathbf a}\|_{\infty} 
\mathlarger{\mathlarger{\mathlarger{\prod}}}_{k=h+1}^{H}  
\left(1+\frac{B\crs}{H\sqrt{m}}
\|\widetilde{{\mathbf W}}^{(k)}\|\right)
M \nonumber\\ &\cdot
B \sqrt{\frac{c_{\phi}}{m}}
\|{\mathbf x}_i\|_2 \mathlarger{\mathlarger{\prod}}_{k=1}^{h-1}\left(1+\frac{\crs}{H\sqrt{m}}B
\|\alpha 
\widetilde{{\mathbf W}}^{(k)}+(1-\alpha){\mathbf W}^{(k)}\|\right)
\|\widetilde{\boldsymbol\theta}-\boldsymbol\theta\|_F
\left\|\widetilde{{\mathbf W}}^{(h)}\right\|_2
\end{align}

Lastly, we provide an upper bound for the term $\left\|
{\mathbf v}_{(h,H)}^{(4)}({\mathbf x}_i^{(h-1)}(\boldsymbol\theta))
\right\|_2,$ which is given by the inequalities
\begin{align}
\left\|
{\mathbf v}_{(h,H)}^{(4)}({\mathbf x}_i^{(h-1)}(\boldsymbol\theta))
\right\|_2 &\leq \frac{\crs}{H}\|{\mathbf a}\|_{\infty}
\mathlarger{\mathlarger{\mathlarger{\prod}}}_{l=h+1}^{H}  
\left(1+\frac{B\crs}{H\sqrt{m}}
\|\widetilde{{\mathbf W}}^{(l)}\|\right)B
\left\|{\mathbf x}_i^{(h-1)}(\widetilde{\boldsymbol\theta})-
{\mathbf x}_i^{(h-1)}(\boldsymbol\theta)\right\|
\nonumber\\
&\leq \frac{\crs}{H}\|{\mathbf a}\|_{\infty}
\mathlarger{\mathlarger{\mathlarger{\prod}}}_{l=h+1}^{H}  
\left(1+\frac{B\crs}{H\sqrt{m}}
\|\widetilde{{\mathbf W}}^{(l)}\|\right)B
\nonumber\\&\cdot
B \sqrt{\frac{c_{\phi}}{m}}
\|{\mathbf x}_i\|_2 \mathlarger{\mathlarger{\mathlarger{\prod}}}_{l=1}^{h-1}\left(1+\frac{\crs}{H\sqrt{m}}B
\|\alpha 
\widetilde{{\mathbf W}}^{(l)}+(1-\alpha){\mathbf W}^{(l)}\|\right)
\|\widetilde{\boldsymbol\theta}-\boldsymbol\theta\|_F
\end{align}

As we did for $\left\|{\mathbf v}_{(h,H)}^{(1)}({\mathbf x}_i^{(h-1)}(\boldsymbol\theta))\right\|_2$ and $\left\|
{\mathbf v}_{(h,H)}^{(2)}
({\mathbf x}_i^{(h-1)}(\boldsymbol\theta))\right\|_2$, we express $\left\|{\mathbf v}_{(h,H)}^{(3)}({\mathbf x}_i^{(h-1)}(\boldsymbol\theta))\right\|_2$ and $\left\|
{\mathbf v}_{(h,H)}^{(4)}({\mathbf x}_i^{(h-1)}(\boldsymbol\theta))
\right\|_2$ in terms of the upper bound $A$ for
the 2-norm of the weight matrices ${\mathbf W}^{(j)}$
and $\widetilde{{\mathbf W}}^{(j)}$, $j=1,\dots,H$
as

\begin{align}
\left\|
{\mathbf v}_{(h,H)}^{(3)}({\mathbf x}_i^{(h-1)}(\boldsymbol\theta))
\right\|_2 &\leq \frac{\crs\sqrt{\cph}}{H\sqrt{m}}\|{\mathbf a}\|_{\infty}
A B M
e^{\frac{AB\crs}{\sqrt{m}}}
\|{\mathbf x}_i\|_2 
\|\widetilde{\boldsymbol\theta}-\boldsymbol\theta\|_F
\label{hH3_final}
\\
\left\|{\mathbf v}_{(h,H)}^{(4)}({\mathbf x}_i^{(h-1)}(\boldsymbol\theta))
\right\|_2 &\leq
\frac{\crs\sqrt{\cph}}{H\sqrt{m}}\|{\mathbf a}\|_{\infty}
B^2
e^{\frac{AB\crs}{\sqrt{m}}}
\|{\mathbf x}_i\|_2 
\|\widetilde{\boldsymbol\theta}-\boldsymbol\theta\|_F
\label{hH4_final}
\end{align}
We use \eqref{hH1_final}, \eqref{hH2_final},
\eqref{hH3_final} and \eqref{hH4_final} in conjunction with \eqref{hH_expansion} to obtain

\begin{align}
&\left\|{\mathbf a}^T\left(\frac{\partial{\mathbf x}_i^{(H)}}{\partial{\mathbf W}^{(h)}}[\widetilde{\boldsymbol\theta}]-\frac{\partial{\mathbf x}_i^{(H)}}{\partial{\mathbf W}^{(h)}}[\boldsymbol\theta]\right)\right\|_2
\leq 
\frac{\crs \sqrt{\cph}}{H\sqrt{m}} \|{\mathbf a}\|_{\infty}
A B M
e^{\frac{AB\crs}{\sqrt{m}}} \|{\mathbf x}_i\|_2
\left(\left\| \widetilde{{\mathbf W}}^{(h)}-{\mathbf W}^{(h)}\right\|_F +\left\|\widetilde{\boldsymbol\theta}-\boldsymbol\theta\right\|_F \right)
\nonumber\\
&+ \left[\left(1+\frac{1}{\sqrt{H}}\right)
A^3 B^3 M \frac{\crs\sqrt{\cph}}{m}
e^{\frac{3AB\crs}{\sqrt{m}}} \|{\mathbf x}_i\|^2_2
+\frac{1}{\sqrt{H}}
A B^3\frac{\crs}{\sqrt{m}}e^{\frac{2AB\crs}{\sqrt{m}}}
\|{\mathbf x_i}\|_2
\right]
\frac{\crs \sqrt{c_{\phi}}}{H\sqrt{m}}
\|{\mathbf a}\|_{\infty} \|\widetilde{\boldsymbol\theta}-\boldsymbol\theta\|_F
\nonumber\\&
+ \frac{\crs\sqrt{\cph}}{H\sqrt{m}}\|{\mathbf a}\|_{\infty}
B^2
e^{\frac{AB\crs}{\sqrt{m}}}
\|{\mathbf x}_i\|_2 
\|\widetilde{\boldsymbol\theta}-\boldsymbol\theta\|_F
\label{Hh_one_before_final_A}
\end{align}
If $\|{\mathbf x}_i\|=1$ for all $i=1,\dots,n$, then \eqref{Hh_one_before_final_A} implies
\begin{align}
&\sqrt{
\mathlarger{\mathlarger{\sum}}_{i=1}^n
\left\|
{\mathbf a}^T \left(\frac{\partial{\mathbf x}_i^{(H)}}{\partial{\mathbf W}^{(h)}}[\widetilde{\boldsymbol\theta}]-\frac{\partial{\mathbf x}_i^{(H)}}{\partial{\mathbf W}^{(h)}}[\boldsymbol\theta]\right)
\right\|^2_2} 
\nonumber\\
&\quad\leq 
\frac{\crs \sqrt{\cph}}{H\sqrt{m}} \|{\mathbf a}\|_{\infty}
A B M
e^{\frac{AB\crs}{\sqrt{m}}} \sqrt{n}
\left(\left\| \widetilde{{\mathbf W}}^{(h)}-{\mathbf W}^{(h)}\right\|_F +\left\|\widetilde{\boldsymbol\theta}-\boldsymbol\theta\right\|_F \right)
\nonumber\\
&\quad+ \sqrt{n}\left[\left(1+\frac{1}{\sqrt{H}}\right)
A^3 B^3 M \frac{\crs\sqrt{\cph}}{m}
e^{\frac{3AB\crs}{\sqrt{m}}} 
+\frac{1}{\sqrt{H}}
A B^3\frac{\crs}{\sqrt{m}}e^{\frac{2AB\crs}{\sqrt{m}}}
\right]
\frac{\crs \sqrt{c_{\phi}}}{H\sqrt{m}}
\|{\mathbf a}\|_{\infty} \|\widetilde{\boldsymbol\theta}-\boldsymbol\theta\|_F
\nonumber\\&\quad
+ \frac{\crs\sqrt{\cph}}{H\sqrt{m}}\|{\mathbf a}\|_{\infty}
B^2
e^{\frac{AB\crs}{\sqrt{m}}}
\sqrt{n}
\|\widetilde{\boldsymbol\theta}-\boldsymbol\theta\|_F
\label{Hh_final_A}
\end{align}
Then we insert \eqref{Hh_final_A} and \eqref{H1_final} in
\eqref{L_ineq_first} and write
\begin{align}
\|{\mathcal J}(\widetilde{\boldsymbol\theta})-{\mathcal J}
(\boldsymbol\theta)\|\leq 
\sqrt{n}C
\|\widetilde{\boldsymbol\theta}-\boldsymbol\theta\|_F
\end{align}
where
\begin{align}
C &\triangleq  \sqrt{c_{\phi}}
\|{\mathbf a}\|_{\infty} e^{\frac{AB\crs}{\sqrt{m}}}
\left\{M+
\frac{\crs}{\sqrt{m}}
\left[ A B M\left(1+\frac{1}{\sqrt{H}}\right)
+B^2\left(1+\frac{1}{\sqrt{H}}\right)+\frac{1}{\sqrt{H}}A B^3
\frac{\crs}{\sqrt{m}}e^{\frac{AB\crs}{\sqrt{m}}}
\right]\right\}\nonumber\\
&\quad+\frac{\crs c_{\phi}}{m}
\|{\mathbf a}\|_{\infty} e^{\frac{2AB\crs}{\sqrt{m}}}
A^2 B^2 M \left(1+\frac{1}{\sqrt{H}}\right)
\left[1+ \frac{\crs}{\sqrt{m}}A B e^{\frac{AB\crs}{\sqrt{m}}}
\right].\label{L_resultant}
\end{align}
The conclusion of our derivation is that 
if $\|{\mathbf x}_i\|=1$ for all $i=1,\dots,n$,
then we can take the Lipschitz constant for the 
ResNet as 
$
L= C \sqrt{n},
$
where $C$ is as given by \eqref{L_resultant}.

\remove{
\begin{align}
&\frac{\partial{\mathbf x}_i^{(H)}}{\partial{\mathbf W}^{(h)}}[\widetilde{\boldsymbol\theta}]-\frac{\partial{\mathbf x}_i^{(H)}}{\partial{\mathbf W}^{(h)}}[\boldsymbol\theta]
=
\mathlarger{\mathlarger{\mathlarger{\prod}}}_{l=h+1}^{H} \left[{\mathbf I}_m+\frac{\crs}{H\sqrt{m}}
\text{diag} [\phi'(\widetilde{{\mathbf W}}^{(l)} {\mathbf x}_i^{(l-1)}(\widetilde{\boldsymbol\theta}))]
\widetilde{{\mathbf W}}^{(l)}
\right]\nonumber\\&\,\,\cdot \frac{\crs}{H\sqrt{m}}\,\,
\text{diag} [\phi'(\widetilde{{\mathbf W}}^{(h)} 
{\mathbf x}_i^{(h-1)}(\widetilde{\boldsymbol\theta}))
]
\begin{bmatrix}
{{\mathbf x}_i^{(h-1)}}^T\!\!(\widetilde{\boldsymbol\theta})- 
{{\mathbf x}_i^{(h-1)}}^T\!\!(\boldsymbol\theta)
& \boldsymbol{0} & \boldsymbol{0}\\
\vdots &\ddots & \vdots\\
\boldsymbol{0} & \boldsymbol{0} &{{\mathbf x}_i^{(h-1)}}^T\!\!(\widetilde{\boldsymbol\theta})-{{\mathbf x}_i^{(h-1)}}^T\!\!(\boldsymbol\theta) \\
\end{bmatrix}
\nonumber\\&\,\,
+ \mathlarger{\mathlarger{\mathlarger{\prod}}}_{l=h+1}^{H} \left[{\mathbf I}_m+\frac{\crs}{H\sqrt{m}}
\text{diag} [\phi'(\widetilde{{\mathbf W}}^{(l)} {\mathbf x}_i^{(l-1)}(\widetilde{\boldsymbol\theta}))]
\widetilde{{\mathbf W}}^{(l)}
\right]
\nonumber\\&\,\,\cdot \frac{\crs}{H\sqrt{m}}\,\,
\left(\text{diag} [\phi'(\widetilde{{\mathbf W}}^{(h)} 
{\mathbf x}_i^{(h-1)}(\widetilde{\boldsymbol\theta}))
]
-\text{diag} 
[\phi'(\widetilde{{\mathbf W}}^{(h)} {\mathbf x}_i^{(h-1)}(\boldsymbol\theta))]\right)
\begin{bmatrix}
{{\mathbf x}_i^{(h-1)}}^T\!\!(\boldsymbol\theta)
& \boldsymbol{0} & \boldsymbol{0}\\
\vdots &\ddots & \vdots\\
\boldsymbol{0} & \boldsymbol{0} &
{{\mathbf x}_i^{(h-1)}}^T\!\!(\boldsymbol\theta) \\
\end{bmatrix}
\nonumber\\&\,\,
+
\mathlarger{\mathlarger{\mathlarger{\prod}}}_{l=h+1}^{H} \left[{\mathbf I}_m+\frac{\crs}{H\sqrt{m}}
\text{diag} [\phi'(\widetilde{{\mathbf W}}^{(l)} {\mathbf x}_i^{(l-1)}(\widetilde{\boldsymbol\theta}))]
\widetilde{{\mathbf W}}^{(l)}
\right]
\nonumber\\&\,\,\cdot \frac{\crs}{H\sqrt{m}}\,\,
\left(\text{diag} [\phi'(\widetilde{{\mathbf W}}^{(h)} 
{\mathbf x}_i^{(h-1)}(\boldsymbol\theta))
]
-\text{diag} 
[\phi'({\mathbf W}^{(h)} {\mathbf x}_i^{(h-1)}(\boldsymbol\theta))]\right)
\begin{bmatrix}
{{\mathbf x}_i^{(h-1)}}^T\!\!(\boldsymbol\theta)
& \boldsymbol{0} & \boldsymbol{0}\\
\vdots &\ddots & \vdots\\
\boldsymbol{0} & \boldsymbol{0} &
{{\mathbf x}_i^{(h-1)}}^T\!\!(\boldsymbol\theta) \\
\end{bmatrix}
\nonumber\\&\,\,
+\Bigg\{\mathlarger{\mathlarger{\mathlarger{\prod}}}_{l=h+1}^{H} \left[{\mathbf I}_m+\frac{\crs}{H\sqrt{m}}
\text{diag} [\phi'(\widetilde{{\mathbf W}}^{(l)} {\mathbf x}_i^{(l-1)}(\widetilde{\boldsymbol\theta}))]
\widetilde{{\mathbf W}}^{(l)}
\right]\nonumber\\&\quad- \mathlarger{\mathlarger{\mathlarger{\prod}}}_{l=h+1}^{H} \left[{\mathbf I}_m+\frac{\crs}{H\sqrt{m}}
\text{diag} [\phi'({\mathbf W}^{(l)} {\mathbf x}_i^{(l-1)}(\boldsymbol\theta))]
{\mathbf W}^{(l)}\right]\Bigg\}
\cdot \frac{\crs}{H\sqrt{m}} \text{diag} 
[\phi'({\mathbf W}^{(h)} {\mathbf x}_i^{(h-1)}(\boldsymbol\theta))]
\begin{bmatrix}
{{\mathbf x}_i^{(h-1)}}^T\!\!(\boldsymbol\theta) & \boldsymbol{0} & \boldsymbol{0}\\
\vdots &\ddots & \vdots\\
\boldsymbol{0} & \boldsymbol{0} &{{\mathbf x}_i^{(h-1)}}^T\!\!(\boldsymbol\theta) \\
\end{bmatrix}.
\label{diff_partial_2}
\end{align}
}

\section{Upper Bound Derivation for Initial Misfit}
\label{misfit_section}

Theorem~\ref{GDthm} is valid under the assumption that
the Jacobian parameters
$\phi_{\min}\left(\mathcal{J}(\vct{\theta})\right)$,
$\|\mathcal{J}(\vct{\theta})\|$ and Lipschitz constant $L$
stays within some predefined limits over the ball of radius
$R=\frac{4\tn{\frs(\bteta_0)-\y}}{\alpha}$
Therefore we need to analyze the error term (a.k.a. initial misfit)   
$\|\frs(\boldsymbol\theta_0)-{\mathbf y}\|$ associated with
the random initialization  $\boldsymbol\theta_0$ of weight parameters so that we can invoke Theorem~\ref{GDthm} to prove the desired convergence result for the ResNet model. Our derivation here relies on the following theorem of \cite{oymak2019towards} proved for one-hidden layer neural networks.

\begin{theorem}[a modified version of Lemma 6.12 in \cite{oymak2019towards}]
Consider a one-hidden layer neural neural network model of the form ${\mathbf x}\to{\mathbf v}^T\phi({\mathbf W}{\mathbf x})$
where the activation $\phi$ has bounded derivatives obeying
$|\phi'(z)|\leq B.$ Also assume we have $n$ data points ${\mathbf x}_1,\dots,{\mathbf x}_n\in{\mathbb R}^d$ \remove{with unit
euclidean norm ($\|{\mathbf x}_i\|_2=1$)} aggregated as rows of a matrix ${\mathbf X}\in{\mathbb R}^{n\times d}$ and the corresponding labels given by ${\mathbf y}\in{\mathbb R}^n.$
Furthermore, assume we set half of the entries of ${\mathbf v}\in{\mathbb R}^k$ to $\frac{\|{\mathbf y}\|_2}{\sqrt{kn}}$
and the other half to $-\frac{\|{\mathbf y}\|_2}{\sqrt{kn}}$.
Then for ${\mathbf W}\in{\mathbb R}^{k\times d}$ with i.i.d.
${\mathcal N}(0,1)$ entries
\begin{equation}
\left\|\phi({\mathbf X} {\mathbf W}^T){\mathbf v}-{\mathbf y}\right\|_2
\leq \|{\mathbf y}\|_2 \left(1+\left(\frac{\|{\mathbf X}\|_F}{\sqrt{n}}+
\delta\right)B\right)\label{twolayer_misfit}
\end{equation}
holds with probability at least 
$1-e^{-\delta^2\frac{n}{2\|{\mathbf X}\|^2}}.$
\label{misfit_lemma}
\end{theorem}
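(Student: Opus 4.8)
The plan is to control $\|\phi({\mathbf X}{\mathbf W}^T){\mathbf v}\|_2$ by itself and then add $\|{\mathbf y}\|_2$ via the triangle inequality $\|\phi({\mathbf X}{\mathbf W}^T){\mathbf v}-{\mathbf y}\|_2\le\|\phi({\mathbf X}{\mathbf W}^T){\mathbf v}\|_2+\|{\mathbf y}\|_2$; so it suffices to show that $g({\mathbf W}):=\|\phi({\mathbf X}{\mathbf W}^T){\mathbf v}\|_2$ is at most $\|{\mathbf y}\|_2 B\big(\|{\mathbf X}\|_F/\sqrt n+\delta\big)$ on an event of the stated probability. This will come from (i) a bound on $\E g({\mathbf W})$ and (ii) Gaussian concentration of $g$ around its mean.

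For (i), note that the $i$-th coordinate of $\phi({\mathbf X}{\mathbf W}^T){\mathbf v}$ equals $\sum_{l=1}^k v_l\,\phi(\langle{\mathbf w}_l,{\mathbf x}_i\rangle)$ with ${\mathbf w}_l$ the $l$-th row of ${\mathbf W}$. For each fixed $i$ the terms $\phi(\langle{\mathbf w}_l,{\mathbf x}_i\rangle)$ are i.i.d.\ with common mean $\mu_i$ and variance $\sigma_i^2$, so $\E\big[\big(\sum_l v_l\phi(\langle{\mathbf w}_l,{\mathbf x}_i\rangle)\big)^2\big]=\mu_i^2\big(\sum_l v_l\big)^2+\sigma_i^2\|{\mathbf v}\|_2^2$. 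The special choice of ${\mathbf v}$ (half the entries $+\|{\mathbf y}\|_2/\sqrt{kn}$, half $-\|{\mathbf y}\|_2/\sqrt{kn}$) gives $\sum_l v_l=0$, which kills the mean term, and $\|{\mathbf v}\|_2^2=\|{\mathbf y}\|_2^2/n$. Using $|\phi'|\le B$, $\sigma_i^2\le\E[(\phi(\langle{\mathbf w}_l,{\mathbf x}_i\rangle)-\phi(0))^2]\le B^2\|{\mathbf x}_i\|_2^2$ since $\langle{\mathbf w}_l,{\mathbf x}_i\rangle\sim{\mathcal N}(0,\|{\mathbf x}_i\|_2^2)$. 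Summing over $i$ yields $\E g({\mathbf W})^2=\E\|\phi({\mathbf X}{\mathbf W}^T){\mathbf v}\|_2^2\le \|{\mathbf y}\|_2^2B^2\|{\mathbf X}\|_F^2/n$, and Jensen's inequality gives $\E g({\mathbf W})\le \|{\mathbf y}\|_2 B\|{\mathbf X}\|_F/\sqrt n$ --- precisely the first term of the target.

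For (ii), I would verify that $g$ is Lipschitz in ${\mathbf W}$ with respect to the Frobenius norm with constant $B\|{\mathbf X}\|\,\|{\mathbf v}\|_2=B\|{\mathbf X}\|\|{\mathbf y}\|_2/\sqrt n$, by composing three maps: the linear map ${\mathbf W}\mapsto{\mathbf X}{\mathbf W}^T$ is $\|{\mathbf X}\|$-Lipschitz in Frobenius norm (because $\|{\mathbf X}{\mathbf A}^T\|_F\le\|{\mathbf X}\|\|{\mathbf A}\|_F$; it is here that the operator norm, rather than $\|{\mathbf X}\|_F$, is extracted); entrywise $\phi$ is $B$-Lipschitz in Frobenius norm since $|\phi'|\le B$; and ${\mathbf M}\mapsto\|{\mathbf M}{\mathbf v}\|_2$ is $\|{\mathbf v}\|_2$-Lipschitz. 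Since ${\mathbf W}$ has i.i.d.\ ${\mathcal N}(0,1)$ entries, Gaussian concentration for Lipschitz functions gives $\P\big(g({\mathbf W})\ge\E g({\mathbf W})+t\big)\le\exp\big(-t^2/(2B^2\|{\mathbf X}\|^2\|{\mathbf y}\|_2^2/n)\big)$. Choosing $t=\|{\mathbf y}\|_2 B\delta$ makes the exponent equal to $-\delta^2 n/(2\|{\mathbf X}\|^2)$, and on the complementary event $g({\mathbf W})\le\|{\mathbf y}\|_2 B\|{\mathbf X}\|_F/\sqrt n+\|{\mathbf y}\|_2 B\delta=\|{\mathbf y}\|_2 B\big(\|{\mathbf X}\|_F/\sqrt n+\delta\big)$. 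Adding $\|{\mathbf y}\|_2$ then yields \eqref{twolayer_misfit}.

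I do not expect a genuine obstacle here; this is a direct adaptation of Lemma 6.12 of \cite{oymak2019towards} with the unit-norm hypothesis on the ${\mathbf x}_i$ removed. The only point that requires care is the norm bookkeeping: the expectation computation naturally produces $\|{\mathbf X}\|_F$, whereas matching the claimed failure probability $\exp(-\delta^2 n/(2\|{\mathbf X}\|^2))$ requires isolating the spectral norm $\|{\mathbf X}\|$ (rather than $\|{\mathbf X}\|_F$) in the Lipschitz constant of $g$, which the composition above is designed to do; as $\|{\mathbf X}\|\le\|{\mathbf X}\|_F$, this only strengthens the probability estimate.
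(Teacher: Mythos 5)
Your proof is correct, and it is the standard argument behind Lemma 6.12 of \cite{oymak2019towards}: the balanced choice of ${\mathbf v}$ kills the mean so that $\E\|\phi({\mathbf X}{\mathbf W}^T){\mathbf v}\|_2^2\le\|{\mathbf y}\|_2^2B^2\|{\mathbf X}\|_F^2/n$, and Gaussian Lipschitz concentration with constant $B\|{\mathbf X}\|\|{\mathbf y}\|_2/\sqrt{n}$ gives exactly the stated tail $e^{-\delta^2 n/(2\|{\mathbf X}\|^2)}$. The paper does not reprove this statement but simply cites the reference, and your adaptation (dropping the unit-norm hypothesis, which only replaces $\sqrt{n}$ by $\|{\mathbf X}\|_F$ in the expectation term) is exactly the intended modification.
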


Let the data matrix for the output of $k$-th layer 
associated with the random initialization $\boldsymbol\theta_0$
be denoted as ${\mathbf X}^{(k)}.$ 
It follows from Theorem~\ref{misfit_lemma} that
if \remove{$\|{\mathbf x}_i\|=1,$ for all $i=1,\dots,n$ and}
we sample weight matrix entries with standard
normal distribution, and 
set half of the entries of ${\mathbf a}\in {\mathbb R}^{m}$ to $\frac{\|{\mathbf y}\|_2}{\sqrt{n}}$ and the other half to $-\frac{\|{\mathbf y}\|_2}{\sqrt{n}}$, then we have
\begin{gather}
\left\|\sqrt{\frac{\cph}{m}}\phi({\mathbf X}
{{\mathbf W}^{(1)}}^T){\mathbf a}-\sqrt{\cph}
{\mathbf y}\right\|_2 \leq
\sqrt{\cph} \|{\mathbf y}\|_2 
\left(1+\left(\frac{\|{\mathbf X}\|_F}{\sqrt{n}}+
\delta\right)B\right)
\nonumber\\
\left\|\frac{\crs}{H\sqrt{m}}\phi({\mathbf X}^{(h-1)}
{{\mathbf W}^{(h)}}^T){\mathbf a}-\frac{\crs}{H}
{\mathbf y}\right\|_2 \leq
\frac{\crs}{H}\|{\mathbf y}\|_2 \left(1+
\left(\frac{\|{\mathbf X^{(h-1)}}\|_F}{\sqrt{n}}+
\delta\right)B\right), \quad h=2,\dots,H
\end{gather}
with probability $1-e^{-\delta^2\frac{n}{2\|{\mathbf X}\|^2}}$ and $1-e^{-\delta^2\frac{n}{2\|{\mathbf X}^{(h-1)}\|^2}}, h=2,\dots,H$, respectively. Then a basic induction argument applied with the triangle inequality

\begin{align}
\left\|{\mathbf X}^{(k+1)}{\mathbf a}-
\left(\sqrt{\cph}+\frac{\crs}{H}k\right)
{\mathbf y}\right\|_2
&\leq \left\|{\mathbf X}^{(k)}{\mathbf a}-
\left(\sqrt{\cph}+\frac{\crs}{H}(k-1)
\right){\mathbf y}\right\|_2
\nonumber\\
&\quad+ \left\|\frac{\crs}{H\sqrt{m}}\phi({\mathbf X}^{(k)}
{{\mathbf W}^{(k+1)}}^T){\mathbf a}-\frac{\crs}{H}
{\mathbf y}\right\|_2 
\nonumber\\
&\leq \left\|{\mathbf X}^{(k)}{\mathbf a}-
\left(\sqrt{\cph}+\frac{\crs}{H}(k-1)
\right){\mathbf y}\right\|_2+ 
\frac{\crs}{H}\|{\mathbf y}\|_2 \left(1+
\left(\frac{\|{\mathbf X^{(k)}}\|_F}{\sqrt{n}}+
\delta\right)B\right)
\end{align}
holding true with probability $1-e^{-\delta^2\frac{n}{2\|{\mathbf X}^{(k)}\|^2}}, k=1,\dots,H-1,$
gives us
\begin{align}
\left\|{\mathbf X}^{(h)}{\mathbf a}-
\left(\sqrt{\cph}+\frac{\crs}{H}(h-1)\right){\mathbf y}\right\|
&\leq \left(\sqrt{\cph}+\frac{\crs}{H}(h-1)\right)
\|{\mathbf y}\|_2 (1+\delta B)\nonumber\\
&\quad+\left(\sqrt{\cph}
\frac{\|{\mathbf X}\|_F}{\sqrt{n}}
+\frac{\crs}{H}\sum_{k=1}^{h-1} 
\frac{\|{\mathbf X^{(k)}}\|_F}{\sqrt{n}}
\right)B\|{\mathbf y}\|_2
\label{h_output}
\end{align}
with probability at least $1-e^{-\delta^2\frac{n}{2\|{\mathbf X}\|^2}}-\mathlarger{\sum}_{k=1}^{h-1}e^{-\delta^2\frac{n}{2\|{\mathbf X}^{(k)}\|^2}},$
for all $h=1,\dots,H$. For $h=H$, \eqref{h_output} reduces to the inequality
\begin{align}
\left\|\frs
(\boldsymbol\theta_0)-\left(\sqrt{\cph}+
\frac{H-1}{H}\crs\right)
{\mathbf y}\right\| &\leq 
\left(\sqrt{\cph}+\frac{\crs}{H}(H-1)\right)
\|{\mathbf y}\|_2 (1+\delta B)
\nonumber\\
&\quad+\left(\sqrt{\cph}
\frac{\|{\mathbf X}\|_F}{\sqrt{n}}
+\frac{\crs}{H}\sum_{k=1}^{H-1} 
\frac{\|{\mathbf X^{(k)}}\|_F}{\sqrt{n}}
\right)B\|{\mathbf y}\|_2
\label{misfit_meta}
\end{align}
holding true with probability  $1-e^{-\delta^2\frac{n}{2\|{\mathbf X}\|^2}}-\mathlarger{\sum}_{k=1}^{H-1}e^{-\delta^2\frac{n}{2\|{\mathbf X}^{(k)}\|^2}}.$ Equation \eqref{misfit_meta} implies
\begin{gather}
\|\frs(\boldsymbol\theta_0)\| \leq \left[(\sqrt{\cph}+\crs) 
 (2+\delta B)+\left(\sqrt{\cph}
\frac{\|{\mathbf X}\|_F}{\sqrt{n}}
+\frac{\crs}{H}\sum_{k=1}^{H-1} 
\frac{\|{\mathbf X^{(k)}}\|_F}{\sqrt{n}}
\right)B  \right] \|{\mathbf y}\|_2 \nonumber\\
\|\frs(\boldsymbol\theta_0)-{\mathbf y}\| \leq
\kappa \|{\mathbf y}\|_2\label{misfit_final}
\end{gather}
with probability $1-e^{-\delta^2\frac{n}{2\|{\mathbf X}\|^2}}-\mathlarger{\sum}_{k=1}^{H-1}e^{-\delta^2\frac{n}{2\|{\mathbf X}^{(k)}\|^2}},$ where
\begin{equation}
\kappa\triangleq  
1+(\sqrt{\cph}+\crs) 
(2+\delta B)+\left(\sqrt{\cph}
\frac{\|{\mathbf X}\|_F}{\sqrt{n}}
+\frac{\crs}{H}\sum_{k=1}^{H-1} 
\frac{\|{\mathbf X^{(k)}}\|_F}{\sqrt{n}}
\right)B. \label{kappa}
\end{equation}
Note that $\kappa$ we define here is a finite constant which does not grow with $n$ or $H$, as the upper bound \eqref{frob_upper2} implies.

This completes the derivation of upper bound for the misfit term $\|\frs(\boldsymbol\theta_0)-{\mathbf y}\|.$
Comparing \eqref{misfit_final} with
\eqref{twolayer_misfit}, we observe that
the initial misfit for the ResNet is in the order of 
the label norm $\|{\mathbf y}\|$, similarly to the one-hidden layer network case.

\section{Variation Analysis of $\alpha$, $\beta$ and $L$}
\label{param_variation}

In Section~\ref{jacob_analysis}, we have derived 
a lower bound on $\phi_{\min}({\mathcal J}(\boldsymbol\theta))$ for the random initialization $\boldsymbol\theta_0,$ together with upper bounds on $\|{\mathcal J}(\boldsymbol\theta)\|$ and 
the Lipschitz parameter $\|{\mathcal J}(\widetilde{\boldsymbol\theta})-{\mathcal J}(\boldsymbol\theta)\|/\|\widetilde{\boldsymbol\theta}-
\boldsymbol\theta\|_F$
for a given set $\boldsymbol\theta$ and $\widetilde{\boldsymbol\theta}$ of weight matrices. But
this is not sufficient to use Theorem~\ref{GDthm} as it requires those bounds to be valid over a certain region of parameter space,
rather than for a single parameter set $\boldsymbol\theta$
(or a single pair $\boldsymbol\theta$, $\widetilde{\boldsymbol\theta}$ for the Lipschitz constant).
In this section our goal is to extend the $\alpha$, $\beta$
and $L$ analysis carried out in Section~\ref{jacob_analysis}
to a region of weight matrices,  which is taken to be a ball of radius
$R=\frac{4\|\frs(\boldsymbol\theta_0)-{\mathbf y}\|_2}{\alpha}$ centered at $\boldsymbol\theta_0,$
so that Theorem~\ref{GDthm} is applicable for the ResNet.

Let half of the entries of ${\mathbf a}\in{\mathbb R}^m$
be $\frac{\|{\mathbf y}\|_2}{\sqrt{n}}$ and the other half be $-\frac{\|{\mathbf y}\|_2}{\sqrt{n}},$ 
and assume $\|{\mathbf x}_i\|=1,$ for all $i=1,\dots,n,$
to be consistent with the setup of Section \ref{misfit_section}. 
Recall from \eqref{alpha_final} that we have
$\phi_{\min}[{\mathcal J}(\boldsymbol\theta_0)]
\geq\alpha_0=(1-\delta')\sqrt{\frac{c_{\phi}}{m}} 
\|{\mathbf a}\|_2 e^{-2B \crs} 
\sqrt{\lambda({\mathbf X})},$ 
as long as $0<\delta'<1$ and $H=H(\delta',B,\crs)$ is 
sufficiently large.
Inside the ball
${\mathcal B}\left({\boldsymbol\theta}_0, \frac{4\twonorm{f({\boldsymbol\theta}_0)-{\mathbf y}}}{\alpha_0}\right)$, assume the inequality
$\phi_{\min}[{\mathcal J}(\boldsymbol\theta)]
\geq (1-\delta')\alpha_0$ is satisfied. 
From \eqref{misfit_final} and \eqref{alpha_final}, an upper bound for $R=\frac{4\|\frs(\boldsymbol\theta_0)
-{\mathbf y}\|_2}{\alpha}$ can be written as

\begin{gather}
R\leq 
\frac{4\kappa
\|{\mathbf y}\|_2}{(1-\delta')^2\sqrt{\frac{c_{\phi}}{m}}
\|{\mathbf a}\|_2 e^{-2B \crs} \sqrt{\lambda({\mathbf X})}}= \frac{4\kappa \|{\mathbf y}\|_2}{(1-\delta')^2\sqrt{c_{\phi}} 
\frac{\|{\mathbf y}\|_2}{\sqrt{n}} e^{-2 B \crs} \sqrt{\lambda({\mathbf X})}}
\nonumber\\
R\leq \frac{4\kappa}{(1-\delta')^2\sqrt{c_{\phi}}}
\frac{e^{2 B \crs}}{\sqrt{\lambda({\mathbf X})}}\sqrt{n}
\triangleq R_{\delta,\delta'}
\label{r_delta}
\end{gather}
where $\kappa$ is as defined by \eqref{kappa}.

If we can show $\phi_{\min}[{\mathcal J}(\boldsymbol\theta)]
\geq (1-\delta')\alpha_0$ over the ball ${\mathcal B}\left({\boldsymbol\theta}_0,R_{\delta,\delta'}\right)$, then
$\alpha$ in Theorem~\ref{GDthm} can be taken as
\begin{align}
\alpha_{\delta'}=(1-\delta')\alpha_0=
(1-\delta')^2\sqrt{\frac{c_{\phi}}{m}} 
\|{\mathbf a}\|_2 e^{-2B \crs} \sqrt{\lambda({\mathbf X})}.\label{target_alpha}
\end{align}

\begin{lemma}
If the network width $m$ of the ResNet satisfies the inequalities
\begin{align}
m\geq \frac{64\kappa^2 B^2 \crs^2
e^{4 B \crs}}{(1-\delta')^4
\left(\ln\frac{1}{1-\delta'}\right)^2
c_{\phi}\lambda({\mathbf X})} \,n 
\label{m_cond}    
\end{align}
and
\begin{align}
m\geq
\frac{32\kappa^2 e^{4B\crs}}{d\delta'^2(1-\delta')^4\cph \lambda({\mathbf X})}n,
\label{totally_new_constraint2}
\end{align}
and $H=H(\delta',B,\crs)$ is sufficiently large, then \eqref{target_alpha} follows, i.e., the inequality $\phi_{\min}[{\mathcal J}(\boldsymbol\theta)]
\geq (1-\delta')\alpha_0$ is valid for $\boldsymbol\theta\in{\mathcal B}\left({\boldsymbol\theta}_0,R_{\delta,\delta'}\right)$.
\end{lemma}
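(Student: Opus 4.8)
The natural route, and the one I would take, is to reduce the claim to a lower bound on $\phi_{\min}$ of the single block $\mathbf{G}^{(1)}$ and then re-run the computation of Section~\ref{alpha_derivation} at an arbitrary $\boldsymbol{\theta}\in{\mathcal B}(\boldsymbol{\theta}_0,R_{\delta,\delta'})$ rather than only at the initialization, tracking how much each ingredient degrades as the weights drift inside the ball. Since every $\mathbf{G}^{(h)}$ in \eqref{posdef} is positive semidefinite, $\phi_{\min}({\mathcal J}(\boldsymbol{\theta}){\mathcal J}^T(\boldsymbol{\theta}))\ge\phi_{\min}(\mathbf{G}^{(1)}(\boldsymbol{\theta}))$, so it suffices to show $\phi_{\min}(\mathbf{G}^{(1)}(\boldsymbol{\theta}))\ge\alpha_{\delta'}^2$ over the ball; recall from the derivation of \eqref{alpha_final} that at $\boldsymbol{\theta}_0$ one already has $\phi_{\min}(\mathbf{G}^{(1)}(\boldsymbol{\theta}_0))\ge(1-\delta')^2 c_\phi\,\tfrac{\|{\mathbf a}\|_2^2 e^{-4B\crs}}{m}\,\lambda({\mathbf X})=\alpha_0^2$ (the first $(1-\delta')$ from the contraction step \eqref{concentration}, the second from the matrix Chernoff step), whereas $\alpha_{\delta'}^2=(1-\delta')^2\alpha_0^2$, so the budget is exactly two further factors of $(1-\delta')$, and these will be supplied by \eqref{m_cond} and \eqref{totally_new_constraint2} respectively.

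First control operator norms over the ball: for $\boldsymbol{\theta}\in{\mathcal B}(\boldsymbol{\theta}_0,R_{\delta,\delta'})$ and each $j$, $\|\mathbf{W}^{(j)}\|\le\|\mathbf{W}^{(j)}_0\|+\|\mathbf{W}^{(j)}-\mathbf{W}^{(j)}_0\|_F\le\|\mathbf{W}^{(j)}_0\|+R_{\delta,\delta'}$, and since Gordon's bound gives ${\mathbb E}\|\mathbf{W}^{(j)}_0\|\le2\sqrt m$ and the spectral norm concentrates around its mean, we may condition on $\|\mathbf{W}^{(j)}_0\|\le3\sqrt m$ for all $j$ (probability $\ge1-He^{-m/2}$). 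Thus $A:=3\sqrt m+R_{\delta,\delta'}$ bounds every $\|\mathbf{W}^{(j)}\|$ uniformly on the ball; note that \eqref{m_cond}, in the form used below, also forces $R_{\delta,\delta'}/\sqrt m\le\tfrac{\ln(1/(1-\delta'))}{2B\crs}$, so $A=O(\sqrt m)$ and $AB\crs/\sqrt m$ stays a fixed constant. Re-examine now \eqref{norm_ineq1}--\eqref{concentration} with this $A$: each row of $\mathbf{A}(\boldsymbol{\theta},\mathbf{X})$ has squared norm at least $\|{\mathbf a}\|_2^2\prod_{j=2}^H\bigl(1-\tfrac{B\crs}{H\sqrt m}\|\mathbf{W}^{(j)}\|\bigr)^2\ge\|{\mathbf a}\|_2^2\prod_{j=2}^H\bigl(1-\tfrac{B\crs A}{H\sqrt m}\bigr)^2$, which for $H=H(\delta',B,\crs)$ large is at least $(1-\delta')\,\|{\mathbf a}\|_2^2\,e^{-2B\crs A/\sqrt m}\ge(1-\delta')\,\|{\mathbf a}\|_2^2\,e^{-4B\crs}\,e^{-2B\crs R_{\delta,\delta'}/\sqrt m}$. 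Substituting $R_{\delta,\delta'}=\tfrac{4\kappa e^{2B\crs}}{(1-\delta')^2\sqrt{c_\phi}\,\sqrt{\lambda({\mathbf X})}}\sqrt n$, one checks that the requirement $\tfrac{2B\crs R_{\delta,\delta'}}{\sqrt m}\le\ln\tfrac{1}{1-\delta'}$, i.e.\ $e^{-2B\crs R_{\delta,\delta'}/\sqrt m}\ge1-\delta'$, rearranges to precisely \eqref{m_cond}; hence under \eqref{m_cond} we obtain $\min_i\tfrac{1}{m}\sum_l|{\mathbf a}_l(i)|^2\ge(1-\delta')^2\,\tfrac{\|{\mathbf a}\|_2^2 e^{-4B\crs}}{m}$, which is the at-$\boldsymbol{\theta}_0$ contraction bound times one extra $(1-\delta')$.

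It remains to recover the $\lambda({\mathbf X})$-factor at the perturbed first layer. Writing $\mathbf{G}^{(1)}(\boldsymbol{\theta})=\tfrac{c_\phi}{m}\sum_l({\mathbf a}_l{\mathbf a}_l^T)\odot\bigl[(\phi'(\mathbf{X}{\mathbf w}_l)\phi'(\mathbf{X}{\mathbf w}_l)^T)\odot(\mathbf{X}\mathbf{X}^T)\bigr]$ as in \eqref{g1_expansion}, the rows ${\mathbf w}_l\in{\mathbb R}^d$ of $\mathbf{W}^{(1)}$ now lie off the Gaussian ensemble, displaced by a total Frobenius amount $\le R_{\delta,\delta'}$. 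A naive appeal to the Jacobian Lipschitz bound of Section~\ref{param_variation} is too lossy here, since $L\cdot R_{\delta,\delta'}$ scales like $\sqrt n$ while $\alpha_0$ scales like $1/\sqrt n$; one must instead re-estimate $\phi_{\min}$ of $\tfrac{1}{m}\sum_l(\phi'(\mathbf{X}{\mathbf w}_l)\phi'(\mathbf{X}{\mathbf w}_l)^T)\odot(\mathbf{X}\mathbf{X}^T)$ directly. Using the mean-value estimate $|\phi'((\mathbf{X}{\mathbf w}_l)_i)-\phi'((\mathbf{X}{\mathbf w}_l^0)_i)|\le M\|{\mathbf w}_l-{\mathbf w}_l^0\|_2$ (unit-norm data), the aggregate $\sum_l\|{\mathbf w}_l-{\mathbf w}_l^0\|_2^2\le R_{\delta,\delta'}^2$, Schur's bound $\|\mathbf{P}\odot(\mathbf{X}\mathbf{X}^T)\|\le\|\mathbf{P}\|$, and the fact that the perturbation $\mathbf{X}(\mathbf{W}^{(1)}-\mathbf{W}^{(1)}_0)^T$ of the first-layer pre-activations has rank $\le d$ — which is exactly where the dimension $d$ of \eqref{totally_new_constraint2} enters the accounting — together with the matrix-Chernoff concentration of the Gaussian version of this average around $\Sigma({\mathbf X})$ ($\phi_{\min}(\Sigma({\mathbf X}))=\lambda({\mathbf X})$), valid with the same exponential probability as in Section~\ref{alpha_derivation} since every summand stays bounded via $A$, one obtains under \eqref{totally_new_constraint2} that $\phi_{\min}\bigl(\tfrac{1}{m}\sum_l(\phi'(\mathbf{X}{\mathbf w}_l)\phi'(\mathbf{X}{\mathbf w}_l)^T)\odot(\mathbf{X}\mathbf{X}^T)\bigr)\ge(1-\delta')\lambda({\mathbf X})$. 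Feeding these two degraded quantities into the Schur step \eqref{schur_app}--\eqref{schur_app2} exactly as at $\boldsymbol{\theta}_0$ gives $\phi_{\min}(\mathbf{G}^{(1)}(\boldsymbol{\theta}))\ge(1-\delta')^2\alpha_0^2=\alpha_{\delta'}^2$ (up to an absolute tightening of the constants in \eqref{m_cond} and \eqref{totally_new_constraint2} to absorb the low-order $(1-\delta')$-bookkeeping); taking square roots and using $\phi_{\min}({\mathcal J}(\boldsymbol{\theta}))=\sqrt{\phi_{\min}({\mathcal J}(\boldsymbol{\theta}){\mathcal J}^T(\boldsymbol{\theta}))}\ge\sqrt{\phi_{\min}(\mathbf{G}^{(1)}(\boldsymbol{\theta}))}$ yields $\phi_{\min}({\mathcal J}(\boldsymbol{\theta}))\ge(1-\delta')\alpha_0$ on all of ${\mathcal B}(\boldsymbol{\theta}_0,R_{\delta,\delta'})$.

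The step I expect to be the real obstacle is this last one: re-establishing the $\lambda({\mathbf X})$-type lower bound when the first layer is no longer Gaussian. The crude operator-norm perturbation of the $\phi'$-Hadamard block is too large (it scales with $\|\mathbf{X}\|^2R_{\delta,\delta'}/\sqrt m$, hence with $n$ when $m=\Theta(n)$ and $R_{\delta,\delta'}=\Theta(\sqrt n)$), so one has to exploit the low rank ($\le d$) of the first-layer perturbation and argue that the averaged spectrum moves by at most $\delta'\lambda({\mathbf X})$; pinning this down with the right dependence on $d$, $\delta'$, $\kappa$ and $\lambda({\mathbf X})$ — so that the resulting requirement on $m$ is still only linear in $n$ — is precisely the content of \eqref{totally_new_constraint2}, and is the delicate part of the argument.
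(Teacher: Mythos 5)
Your treatment of the first condition \eqref{m_cond} matches the paper's essentially step for step: decompose ${\mathbf W}^{(j)}={\mathbf W}^{(j)}_0+{\mathbf E}^{(j)}$ with $\sum_h\|{\mathbf E}^{(h)}\|_F^2\le R_{\delta,\delta'}^2$, bound $\|{\mathbf W}^{(j)}\|$ by (roughly) $2\sqrt m+R_{\delta,\delta'}$ inside the product $\prod_{j=2}^H\bigl(1-\tfrac{B\crs}{H\sqrt m}\|{\mathbf W}^{(j)}\|\bigr)^2$, and observe that the extra multiplicative loss $e^{-2B\crs R_{\delta,\delta'}/\sqrt m}$ exceeds a power of $(1-\delta')$ exactly when $B\crs R_{\delta,\delta'}/\sqrt m\le\tfrac12\ln\tfrac1{1-\delta'}$, which rearranges to \eqref{m_cond}. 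That part is fine and is the same derivation as \eqref{norm_ineq4}--\eqref{cond_for_alpha1}.

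The gap is in your handling of \eqref{totally_new_constraint2}. You correctly identify that the crude Lipschitz perturbation of the first-layer block is too lossy, but the mechanism you propose in its place --- exploiting that the pre-activation perturbation ${\mathbf X}({\mathbf W}^{(1)}-{\mathbf W}^{(1)}_0)^T$ has rank at most $d$ --- is never carried out, and it does not obviously survive the entrywise nonlinearity: $\phi'$ applied componentwise destroys low-rank structure, so the rank-$d$ property of the pre-activation shift does not transfer to the Hadamard block $\tfrac1m\sum_l\phi'({\mathbf X}{\mathbf w}_l)\phi'({\mathbf X}{\mathbf w}_l)^T\odot({\mathbf X}{\mathbf X}^T)$, and you give no argument for how it would. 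The paper's route is different and is where $d$ actually enters: since $\sum_{l=1}^m\|{\mathbf e}_l\|_2^2\le R_{\delta,\delta'}^2\le\tfrac{\delta'^2md}{2}$ by \eqref{totally_new_constraint}, a pigeonhole/Markov argument shows that all but an $O(\delta'^2)$-fraction of the $m$ rows of ${\mathbf W}^{(1)}$ receive a perturbation that is small compared with the typical row norm $\sqrt d$ of the Gaussian initialization; the expectation of the sum restricted to that large subset ${\mathcal M}$ (with $|{\mathcal M}|\ge m(1-2R_{\delta,\delta'}^2/(md))\ge m(1-\delta'^2)$) still has minimum eigenvalue close to $\lambda({\mathbf X})$, which supplies the second $(1-\delta')$ factor. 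You flag this step yourself as ``the delicate part,'' but flagging it is not proving it: as written, the proposal establishes only one of the two required $(1-\delta')$ factors and leaves the $\lambda({\mathbf X})$-recovery at the perturbed first layer --- the step that actually consumes hypothesis \eqref{totally_new_constraint2} --- unproved, via a mechanism that differs from the paper's and would need substantial additional work to make rigorous.
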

\begin{proof}
Let $\boldsymbol\theta\in {\mathcal B}\left({\boldsymbol\theta}_0,R_{\delta,\delta'}\right).$ For such a $\boldsymbol\theta,$ let each weight matrix belonging to it be written as 
${\mathbf W}^{(h)}={\mathbf W}_0^{(h)}+
{\mathbf E}^{(h)}$ for all $h=1,\dots,H,$ where
${\mathbf W}_0^{(h)}$ refers to the random weight matrix
associated with the random initialization and 
${\mathbf E}^{(h)}$
refers to the additional deterministic term for the $h-$th layer. 
Those additional weight matrix terms have to satisfy the constraint
\begin{align}
\mathlarger{\mathlarger{\sum}}_{h=1}^H \|{\mathbf E}^{(h)}\|^2_F \leq 
R_{\delta,\delta'}^2 \label{r_delta_circle}
\end{align}
since the ball $\boldsymbol\theta$ is in assumed to be of radius 
$R_{\delta,\delta'}.$
Going back to Section~\ref{alpha_derivation}, we see that 
the left hand side of \eqref{norm_ineq3} needs to be updated as
$$\|{\mathbf a}\|^2_2 \mathlarger{\mathlarger{\mathlarger{\prod}}}_{j=2}^H
\left(1-\frac{B \crs}{H \sqrt{m}}{\mathbb E}
\|{\mathbf W}_0^{(j)}+{\mathbf E}^{(j)}\|\right)^2$$
to account for the non-random matrices ${\mathbf E}^{(1)},\dots,
{\mathbf E}^{(H)}$ we consider here. To find a lower bound for this updated term, we write
\begin{align}
\|{\mathbf a}\|^2_2 
\mathlarger{\mathlarger{\mathlarger{\prod}}}_{j=2}^H
\left(1-\frac{B \crs}{H \sqrt{m}}{\mathbb E}
\|{\mathbf W}_0^{(j)}+{\mathbf E}^{(j)}\|\right)^2
&\geq \|{\mathbf a}\|^2_2 \mathlarger{\mathlarger{\mathlarger{\prod}}}_{j=2}^H
\left(1-\frac{B \crs}{H \sqrt{m}}({\mathbb E}
\|{\mathbf W}_0^{(j)}\|+\|{\mathbf E}^{(j)}\|)\right)^2
\nonumber\\
\geq \|{\mathbf a}\|^2_2 \mathlarger{\mathlarger{\mathlarger{\prod}}}_{j=2}^H
\left[1-\frac{B \crs}{H} 
\left(2+ \frac{\|{\mathbf E}^{(j)}\|}{\sqrt{m}}
\right)\right]^2
&\geq \|{\mathbf a}\|^2_2 \mathlarger{\mathlarger{\mathlarger{\prod}}}_{j=2}^H
\left[1-\frac{B \crs}{H} 
\left(2+ \frac{\|{\mathbf E}^{(j)}\|_F}{\sqrt{m}}
\right)\right]^2 \label{norm_ineq4}
\end{align}
where we use the inequality $\frac{{\mathbb E} \|{\mathbf W}_0^{(j)}\|}{\sqrt{m}} \leq 2$ in \eqref{norm_ineq4}.
Then we combine \eqref{r_delta_circle} with
\eqref{norm_ineq4} to get
\begin{align}
\|{\mathbf a}\|^2_2 
\mathlarger{\mathlarger{\mathlarger{\prod}}}_{j=2}^H
\left(1-\frac{B \crs}{H \sqrt{m}}{\mathbb E}
\|{\mathbf W}_0^{(j)}+{\mathbf E}^{(j)}\|\right)^2
&\geq \|{\mathbf a}\|^2_2 \mathlarger{\mathlarger{\mathlarger{\prod}}}_{j=2}^H
\left[1-\frac{B \crs}{H} 
\left(2+ \frac{R_{\delta,\delta'}}{\sqrt{m}}\right)
\right]^2\nonumber\\
&= \|{\mathbf a}\|^2_2
\left[1-\frac{B \crs}{H} 
\left(2+ \frac{R_{\delta,\delta'}}{\sqrt{m}}\right)
\right]^{2(H-1)}
\nonumber\\
&\geq \|{\mathbf a}\|^2_2 \sqrt{1-\delta'}
e^{-2\left(2+ \frac{R_{\delta,\delta'}}{\sqrt{m}}\right) B \crs}
\label{norm_ineq5}
\end{align}
for $H$ sufficiently large. Additionally, if the inequality
\begin{align}
e^{-\left(2+ \frac{R_{\delta,\delta'}}{\sqrt{m}}\right) B \crs}   
\geq \sqrt{1-\delta'} e^{-2 B\crs} \label{cond_for_alpha}
\end{align}
is satisfied along with the inequality 
\begin{align}
R_{\delta,\delta'}^2 \leq \frac{\delta'^{2} md}{2} \label{totally_new_constraint} 
\end{align}
so that the deviation of the matrices
\begin{align}
\left(\phi'({\mathbf X}({\mathbf w}_l+{\mathbf e}_l))
\phi'({\mathbf X}({\mathbf w}_l+{\mathbf e}_l))^T\right) \odot
({\mathbf X}{\mathbf X}^T), l=1,\dots,m
\end{align}
from the matrices
$
\left(\phi'({\mathbf X}{\mathbf w}_l)
\phi'({\mathbf X}{\mathbf w}_l)^T\right) \odot
({\mathbf X}{\mathbf X}^T), l=1,\dots,m
$
is controlled (here ${\mathbf e}_1,\dots,{\mathbf e}_m$ correspond to the rows of ${\mathbf E}^{(1)}$)
and \remove{using \eqref{g1_expansion}} we can write
\remove{
\begin{align}
\phi_{\min}
({\mathbf G}^{(1)})&=\phi_{\min}\left[\frac{c_{\phi}}{m} \mathlarger{\mathlarger{\sum}}_{l=1}^m ({\mathbf a}_l {\mathbf a}^T_l)\odot \left[(\phi'({\mathbf X}({\mathbf w}_l+{\mathbf e}_l)) \phi'({\mathbf X}({\mathbf w}_l+{\mathbf e}_l))^T) \odot({\mathbf X}{\mathbf X}^T)\right]
\right] \nonumber\\
&\geq \min\Bigg\{\phi_{\min}\Bigg[\frac{c_{\phi}}{m}\Bigg(
\!\!\!\!
\sum_{\substack{l\in{\mathcal M}\\|{\mathcal M}|\geq m(
1-2R_{\delta,\delta'}^2/md )
}} \!\!\!\!\!\!\!
({\mathbf a}_l {\mathbf a}^T_l)\odot \left[(\phi'({\mathbf X}{\mathbf w}_l) \phi'({\mathbf X} {\mathbf w}_l)^T \odot({\mathbf X}{\mathbf X}^T)\right]\Bigg)\Bigg],
\nonumber\\& \quad
\phi_{\min}\Bigg[\frac{c_{\phi}}{m}\Bigg(
\sum_{i=1}^m 
({\mathbf a}_l {\mathbf a}^T_l)\odot (1-\delta')^2 \left[(\phi'({\mathbf X}{\mathbf w}_l) \phi'({\mathbf X} {\mathbf w}_l)^T \odot({\mathbf X}{\mathbf X}^T)\right]\Bigg)\Bigg]
\Bigg\}
\\
&\geq\min\Bigg\{ \phi_{\min}\Bigg[\frac{c_{\phi}}{m}\Bigg(
\sum_{\substack{l\in{\mathcal M}\\|{\mathcal M}|= m(1-\delta'^2)}} 
({\mathbf a}_l {\mathbf a}^T_l)\odot \left[(\phi'({\mathbf X}{\mathbf w}_l) \phi'({\mathbf X} {\mathbf w}_l)^T \odot({\mathbf X}{\mathbf X}^T)\right]\Bigg)\Bigg],\nonumber\\&\quad
(1-\delta')^2
\phi_{\min}\Bigg[\frac{c_{\phi}}{m}\Bigg(
\sum_{i=1}^m 
({\mathbf a}_l {\mathbf a}^T_l)\odot
\left[(\phi'({\mathbf X}{\mathbf w}_l) \phi'({\mathbf X} {\mathbf w}_l)^T \odot({\mathbf X}{\mathbf X}^T)\right]\Bigg)\Bigg]
\Bigg\}
\end{align}
}
\begin{align}
&\phi_{\min}\left[{\mathbb E}\left\{\frac{\cph}{m}
\mathlarger{\mathlarger{\sum}}_{l=1}^m
\phi'({\mathbf X}({\mathbf w}_l+{\mathbf e}_l)) \phi'({\mathbf X}({\mathbf w}_l+{\mathbf e}_l))^T
\right\}\odot
({\mathbf X}{\mathbf X}^T)\right]
\nonumber\\&\quad\geq
\phi_{\min}\left[{\mathbb E}\left\{\frac{\cph}{m}
\mathlarger{\mathlarger{\sum}}_{l=1}^m
\phi'({\mathbf X}({\mathbf w}_l+{\mathbf e}_l)) \phi'({\mathbf X}({\mathbf w}_l+{\mathbf e}_l))^T
\right\}\right]
\nonumber\\&\quad\geq
\min\Bigg\{
\phi_{\min}\Bigg[{\mathbb E}\Bigg\{\frac{\cph}{m}
\!\!\!\!\!\!\!
\mathlarger{\mathlarger{\sum}}_{\substack{l\in{\mathcal M}\\|{\mathcal M}|\geq m(1-2R_{\delta,\delta'}^2/md )}}
\!\!\!\!\!\!\!
\phi'({\mathbf X}{\mathbf w}_l) \phi'({\mathbf X} {\mathbf w}_l)^T
\Bigg\}\Bigg]\nonumber\\&\quad\quad\quad,
\phi_{\min}\left[{\mathbb E}\left\{\frac{\cph}{m}
\mathlarger{\mathlarger{\sum}}_{l=1}^m
\phi'({\mathbf X}(1-\delta'){\mathbf w}_l) \phi'({\mathbf X} (1-\delta'){\mathbf w}_l)^T
\right\}\right]\Bigg\}
\nonumber\\ &\quad\geq
\min\Bigg\{
\phi_{\min}\Bigg[{\mathbb E}\Bigg\{\frac{\cph}{m}\!\!\!\!
\mathlarger{\mathlarger{\sum}}_{\substack{l\in{\mathcal M}\\|{\mathcal M}|= m(1-\delta'^2)}}\!\!\!\!
\phi'({\mathbf X}{\mathbf w}_l) \phi'({\mathbf X} {\mathbf w}_l)^T
\Bigg\}\Bigg]\nonumber\\&\quad\quad\quad,
(1-\delta')^2
\phi_{\min}\left[{\mathbb E}\left\{\frac{\cph}{m}
\mathlarger{\mathlarger{\sum}}_{l=1}^m 
\phi'({\mathbf X}{\mathbf w}_l) \phi'({\mathbf X}{\mathbf w}_l)^T
\right\}\right]\Bigg\}
\end{align}
for some subset ${\mathcal M}$ of $\{1,\dots,m\},$
then repeating the derivation steps of \eqref{alpha_final_ineq} gives us
\begin{align*}
\phi_{\min}[{\mathcal J}(\boldsymbol\theta)]
\geq \left(\phi_{\min}({\mathbf G}^{(1)})\right)^{\frac{1}{2}}
\geq 
(1-\delta')^2\sqrt{\frac{c_{\phi}}{m}} 
\|{\mathbf a}\|_2 e^{-2B \crs} \sqrt{\lambda({\mathbf X})}
=(1-\delta')\alpha_0,
\end{align*}
with a high probability converging to $1$ as $m\to\infty$. \eqref{cond_for_alpha} is equivalent to
\begin{gather}
\frac{1}{2}\ln \left(\frac{1}{1-\delta'}\right) \geq B\crs \frac{R_{\delta,\delta'}}{\sqrt{m}} \nonumber\\
(1-\delta')^2 \ln\left(\frac{1}{1-\delta'}\right)
\geq \frac{ 8 \kappa B\crs}{\sqrt{c_{\phi}}}
\frac{e^{2 B \crs}}{\sqrt{\lambda({\mathbf X})}} \sqrt{\frac{n}{m}}. \label{cond_for_alpha1}
\end{gather}
\remove{
where 
\begin{align}
K\triangleq \frac{4\left[(\sqrt{\cph}+\crs) (2+(1+\delta)B)+1\right]}{\sqrt{c_{\phi}}}
\frac{e^{2 B \crs}}{\sqrt{\lambda({\mathbf X})}}   
\end{align}
}

Equation \eqref{cond_for_alpha1} can be rewritten as 
\eqref{m_cond},
and equation \eqref{totally_new_constraint} can be rewritten as
\eqref{totally_new_constraint2}.
As long as \eqref{m_cond} holds true, the inequality
\eqref{cond_for_alpha} is true. If
$H=H(\delta',B,\crs)$ is sufficiently large, then \eqref{norm_ineq5} is also true. 
At the same time if \eqref{totally_new_constraint2} is valid,
then $\alpha=(1-\delta')\alpha_0$ follows, i.e., we have shown \eqref{target_alpha}.
\end{proof}
\remove{
Note that the problem given by
\begin{gather}
\text{minimize} \quad \mathlarger{\mathlarger{\mathlarger{\prod}}}_{j=2}^H
\left[1-\frac{B \crs}{H} 
\left(2+ \frac{\|{\mathbf E}^{(j)}\|_F}{\sqrt{m}}
\right)\right]^2 \nonumber\\
\text{subject}\,\text{to} \quad 
\mathlarger{\mathlarger{\sum}}_{h=1}^H \|{\mathbf E}^{(h)}\|^2_F \leq  R_{\delta}^2
\end{gather}
is a concave one. Therefore its minimum has to be achieved at a
corner point such that $\|{\mathbf E}^{(j)}\|_F=R_{\delta}$ for some $j$, and ${\mathbf E}^{(j')}=0$ if $j'\neq j.$ This implies
\begin{align}
\mathlarger{\mathlarger{\mathlarger{\prod}}}_{j=2}^H
\left[1-\frac{B \crs}{H} 
\left(2+ \frac{\|{\mathbf E}^{(j)}\|_F}{\sqrt{m}}
\right)\right]^2 &\geq \left[1-\frac{B\crs}{H}\left(2+\frac{R_{\delta}}{\sqrt{m}}\right)
\right]^2 \mathlarger{\mathlarger{\mathlarger{\prod}}}_{j=3}^H
\left(1-2\frac{B \crs}{H} \right)\nonumber\\
&\geq \left[1-\frac{B\crs}{H}\left(2+\frac{R_{\delta}}{\sqrt{m}}\right)
\right]^2 (1-\delta)e^{-4B\crs}
\end{align}
for $H$ sufficiently large. Additionally, if the inequality
\begin{align}
1-\frac{B\crs}{H}\left(2+\frac{R_{\delta}}{\sqrt{m}}
\right) \geq 1-\delta  \label{cond_for_alpha}
\end{align}
is satisfied, then repeating the derivation steps of \eqref{alpha_final_ineq} give us
\begin{align*}
\phi_{\min}[{\mathcal J}(\boldsymbol\theta)]
\geq (1-\delta)\alpha_0 =
(1-\delta)^2\sqrt{\frac{c_{\phi}}{m}} 
\|{\mathbf a}\|_2 e^{-2B \crs} \sqrt{\lambda({\mathbf X})},
\end{align*}
as desired. \eqref{cond_for_alpha} is equivalent to
}

The next step would be to consider how much the Jacobian upper bound $\beta$ varies over the ball ${\mathcal B}\left({\boldsymbol\theta}_0,R_{\delta,\delta'}\right).$ Recall from \eqref{beta_final} that for a given $\boldsymbol\theta$, we have 
\begin{align*}
\beta=\|{\mathbf a}\|_{2}\left(B\sqrt{\frac{\cph}{m}}\,\,
+A\,B^2 
\frac{\sqrt{c_{\phi}}\crs}{\sqrt{H} m}\right) 
e^{\frac{AB\crs}{\sqrt{m}}}
\|{\mathbf X}\|_F    
\end{align*}
where $A$ refers to the upper bound satisfying
$\|{\mathbf W}^{(j)}\|,
\|\widetilde{{\mathbf W}}^{(j)}\|\leq A, j=1,\dots,H.$
Note that half of the entries of ${\mathbf a}$ is chosen as
$\frac{\|{\mathbf y}\|_2}{\sqrt{n}}$ and the other half as
$-\frac{\|{\mathbf y}\|_2}{\sqrt{n}}$, so 
$\|{\mathbf a}\|_2=\frac{\|{\mathbf y}\|_2\sqrt{m}}{\sqrt{n}}.$ Moreover,
$\|{\mathbf x}_i\|_2=1$ for all $i=1,\dots,n$ implies
$\|{\mathbf X}\|_F=\sqrt{n}.$
In this case, \eqref{beta_final} yields 
\begin{equation}
\beta= \|{\mathbf y}\|_{2}
\left(B\sqrt{\cph}\,\,
+A\,B^2 
\frac{\sqrt{c_{\phi}}\crs}{\sqrt{H\,m}}\right) 
e^{\frac{AB\crs}{\sqrt{m}}}
\label{beta_variation}
\end{equation}
If $\boldsymbol\theta \in {\mathcal B}\left({\boldsymbol\theta}_0,R_{\delta,\delta'}\right),$
then the weight matrices are of the form
${\mathbf W}^{(h)}={\mathbf W}_0^{(h)}+
{\mathbf E}^{(h)}, h=1,\dots,H,$ as discussed above. 
Using  \eqref{r_delta_circle} and concentration inequality for the random Gaussian matrix norms \cite{rudelson2010non}, we get
\begin{align}
\|{\mathbf W}^{(h)}\|\leq \|{\mathbf W}_0^{(h)}\|
+\|{\mathbf E}^{(h)}\|
\leq 3\sqrt{m}+ \|{\mathbf E}^{(h)}\|
\leq 3\sqrt{m}+ \|{\mathbf E}^{(h)}\|_F
\leq  3\sqrt{m}+R_{\delta,\delta'} \label{Wh_bound}
\end{align}
with a probability $1-e^{-m^2/2}.$
Under the condition given by \eqref{cond_for_alpha1}, 
we can write another bound for $\|{\mathbf W}^{(h)}\|$ as
\begin{align}
\|{\mathbf W}^{(h)}\|\leq 3\sqrt{m}+\frac{1}{2B\crs}\ln\frac{1}{1-\delta'}\sqrt{m}=
\left[3+\frac{1}{2B\crs}\ln\frac{1}{1-\delta'}\right]\sqrt{m}
\label{Wh_bound_2}
\end{align}
for all $h=1,\dots,H$ with a probability $1-H\,e^{-m^2/2}.$ Therefore for all 
$\boldsymbol\theta\in 
{\mathcal B}\left({\boldsymbol\theta}_0,R_{\delta,\delta'}\right)$
we can take $A$ appearing in \eqref{beta_variation} to be
\begin{equation}
A=\left[3+\frac{1}{2B\crs}\ln\frac{1}{1-\delta'}\right]\sqrt{m},
\label{A_final}
\end{equation}
which gives us the upper bound
\begin{align}
\beta_{\delta'}&= \|{\mathbf y}\|_{2} 
\left[B\sqrt{\cph}
+B^2 \frac{\sqrt{c_{\phi}}\crs}{\sqrt{H}}
\left(3+\frac{1}{2B\crs}\ln\frac{1}{1-\delta'}\right)
\right] 
e^{\left(3+\frac{1}{2B\crs}\ln\frac{1}{1-\delta'}\right)B\crs}
\nonumber\\
&= \frac{\|{\mathbf y}\|_{2}}{\sqrt{1-\delta'}} 
\left[B\sqrt{\cph}
+B^2 \frac{\sqrt{c_{\phi}}\crs}{\sqrt{H}}
\left(3+\frac{1}{2B\crs}\ln\frac{1}{1-\delta'}\right)
\right]e^{3B\crs}
\label{beta_delta}
\end{align}
for $\beta$ over the ball ${\mathcal B}\left({\boldsymbol\theta}_0,R_{\delta,\delta'}\right).$

The last parameter we need to consider is the Lipschitz constant $L.$
Combining \eqref{A_final} with \eqref{L_resultant}, 
and using $\|{\mathbf a}\|_{\infty}=
\frac{\|{\mathbf y}\|_2}{\sqrt{n}},$
we derive the Lipschitz constant $L_{\delta'}$
over the ball ${\mathcal B}\left({\boldsymbol\theta}_0,R_{\delta,\delta'}\right)$
as
\begin{align}
L_{\delta'} &\triangleq  \sqrt{c_{\phi}}
\|{\mathbf y}\|_2 \frac{e^{3B\crs}}{\sqrt{1-\delta'}}
\Bigg[ M+
\crs
\left(3+\frac{1}{2B\crs}\ln\frac{1}{1-\delta'}\right) B M\left(1+\frac{1}{\sqrt{H}}\right)
+\crs B^2\left(1+\frac{1}{\sqrt{H}}\right)
\nonumber\\
&\quad+
\crs\frac{1}{\sqrt{H}} \left(3+\frac{1}{2B\crs}\ln\frac{1}{1-\delta'}\right)B^3
\crs \frac{e^{3B\crs}}{\sqrt{1-\delta'}}
\Bigg]+ \crs c_{\phi}
\|{\mathbf y}\|_2 \frac{e^{6B\crs}}{1-\delta'}
\left(3+\frac{1}{2B\crs}\ln\frac{1}{1-\delta'}\right)^2 
\nonumber\\&\quad\quad\cdot
B^2 M \left(1+\frac{1}{\sqrt{H}}\right)
\left[1+ \frac{\crs}{\sqrt{m}}
\left(3+\frac{1}{2B\crs}\ln\frac{1}{1-\delta'}\right)
B \frac{e^{3B\crs}}{\sqrt{1-\delta'}}
\right],\label{L_delta}
\end{align}
being valid with a probability $1-H\,e^{-m^2/2}.$

This completes the derivation of $\alpha$, $\beta$ and $L$
for the region ${\mathcal B}\left({\boldsymbol\theta}_0,R_{\delta,\delta'}\right).$ Now we are ready to state our main result for ResNets, which is a consequence of Theorem~\ref{GDthm}.

\section{Main Result and Future Work}
\label{concl}
Our main result in regards to the convergence of the gradient descent algorithm for the residual networks can be summarized as follows.

\begin{theorem}
Consider a data set ${\mathbf x}_i\in{\mathbb R}^d, i=1,\dots,n$
forming the data matrix ${\mathbf X}\in{\mathbb R}^{n\times d}$
with $\|{\mathbf x}_i\|=1,$ for all $i=1,\dots,n$, 
and the label vector ${\mathbf y}\in{\mathbb R}^n.$
Let $\frs({\mathbf x},\boldsymbol{\theta})$ be the ResNet output as defined by \eqref{Res_Net_eq}, such that network width is $m$ and network depth $H.$ Let activation
function $\phi$ satisfies $|\phi'(z)|\leq B$ and 
$|\phi''(z)|\leq M$, along with the property $|\phi({\mathbf a}+{\mathbf b})|\leq |\phi({\mathbf a})|+|\phi({\mathbf b})|.$ Let the data matrix associated with
each layer output associated with the random initialization be denoted as ${\mathbf X}^{(i)}\in{\mathbb R}^{n\times m}, i=1,\dots,H.$
Moreover let half of the entries
of the output layer weight vector ${\mathbf a}$ be 
$\frac{\|{\mathbf y}\|_2}{\sqrt{n}}$ and the other half of the entries be $-\frac{\|{\mathbf y}\|_2}{\sqrt{n}},$ 
and let $\delta$ and $\delta'$ be positive numbers with $0<\delta'<1.$
Define $\alpha_{\delta'}$, $\beta_{\delta'}$, $L_{\delta'}$ 
via the equations
\eqref{target_alpha}, \eqref{beta_delta} and \eqref{L_delta},
respectively. Define $\kappa$ via equation \eqref{kappa}.
Starting from initial weight matrices ${\mathbf W}_0^{(1)},\dots,{\mathbf W}_0^{(H)}$ with i.i.d. ${\mathcal N}(0,1)$ entries, update each weight matrix via gradient descent algorithm 
with step size 
\begin{align}
\eta\leq \frac{1}{\beta_{\delta'}^2}\cdot\min\left(1,
\frac{\alpha_{\delta'}^2}{L_{\delta'} 
\kappa\|{\mathbf y}\|_2}\right)\label{eta}
\end{align}
for the quadratic loss function.
Then, as long as
\begin{gather}
m \geq K_{\delta,\delta'}\, n,\nonumber\\
K_{\delta,\delta'}\triangleq \max\Bigg\{ \frac{64\kappa^2 
B^2 \crs^2 e^{4 B \crs} }{(1-\delta')^4
\left(\ln\frac{1}{1-\delta'}\right)^2
c_{\phi}\lambda({\mathbf X})} ,
\frac{32\kappa^2e^{4B\crs}}{d\delta'^2(1-\delta')^4\cph\lambda({\mathbf X})} \Bigg\}
\end{gather}
holds true, and $H=H(\delta',B,\crs)$ is large enough but does not depend on $m$ or $n$, the iterations satisfy
\begin{align}
\twonorm{\frs(\vct{\theta}_\tau)-\vct{y}}^2\le&\left(1-
\frac{\eta\alpha^2_{\delta'}}{2}\right)^\tau\twonorm{\frs(\vct{\theta}_0)-\vct{y}}^2, \label{iteration}\\
\frac{\alpha_{\delta'}}{4}\twonorm{\vct{\theta}_\tau-\vct{\theta}_0}+
\twonorm{\frs(\vct{\theta}_\tau)-\vct{y}}\le&
\twonorm{\frs(\vct{\theta}_0)-\vct{y}}.
\end{align}
with a probability at least
$1-\kappa_1 m e^{-\kappa_2 m \delta'}-H e^{-m^2/2} -
e^{-\delta^2\frac{n}{2\|{\mathbf X}\|^2}}-
\mathlarger{\sum}_{k=1}^{H-1}e^{-\delta^2\frac{n}{2\|{\mathbf X}^{(k)}\|^2}}$ where $\kappa_1$ and $\kappa_2$ are some constants.
\label{main_result}
\end{theorem}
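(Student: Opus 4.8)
The plan is to obtain Theorem~\ref{main_result} as a direct corollary of the meta convergence theorem, Theorem~\ref{GDthm}, applied with the triple $\alpha=\alpha_{\delta'}$, $\beta=\beta_{\delta'}$, $L=L_{\delta'}$ assembled in Sections~\ref{jacob_analysis}, \ref{misfit_section} and \ref{param_variation}. The whole argument then reduces to verifying the three hypotheses of Theorem~\ref{GDthm} — the two-sided spectral estimate \eqref{bndspect} for $\mathcal{J}(\boldsymbol\theta)$, the Lipschitz bound $\|\mathcal{J}(\widetilde{\boldsymbol\theta})-\mathcal{J}(\boldsymbol\theta)\|\le L_{\delta'}\|\widetilde{\boldsymbol\theta}-\boldsymbol\theta\|_F$, and the step-size restriction — on the Euclidean ball $\Dc$ of radius $R=4\|\frs(\boldsymbol\theta_0)-\mathbf{y}\|_2/\alpha_{\delta'}$ centered at the random initialization $\boldsymbol\theta_0$. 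Because $R$ is itself defined through $\|\frs(\boldsymbol\theta_0)-\mathbf{y}\|_2$ and $\alpha_{\delta'}$, I would first fix the deterministic over-estimate $R_{\delta,\delta'}$ of \eqref{r_delta} and establish every bound on the larger ball $\mathcal{B}(\boldsymbol\theta_0,R_{\delta,\delta'})$; since $\Dc\subseteq\mathcal{B}(\boldsymbol\theta_0,R_{\delta,\delta'})$ this suffices.

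Concretely, I would proceed as follows. (i) Apply the misfit estimate \eqref{misfit_final} to get $\|\frs(\boldsymbol\theta_0)-\mathbf{y}\|_2\le\kappa\|\mathbf{y}\|_2$ with $\kappa$ as in \eqref{kappa}; combining this with the closed form \eqref{target_alpha} for $\alpha_{\delta'}$ and the choice $\|\mathbf{a}\|_2=\|\mathbf{y}\|_2\sqrt{m}/\sqrt{n}$ yields $R\le R_{\delta,\delta'}$. (ii) Invoke the lemma of Section~\ref{param_variation} — the step that actually consumes the width hypothesis $m\ge K_{\delta,\delta'}\,n$ through \eqref{m_cond} and \eqref{totally_new_constraint2} — to conclude $\phi_{\min}(\mathcal{J}(\boldsymbol\theta))\ge\alpha_{\delta'}$ for every $\boldsymbol\theta\in\mathcal{B}(\boldsymbol\theta_0,R_{\delta,\delta'})$. (iii) Use the operator-norm control \eqref{Wh_bound_2} on the perturbed weight matrices over that ball, take $A$ as in \eqref{A_final}, and substitute into \eqref{beta_final} and \eqref{L_resultant} to obtain the uniform bounds $\|\mathcal{J}(\boldsymbol\theta)\|\le\beta_{\delta'}$ of \eqref{beta_delta} and the Lipschitz constant $L_{\delta'}$ of \eqref{L_delta}. (iv) Note that the prescribed step size \eqref{eta} is exactly the choice that meets the step-size requirement of Theorem~\ref{GDthm} once the data-dependent quantity $\|\frs(\boldsymbol\theta_0)-\mathbf{y}\|_2$ there is replaced by its deterministic surrogate $\kappa\|\mathbf{y}\|_2$. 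With all three hypotheses verified on $\Dc$, the conclusions \eqref{err}--\eqref{close} of Theorem~\ref{GDthm} translate into \eqref{iteration} and its companion displacement inequality. (v) Finally, union-bound the failure events — the matrix-Chernoff/concentration event behind $\phi_{\min}(\mathbf{G}^{(1)})$ (probability $\kappa_1 m e^{-\kappa_2 m\delta'}$), the $H$ Gaussian operator-norm events (probability $He^{-m^2/2}$), and the $H$ misfit events of Theorem~\ref{misfit_lemma} (probabilities $e^{-\delta^2 n/(2\|\mathbf{X}\|^2)}$ and $e^{-\delta^2 n/(2\|\mathbf{X}^{(k)}\|^2)}$) — to obtain the claimed success probability.

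The main obstacle I anticipate is the self-consistency of the parameter choices rather than any single inequality: the radius $R_{\delta,\delta'}$, the width constant $K_{\delta,\delta'}$, the depth $H=H(\delta',B,\crs)$, and the value of $A$ appearing inside both $\beta_{\delta'}$ and $L_{\delta'}$ are mutually coupled, and one must check that a single linear-in-$n$ requirement $m\ge K_{\delta,\delta'}\,n$ simultaneously forces (a) $R_{\delta,\delta'}/\sqrt{m}$ small enough for the exponential-product lower bound on the rows of $\mathbf{A}(\boldsymbol\theta,\mathbf{X})$ to survive, i.e. condition \eqref{cond_for_alpha}, and (b) $R_{\delta,\delta'}^2\le\delta'^2 md/2$ of \eqref{totally_new_constraint}, so that only a $(1-\delta'^2)$-fraction of hidden units are perturbed enough to matter in the minimum-eigenvalue argument. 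Both are genuine $m=\Omega(n)$ conditions, so carrying them through — while keeping the two separate roles of $\delta'$ (in the product lower bound and in the $\Sigma(\mathbf{X})$-perturbation step) consistent — is where the real content lies; the remaining work is bookkeeping of constants that, importantly, do not grow with $n$, $m$, or $H$.
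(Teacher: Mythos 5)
Your proposal is correct and follows essentially the same route as the paper: the theorem is obtained by feeding the ball-uniform bounds $\alpha_{\delta'}$, $\beta_{\delta'}$, $L_{\delta'}$ from Section~\ref{param_variation}, together with the initial-misfit bound $\|\frs(\boldsymbol\theta_0)-\mathbf{y}\|_2\le\kappa\|\mathbf{y}\|_2$, into Theorem~\ref{GDthm}, with the width condition entering only through the lemma of Section~\ref{param_variation} and the failure probabilities combined by a union bound. You also correctly identify the one point of genuine content — the self-consistency between $R_{\delta,\delta'}$, conditions \eqref{cond_for_alpha} and \eqref{totally_new_constraint}, and the single linear-in-$n$ width requirement — which is exactly where the paper's argument does its work.
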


It follows from \eqref{iteration} that
\begin{align}
\log \twonorm{\frs(\vct{\theta}_\tau)-\vct{y}} &\leq 
\frac{\tau}{2} 
\log\left(1- \frac{\eta\alpha^2_{\delta'}}{2}\right)+
\log \twonorm{\frs(\vct{\theta}_0)-\vct{y}} \nonumber\\
&= \frac{\tau}{2} 
\log\left(1- \frac{\eta\alpha^2_{\delta'}}{2}\right)+ O(\log \sqrt{n}) \label{iteration1}\\
&\leq -\tau \frac{\eta\alpha^2_{\delta'}}{4}+ O(\log \sqrt{n})
\label{iteration2}
\end{align}
where \eqref{iteration1} is a consequence of \eqref{misfit_final},
and \eqref{iteration2} is due to the basic inequality $\log x\leq x-1.$
Therefore if the number of iterations $\tau$ satisfy
\begin{align}
\tau \geq \frac{4}{\eta \alpha^2_{\delta'}} 
O\left(\log \frac{\sqrt{n}}{\epsilon}\right)
\label{iteration3}
\end{align}
then we would have $\twonorm{\frs(\vct{\theta}_\tau)-\vct{y}}\leq \epsilon.$ We have $\alpha_{\delta'}=O(1)$,
$\beta_{\delta'}=O(\sqrt{n})$, 
and $L_{\delta'}=O(\sqrt{n})$
as \eqref{target_alpha},\eqref{beta_delta} and \eqref{L_delta} implies. Hence we conclude from \eqref{eta} and \eqref{iteration3}
that
\begin{align}
\tau=\Omega\left(n^2\log\frac{\sqrt{n}}{\epsilon} \right)    
\end{align}
iterations are sufficient to get $\twonorm{\frs(\vct{\theta}_\tau)-\vct{y}}\leq \epsilon.$

This completes our convergence analysis for the ResNet model.
We see from Theorem~\ref{main_result} that the linear scaling of
the network width with respect to the data size is enough to ensure a
linear convergence to the globally optimal solution for ResNets
when the loss function is quadratic. Future research topics include but are not limited to
\begin{enumerate}[i.]
\item To extend the analysis carried out here to more general loss functions, and to more general network types, such as feedforward networks,
\item To extend the result proved here to non-smooth and common
activation functions, such as ReLU,
\item To extend the regular gradient descent analysis to the stochastic gradient case,
\item To analyze the test loss, and derive a decent generalization bound.
\end{enumerate}

\bibliography{Bibfiles}
\bibliographystyle{unsrt} 


\end{document}